  \pgfplotsset{compat=1.10}
\DeclareMathOperator*{\argmin}{arg\,min}
\DeclareMathOperator*{\rank}{rank}
\DeclareMathOperator{\prox}{prox}
\DeclareMathOperator{\dom}{dom}
\DeclareMathOperator{\pre}{pre}
\DeclareMathOperator{\rec}{rec}
\DeclareMathOperator{\I}{\mathcal{I}}
\newcommand{\T}{\mathcal{T}}
\newcommand{\N}{\mathbb{N}}
\newcommand{\R}{\mathbb{R}}
\definecolor{TUGreen}{RGB}{132,184,24}
\definecolor{TUGray}{RGB}{104,104,104}
\definecolor{tblEmph}{RGB}{241,226,204}
\definecolor{mygreen}{RGB}{181,221,109}
\definecolor{myyellow}{RGB}{255,242,174}
\definecolor{myseablue}{RGB}{147,213,198}
\definecolor{mylila}{RGB}{189,132,190}
\definecolor{myorange}{RGB}{246,179,98}
\definecolor{myred}{RGB}{247,129,116} 
\definecolor{cPanpal}{RGB}{56,108,176}
\definecolor{cPrimp}{RGB}{240,2,127}
\definecolor{cPanda}{RGB}{127,201,127}
\definecolor{cMdl4bmf}{RGB}{190,174,212}
\definecolor{cNassau}{RGB}{253,192,134}
\definecolor{myblue}{RGB}{130,176,208}
\definecolor{myminthe}{RGB}{208,235,189}
\definecolor{mypink}{RGB}{251,205,227}
  \let\everycr\CT@everycr
\apptocmd{\endalign}{\CT@end}{}{}
\newcommand\footnoteref[1]{\protected@xdef\@thefnmark{\ref{#1}}\@footnotemark}
\def\fixedlabel#1#2{%
  \@bsphack%
  \protected@write\@auxout{}%
         {\string\newlabel{#1}{{#2}{\thepage}}}%
  \@esphack}
\newcommand{\problemtitle}[1]{\gdef\@problemtitle{#1}}
\newcommand{\probleminput}[1]{\gdef\@probleminput{#1}}
\newcommand{\problemquestion}[1]{\gdef\@problemquestion{#1}}
  \par\addvspace{.5\baselineskip}%
  \par\addvspace{.5\baselineskip}
\newtcolorbox{mybox}{
	boxsep=1pt,
  	breakable,
  	mybx
}
\newtcolorbox{highlight}{
	boxsep=0pt,
  	breakable,
  	myhlight
}
\begin{document}

\title{The PRIMPing Routine --- Tiling through Proximal Alternating Linearized Minimization}

\titlerunning{The PRIMPing Routine}        

\author{Sibylle Hess         \and
        Katharina Morik 	\and\\
        Nico Piatkowski
}


\institute{TU Dortmund, Computer Science, LS 8, 44221 Dortmund, Germany \\
}

\date{Received: date / Accepted: date}

\maketitle

\begin{abstract}
Mining and exploring databases should provide users with knowledge and new insights. 
Tiles of data strive to unveil true underlying structure and distinguish valuable information from various kinds of noise.
We propose a novel Boolean matrix factorization algorithm to solve the tiling problem, based on recent results from optimization theory.
In contrast to existing work, the new algorithm minimizes the description length of the resulting factorization. This approach is well known for model selection and data compression, but not for finding suitable factorizations via numerical optimization. 
We demonstrate the superior robustness of the new approach in the presence of several kinds of noise and types of underlying structure. 
Moreover, our general framework can work with any cost measure having a suitable real-valued relaxation. Thereby, no convexity assumptions have to be met. 
The experimental results on synthetic data and image data show that the new method identifies interpretable patterns 
which explain the data almost always better than the competing algorithms. 

\keywords{Tiling \and Boolean Matrix Factorization \and Minimum Description Length principle \and Proximal Alternating Linearized Minimization \and Nonconvex-Nonsmooth Minimization \and Alternating Minimization}
\end{abstract}
\section{Introduction}
\label{intro}
In a large range of data mining tasks such as Market Basket Analysis, Text Mining, Collaborative Filtering or DNA Expression Analysis, we are interested in the exploration of data which is represented by a binary matrix.  Data exploration is unsupervised by nature; the objective is to gain insight by a summarization of its relevant parts. Here, we seek for sets of columns and rows whose intersecting positions frequently feature a one. This identifies, e.g., groups of users together with their shared preferences, genes that are often co-expressed among several tissue samples, or words that occur together in documents describing the same topic. The identification of such sets of columns and rows is studied from the perspective of various data mining subfields as \emph{biclustering}, \emph{tiling} or \emph{matrix factorization} \citep{tatti2012comparing,zimek2013blind}.

Consider the example binary database presented on the left in  Fig.~\ref{fig:patternMining}. The distribution of ones appears disarrayed, but a suitable permutation of columns and rows reveals a formation of blocks, depicted by the matrix on the right. Interpreting all binary entries which do not fit to this formation as noise, we aim to separate the haphazard component from  the formative one. 
 
\begin{figure}
\centering
\includegraphics[angle=90,width = 0.475\columnwidth,trim=0.5cm 0.5cm 0.5cm 0.5cm, clip=true]{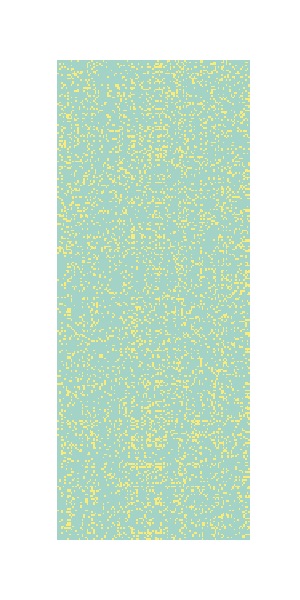}
\includegraphics[angle=90,width = 0.475\columnwidth,trim=0.5cm 0.5cm 0.5cm 0.5cm, clip=true]{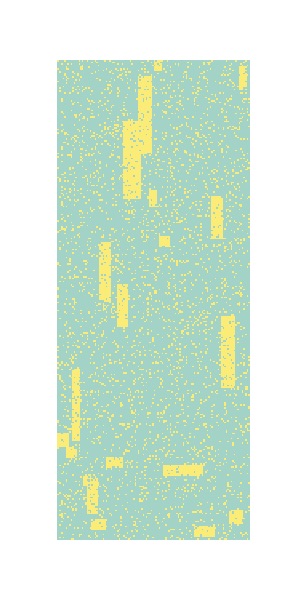}
\caption{Example binary dataset (left) with ones in yellow and zeros in blue. A rearrangement of columns and rows unveils structure (right).\label{fig:patternMining}}
\end{figure}

This objective is difficult to delineate: where to draw the line between structure and noise? Are there natural limitations on the amount of blocks to derive? To what extend may they overlap? \cite{Miettinen2014mdl4bmf} successfully apply the \emph{Minimum Description Length} (MDL) principle to reduce these considerations into one objective: exploit just as many regularities as serves the compression of the data. Identifying regularities with column-row interrelations, the description length counterbalances the complexity of the model (derived interrelations) and the fit to the data, measured by the size of the encoded data using the model. Decisive for the feasibility of extracted components is the definition of the encoding. 

\cite{Miettinen2014mdl4bmf} evaluate several encodings with respect to their ability to filter a planted structure from noise. The method they use applies a greedy Boolean matrix factorization to extract candidate interrelations which are selected according to a specified description length. Another framework proposed by \cite{lucchese2014unifying} greedily selects the interrelations directly in accordance with the description length. Most recently, \cite{nassau15} propose another greedy algorithm with focus on a setting where ones more probably indicate interrelations than noise. All these methods are capable to identify the underlying structure in respectively examined settings. All in all, the experiments indicate however that the quality considerably varies depending on the distribution of noise and characteristics of the dataset \citep{Miettinen2014mdl4bmf,nassau15}.  

For real-world datasets, it is difficult (if not impossible) to estimate these aspects, in order to choose the appropriate algorithm or to assess its quality on the given dataset. Believing that the unsteady performance is due to a lack of theoretical foundation, we introduce a framework called \emph{PAL-Tiling} to numerically optimize a cost measure which has a suitable real-valued approximation. In this respect, we derive approximations of two MDL cost measures, consequently proposing two algorithms: one applying $L1$-regularization on the matrix factorization (\textsc{Panpal}) and one employing an encoding by code tables as proposed by \cite{siebes2006item} (\textsc{Primp}). 
We assess the algorithms' ability to filter the \textit{true} underlying structure from the noise. Therefore, we compare various performance measures in a controlled setting of synthetically generated data as well as for real-world data. We show that \textsc{Primp} is capable of recovering the latent structure in spite of varying database characteristics and noise distributions. In addition, we visualize the derived categorization into tiles by means of images, showing that our conducted minimization procedure of \textit{PAL-Tiling} yields interpretable groupings. 
\subsection{Roadmap}
In Section \ref{sec:background} we introduce our notation and review the work related to the three research branches of Tiling, MDL, and Nonnegative Matrix Factorization, which compound our method. After surveying these building blocks, we introduce our optimization framework \textsc{PAL-Tiling} in Sec. \ref{sec:PALTiling}. We derive the proximal mapping with respect to the proposed penalization of non-binary values, enabling the minimization of the approximate Boolean matrix factorization error under convergence guarantees. Therewith we derive the $L1$-regularized minimization of the reconstruction error by the algorithm \textsc{Panpal} in Sec.~\ref{sec:panpal}. We formulate the encoding via code tables in the form of a Boolean matrix factorization, which defines together with a suitable relaxation of this measure, derived in Sec. \ref{sec:primp}, the algorithm \textsc{Primp}. In Sec. \ref{sec:experiments}, we compare our approach to related methods in various synthetically generated settings and real-world data. Furthermore, we provide insight into the algorithms' understanding of noise based on images. Finally, we conclude in Sec. \ref{sec:conclusion}.  
\section{Problem Definition and Building Blocks} \label{sec:background}
We identify items $\I=\{1,\ldots,n\}$ and transactions $\T=\{1,\ldots,m\}$ by a set of indices of a binary matrix $D\in \{0,1\}^{m\times n}$. This matrix represents the data, having  $D_{ji}=1$ iff transaction $j$ contains item $i$. A set of items is called a \emph{pattern}. If the pattern is a subset of a transaction, we say the transaction \emph{supports} the pattern.  

Throughout the paper, we often employ the function $\theta_t$ which rounds real to binary values, i.e., $\theta_t(x)=1$ for $x\geq t$  and $\theta_t(x)=0$ otherwise. We abbreviate $\theta_{0.5}$ to $\theta$ and denote with $\theta(X)=(\theta(X_{ji}))_{ji}$ the entry-wise application of $\theta$ to a matrix $X$.

We denote matrix norms as $\|\cdot\|$ for the Frobenius norm and $|\cdot|$ for the entry-wise 1-norm. These norms are equivalent for binary matrices $X$ in the sense that $|X|=\|X\|^2$. We use the short notation $|X|_- = |\theta(-X)|$ and $|X|_+=|\theta(X)|$ to separate the norm of negative and nonnegative entries of $X$.  
We often abbreviate the notation of a matrix $(x_{ij})_{1\leq i\leq n,1\leq j\leq m}$ to $(x_{ij})_{ij}$ if the range of indices is clear from the context. Correspondingly, we notate column vectors $(x_i)_i$. The operator $\circ$ denotes the Hadamard product which multiplies two matrices of same dimensions element-wise. Lastly, we remark that $\log$ denotes the natural logarithm.  
\subsection{Problem Definition}\label{sec:task}
\begin{figure}
\centering
\[
\begin{tikzpicture}[baseline=-0.5ex]
   \matrix [matrix of math nodes,left delimiter=(,right delimiter=)] (n) {
1&1&1&1&1 \\
1&0&1&0&1 \\
0&1&1&1&1 \\
1&0&1&0&1 \\
};
\draw[color=myblue,fill=myblue,opacity=0.2] (n-1-2.north west) -- (n-1-5.north east) -- (n-1-5.south east) -- (n-1-2.south west) -- (n-1-2.north west);
\draw[color=myblue,fill=myblue,opacity=0.2] (n-3-2.north west) -- (n-3-5.north east) -- (n-3-5.south east) -- (n-3-2.south west) -- (n-3-2.north west);
\draw[color=mypink,fill=mypink,opacity=0.2] (n-1-3.north west) -- (n-1-3.north east) -- (n-2-3.south east) -- (n-2-3.south west) --(n-1-3.north west);
\draw[color=mypink,fill=mypink,opacity=0.2] (n-1-5.north west) -- (n-1-5.north east) -- (n-2-5.south east) -- (n-2-5.south west) --(n-1-5.north west);
\draw[color=mypink,fill=mypink,opacity=0.2] (n-1-1.north west) -- (n-1-1.north east) -- (n-2-1.south east) -- (n-2-1.south west) --(n-1-1.north west);
\draw[color=mypink,fill=mypink,opacity=0.2] (n-4-3.north west) -- (n-4-3.north east) -- (n-4-3.south east) -- (n-4-3.south west) --(n-4-3.north west);
\draw[color=mypink,fill=mypink,opacity=0.2] (n-4-5.north west) -- (n-4-5.north east) -- (n-4-5.south east) -- (n-4-5.south west) --(n-4-5.north west);
\draw[color=mypink,fill=mypink,opacity=0.2] (n-4-1.north west) -- (n-4-1.north east) -- (n-4-1.south east) -- (n-4-1.south west) --(n-4-1.north west);
\end{tikzpicture}
=
\mathlarger{\theta}\left(
\begin{tikzpicture}[baseline=-0.5ex]
    \matrix [matrix of math nodes,left delimiter=(,right delimiter=)] (n) {
1&1 \\
1&0 \\
0&1 \\
1&0 \\
};
\draw[color=myblue,fill=myblue,opacity=0.2] (n-1-2.north west) -- (n-1-2.north east) -- (n-1-2.south east) -- (n-1-2.south west) -- (n-1-2.north west);
\draw[color=myblue,fill=myblue,opacity=0.2] (n-3-2.north west) -- (n-3-2.north east) -- (n-3-2.south east) -- (n-3-2.south west) -- (n-3-2.north west);
\draw[color=mypink,fill=mypink,opacity=0.2] (n-1-1.north west) -- (n-1-1.north east) -- (n-2-1.south east) -- (n-2-1.south west) --(n-1-1.north west);
\draw[color=mypink,fill=mypink,opacity=0.2] (n-4-1.north west) -- (n-4-1.north east) -- (n-4-1.south east) -- (n-4-1.south west) --(n-4-1.north west);
\end{tikzpicture}
\cdot
\begin{tikzpicture}[baseline=-0.5ex]
    \matrix [matrix of math nodes,left delimiter=(,right delimiter=)] (n) {
1&0&1&0&1 \\
0&1&1&1&1 \\
};
\draw[color=myblue,fill=myblue,opacity=0.2] (n-2-2.north west) -- (n-2-5.north east) -- (n-2-5.south east) -- (n-2-2.south west) -- (n-2-2.north west);
\draw[color=mypink,fill=mypink,opacity=0.2] (n-1-1.north west) -- (n-1-1.north east) -- (n-1-1.south east) -- (n-1-1.south west) --(n-1-1.north west);
\draw[color=mypink,fill=mypink,opacity=0.2] (n-1-3.north west) -- (n-1-3.north east) -- (n-1-3.south east) -- (n-1-3.south west) --(n-1-3.north west);
\draw[color=mypink,fill=mypink,opacity=0.2] (n-1-5.north west) -- (n-1-5.north east) -- (n-1-5.south east) -- (n-1-5.south west) --(n-1-5.north west);
\end{tikzpicture}
\right)
\]
\caption{An exact Boolean factorization using two tiles. The tiles are highlighted.}
\label{fig:tilingFact}
\end{figure}
We seek sets of column-row selections which can be visualized as a formation of blocks as exemplified in our introduction. The expectation that such a formation exists is based on the assumption that the data $D$ originates from a Boolean matrix product $\theta(YX^T)$. Here, it is important to understand that the thresholding function $\theta$ (defined above) suffices to map binary operations onto the Boolean algebra where the addition corresponds to the logical conjunction, i.e., $\theta(0+1)=\theta(1+0)=1$ and $\theta(1+1)=1$. This way, the product is also well-defined for nonnegative real-valued matrices. For an exploration of non-canonical Boolean matrix products, and what derives from them, see, e.g., \citep{miettinen2015generalized}. 

Let $X$ be an $n\times r$ binary matrix and $Y$ an $m\times r$ binary matrix. The product $\theta(YX^T)$ specifies $r$ column-row selections called tiles, one by each pair of column vectors $(X_{\cdot s},Y_{\cdot s})$. Thereby, we implicitly assume that each tile provides information about co-occurrences of items and transactions. In practice, however, one column vector may be equal to the zero vector or indicate only one item, respectively a single usage of a pattern. We do not take these trivial column-row selections into account and introduce the function $r(\cdot,\cdot)$ to count the number of valuable tiles $r(X,Y)=|\{s:|X_{\cdot s}|>1 \wedge |Y_{\cdot s}|>1\}|\leq r$. In theory, we often assume that $r=r(X,Y)$ and in this case, we call $r$ the rank of the tiling or factorization. An example of a rank-2 factorization is depicted in Fig. \ref{fig:tilingFact}.  The vector $X_{\cdot s}$ indicates the pattern which contains all items $i\in \mathcal{I}$ with $X_{is}=1$. Likewise, the vector $Y_{\cdot s}$ marks the transactions which use pattern $X_{\cdot s}$ to form the tile. Accordingly, we refer to the matrix $X$ as the \emph{pattern matrix} and to  $Y$ as the \emph{usage matrix}. The name \emph{tile} reflects its visualization as a single block matrix $Y_{\cdot s}X_{\cdot s}^T$ for suitably rearranged columns and rows.  

We state the informal problem to recover the latent factorization, which we intend to approach in this paper, as follows. 
\begin{learningProblem}
  \problemtitle{\textbf{Informal Problem Definition}}
  \probleminput{ a data matrix $D\in\{0,1\}^{m\times n}$ originating from the following process
\begin{enumerate}
		\item Let $X\in\{0,1\}^{n\times r}$ be a rank $r$ matrix, denoting $r$ patterns.
        \item For each transaction $D_{j\cdot}$ choose a set of patterns $S_j\subseteq \{1,\ldots r\}$.
        \item Construct $Y\in\{0,1\}^{m\times r}$ such that $Y_{js}=1\Leftrightarrow s\in S_j$.
        \item Let $N\in\{-1,0,1\}^{m\times n}$ be the noise  matrix satisfying 
        \[N_{ji}+\theta(YX^T)_{ji}\geq 0.\]
        \item Set
        \begin{align} \label{eq:decomposeD}
			D=\theta(YX^T)+N.
  		\end{align}
\end{enumerate}
}
\problemquestion{the factor matrices $X$ and $Y$.}
\end{learningProblem}
Solving this problem is infeasible in practice as the data generation process is not invertible. Therefore, an approachable surrogate problem is formulated: the minimization of a function, which shall, together with suitable constraints on the result space, indicate the quality of the derived model. In the following, we inspect how three related research branches formulate and tackle such minimization problems, namely \textit{Tiling}, the \textit{Minimum Description Length} principle and \textit{Nonnegative Matrix Factorization}.    
\subsection{Tiling}\label{sec:Tiling} 
Tiling addresses the task to find binary matrices which minimize a given cost measure in a restricted search space.
Many cost measures have been formulated with respect to tiling. Each one defines different criteria of what makes a set of tiles suitable. We list the most important considerations in Table~\ref{tbl:tiling} according to the following task formalization: 
\begin{learningProblem}
  \problemtitle{\textbf{Tiling}}
  \probleminput{a binary database $D\in\{0,1\}^{m\times n}$, a set of real-valued functions $c\in \mathcal{C}$, a set of natural numbers $\mathcal{R}$ and a cost measure $f$.}
  \problemquestion{a tiling 
  {
  \begin{align*}
	\min_{(X,Y)}&f(X,Y,D)\\
    \text{subject to   }
    	& c(X,Y,D) \leq 0,\ c\in \mathcal{C}\\
        & X\in \{0,1\}^{n\times r},\ Y\in\{0,1\}^{m\times r},\ r\in\mathcal{R}
  \end{align*}
  }
  }
\end{learningProblem}   
\begin{table}
	\centering
	\begin{tabular}{lcp{2.8cm}l}
    	\toprule
        Algorithm & $f(X,Y,D)$ &Constraints $\mathcal{C}$\newline $(N=D-\theta(YX^T))$ & $\mathcal{R}$ \\\midrule \midrule
        \textsc{LTM}
        & $r$&$|N|=0$&$\N$ \\ 
        $r$-\textsc{LTM}
        & $-|\theta(YX^T)|$ & $|N|_-=0$ & $\{r\}$\\ 
         \textsc{Hyper}+
        & $|X|+|Y|$&$|N|_{+}=0, |N|_-\leq \beta$&$\N$ \\ 
        \textsc{MaxEnt}
        & \(\displaystyle-\frac{IC_p(X,Y)}{L_p(X,Y)}\)&$\emptyset$&$\N$\\ 
        \textsc{Asso}
        & $f_{\mathsf{RSS}}(X,Y,D)$ &$\emptyset$ &$\{r\}$\\
        \textsc{Krimp}, \textsc{Slim}, \textsc{SHrimp} 
        &$f_{\mathsf{CT}}(X,Y,D)$&$|N|_-= 0$&$\N$ \\
        \textsc{Groei} 
        & $f_{\mathsf{CT}}(X,Y,D)$&$|N|_-=0$&$\{r\}$ \\
        \textsc{Panda}
        & $f_{\mathsf{L1}}(X,Y,D)$ &$\emptyset$& $\{r\}$\\
        \textsc{Mdl4bmf}, \textsc{Nassau}
        &$ f_{\mathsf{TXD}}(X,Y,D)$&$\emptyset$&$\N_{\leq\min\{n,m\}}$\\ 
        \textsc{Mdl4bmf}, \textsc{Panda+}
        &$ f_{\mathsf{TX}}(X,Y,D)$&$\emptyset$&$\N_{\leq\min\{n,m\}}$\\ 
        \bottomrule
    \end{tabular}
    \caption{Overview of tiling cost measures and implementing algorithms. $\N_{\leq a}$ denotes the set of natural numbers less than or equal to $a$.}
    \label{tbl:tiling}
\end{table}
We see in Table~\ref{tbl:tiling} that many algorithms prohibit negative noise $(|N|_-=0)$. We call such tilings \textit{restrained}, as the usage of patterns is restrained to the supporting transactions.

\cite{tilingGeertz04} consider the cost measure as a measure of interestingness of patterns. In this setting, a tile is determined by its pattern because its usage is identified with all supporting transactions. This implicitly excludes negative noise but enables the application of pattern-mining techniques in the algorithms \textsc{LTM} and r-\textsc{LTM}. 

\cite{kontonasios2010information} and \cite{xiang2011summarizing} argue that the integration of negative noise enables more succinct descriptions and makes the tiling robust to noise. If negative noise is not allowed, every flip of a single bit in the interior of a tile breaks it into two.
\cite{xiang2011summarizing} propose with the greedy algorithm \textsc{Hyper+} to mine restrained tiles first and to combine them to larger (noisy) tiles in a second step, as long as a specified amount of negative noise is not exceeded.  
\cite{kontonasios2010information} propose the information theoretical regulation of noise. The algorithm \textsc{MaxEnt} greedily selects the tile with the highest information ratio among a set of input candidate tiles. The information ratio puts the information content $IC_p(X,Y)$ in relation to the description length $L_p(X,Y)$ of a tiling, given a maximum entropy distribution $p$ over data matrices. 
Both algorithms include negative noise only in a post-processing step and provide no mechanisms to directly derive suitable unrestrained tiles.

\cite{discreteBasisProb}  strive for direct minimization of the approximation error under the umbrella term Boolean Matrix Factorization (BMF). 
They show that the tiling of rank $r$ which yields the minimum error $f_{\mathsf{RSS}}(X,Y,D)=|D-\theta(YX^T)|$ cannot be approximated within any factor in polynomial time (unless $\mathbf{NP}= \mathbf{P}$). Accordingly, they propose a heuristic to solve this problem. \textsc{Asso} incrementally creates $r$ tiles by selecting a pattern first and minimizing the error subject to the usage afterwards. Decisive for the quality of the returned factorization is the choice of the rank $r$. To determine this parameter automatically, several algorithms implement one key paradigm called Minimum Description Length (MDL).
\subsection{The MDL Principle}\label{sec:MDL}
MDL is introduced by \cite{RissanenMdl} as an applicable version of Kolmogorov complexity~\citep{KolmogorovComplexity,mdlGrunwald}.
The learning task to find the best model according to the MDL principle is given as follows: 
\begin{learningProblem}
\problemtitle{\textbf{Description Length Minimization}}
\probleminput{ data $D$ and a set of models  $\mathcal{M}$.}
\problemquestion{ a model $M\in\mathcal{M}$ for $D$ which minimizes the description length
\[ 
	L(D, M) = L^D(D,M) + L^M(M),
\]
where $L^D(D,M)$ denotes the compression size of the database in bits  (using model $M$ for the encoding) and $L^M(M)$ is the description size in bits of the model $M$ itself.}
\end{learningProblem}
Specifications of this task differ in the definition of the encoding which defines the set of models $\mathcal{M}$. Typical models for tilings are given by the factorizations which satisfy the constraints
\[
	\mathcal{M} = \{(X,Y)\in \{0,1\}^{n\times r}\times\{0,1\}^{m\times r}\mid c(X,Y,N) \leq 0\ \forall c\in \mathcal{C}, r\in\mathcal{R}\}.
\]
An encoding which is successfully applied in the area of pattern mining and which we discuss later in the context of tiling, uses code tables  as proposed by \cite{siebes2006item}. 
The code table assigns optimal prefix-free codes to a set of patterns, such that the code lengths can be calculated without realizations of actual codes.
We imagine the code table two-columned: itemsets are listed on the left and assigned codes on the right. Such a dictionary from itemsets to code words can be applied to databases similarly as code words to natural language texts. However, the code usage is not as naturally defined as for words in a text. Patterns are not nicely separated by blanks and the possibilities to disassemble a transaction into patterns are numerous. Therefore, we require for every transaction the indication of its cover by patterns of the code table. This is modeled by a function $cover$, which partitions $D_{j\cdot}$ into patterns of the code table.

Let $CT=\{(\mathit{X_\sigma},C_\sigma)|1\leq \sigma\leq \tau\}$ be a code table of $\tau$ patterns $X_\sigma$ together with their assigned codes $C_\sigma$.
For any distribution $P$ over a finite set $\Omega$, an optimal set of prefix-free codes exists \cite[Theorem 5.4.1]{CoverThomas} such that the number of required bits for the code of $x\in\Omega$ is approximately
\begin{equation*}
	L(code(x)) \approx -\log(P(x)).
\end{equation*}
Desiring that frequently used codes are shorter in size, \cite{siebes2006item} introduce the function $usage$ that maps a pattern to the number of transactions which use it for their cover, i.e.,
\[usage(X_\sigma)=|\{X_\sigma\in cover(CT,D_{j\cdot})\mid j\in\mathcal{T}\}|.\] 
The probability mass function over all itemsets $X_\sigma$ in the code table  is defined as
\begin{align}\label{eq:krimpCodeProb}
P(X_\sigma) = \frac{usage(X_\sigma)}{\sum_{1\leq \rho \leq \tau}usage(X_\rho)}.
\end{align}
This implies that $L(C_\sigma)=-\log P(X_\sigma)$. The data matrix is encoded by a transaction-wise concatenation of codes, denoted by the cover, i.e., transaction $D_{j\cdot}$ is encoded by a concatenation of codes $C_\sigma$ with $X_\sigma\in cover(CT,D_{j\cdot})$. Therefore, code $C_\sigma$ occurs $usage(X_\sigma)$ times in the encoded dataset. The size of the data description is thus computed by
\begin{align*}
	L^D_{CT}(D,CT)
    &=-\sum_{1\leq \sigma\leq \tau} usage(X_\sigma) \cdot \log(P(X_\sigma)).
\end{align*}
The description of the model, the code table, requires the declaration of codes $C_\sigma$ and corresponding patterns $X_\sigma$. Code $C_\sigma$ has a size of $-\log\left(P(X_\sigma)\right)$. A pattern is described by concatenated standard codes of contained items. Standard codes arise from the code table consisting of singleton patterns only, where the usage of singleton $\{i\}$ for $i\in\I$ is equal to the frequency $|D_{\cdot i}|$. In conclusion, the description size of the model is computed as  
\begin{align*}
	L_{\mathsf{CT}}^M(CT)
    &= -\sum_{\substack{1\leq \sigma\leq \tau \\ usage(X_\sigma)>0}}\left(\log\left(P(X_\sigma)\right) +\sum_{i\in X_\sigma}\log\left(\frac{|D_{\cdot i}|}{|D|}\right)\right).
\end{align*}
Note that the function $L_{\mathsf{CT}}$ originally uses the logarithm with base two. We implicitly reformulate this description length by substituting with the natural logarithm. This is equivalent to multiplying the function by a constant which is negligible during minimization. In return, using the natural logarithm will shorten the derivations in Sec.~\ref{sec:primp}.

\cite{siebes2006item} use a heuristic cover function for the algorithm \textsc{Krimp} which employs a specified, static order on patterns. The cover function greedily selects the next pattern in the order which covers items that are not covered yet. This way, covering patterns must neither overlap nor cover more items than stated by the transaction.
\textsc{Krimp} examines an input set of frequent patterns in another static order, adding a candidate pattern to the code table whenever that improves the compression size. Additionally, pruning methods are proposed to correct the selection of patterns in the code table.

\textsc{Slim}~\citep{slim} differs in its candidate generation, which is   dynamically implemented according to an estimated compression gain and dependent on the current code table. This strategy typically improves the compression size, but mainly reduces  the  amount of returned patterns.  
Still, the number of considered candidates is extremely large in comparison to those who are accepted. Time consumption is dominated by computing the usage for each evaluated candidate.
\textsc{SHrimp} \citep{hess2014shrimp} exploits the indexing nature of trees in order to efficiently identify those parts of the database which are affected by an extension of the code table.
\cite{siebes2011structure} restrict with the algorithm \textsc{Groei} the code table to a constant number of patterns. They resort to a heuristic beam search algorithm,
but only for tiny datasets, the beam width parameter can be set to a level allowing a reasonably wide enough exploration of the search space, or else the run time explodes.

All these algorithms follow the heuristic cover definition of \textsc{Krimp} which prohibits negative noise and tile overlap (we state the function $f_{\mathsf{CT}}$ and discuss the specific relationship between the proposed encoding by code tables and tiling in Sec.~\ref{sec:primp}). Although the employment of code tables is originally motivated as a methodology to obtain concise and compressing descriptions of the data, the encoding is quite wordy in comparison to the output of unrestrained tiling algorithms, which we discuss in the following section. Nonetheless, attempts to revoke the restraints of the tiling are not known to us. 
\subsection{Merging MDL and Tiling}\label{sec:MDL_Tiling}
MDL's incorporated trade-off between model complexity and data fit is apt for the determination of the factorization rank. Algorithms which determine the rank according to the MDL principle implement a similar scheme, so far. 
The costs are identified with the description length  and in every iteration, the rank is increased as long as this results in decreasing costs. For every considered rank, a factorization (tiling) method is invoked, which usually extends the result from the former iteration. The performance of this method depends on the choice of factorization and encoding which determines the description length.

\cite{lucchese2010noise} propose an encoding as it is known for sparse data representations, describing a matrix only by the positions of ones. Consequently, the model is described with $L^M((X,Y))=|X|+|Y|$ bits and the data with $L^D(D,(X,Y))=|D-\theta(YX^T)|$ bits, up to a multiplicative constant. The resulting cost function is denoted as $f_{\mathsf{L1}}(X,Y,D)=|D-\theta(YX^T)|+|X|+|Y|$. The algorithm \textsc{Panda} uses a factorization method which adds a tile to the current tiling in a two stage heuristic, comparable to \textsc{Hyper+}. 
 
\cite{Miettinen2014mdl4bmf} argue that the encoding used in \textsc{Panda} is too coarse. They investigate multiple encodings, applying \textsc{Asso} to incrementally increase the factorization rank. Their best-performing encoding is called \emph{Typed XOR DtM} encoding. This is based upon the description of $n$-dimensional binary vectors by number and distribution of ones. 
We refer to  the Typed XOR DtM description length as $f_{\mathsf{TXD}}$ and to the corresponding algorithm as \textsc{Mdl4bmf}. The experimental evaluation suggests that \textsc{Mdl4bmf}'s rank estimation is accurate in a setting with moderate noise, i.e., less than $15\%$, and moderate number of planted tiles, i.e., less than 15. It seems to have a tendency to underfit, as opposed to \textsc{Panda}, which returns sometimes ten times more tiles than planted.

On the other hand, the framework of \textsc{Panda} can be applied with an arbitrary cost measure. \cite{lucchese2014unifying} enhance the algorithm \textsc{Panda} to a faster version \textsc{Panda+} and evaluate the ability to detect a planted tiling in relation to different cost measures and algorithms. In their evaluation of synthetically generated datasets with less than $10\%$ equally distributed noise, \textsc{Panda+} using Typed XOR costs $f_{\mathsf{TX}}$ is outperforming any other choice. The performance is explained with the objective of \textsc{Panda+}'s factorization method, which aims at minimizing the costs, in contrast to \textsc{Asso}, minimizing only the noise.

Another algorithm which tries to incorporate the direct optimization of the MDL-cost measure is \textsc{Nassau}~\citep{nassau15}. Remarking that the formerly proposed algorithms do not reconsider tiles mined at previous iterations, \textsc{Nassau} refines the whole tiling every few steps in relation to the cost measure. Still, the incorporated factorization method minimizes solely the factorization error. The experiments focus on a setting where negative noise is prevalent. In this case, differences to \textsc{Mdl4bmf} are often hard to capture while \textsc{Nassau} typically outperforms \textsc{Panda+}. 
\subsection{Nonnegative Matrix Factorization}
The Boolean factorization of Eq.~(\ref{eq:decomposeD}) has a popular relative called Nonnegative Matrix Factorization (NMF). Given a nonnegative, real valued matrix $D\in\mathbb{R}_+^{m\times n}$ and a rank $r\in\N$, the goal is to recover nonnegative factors $X\in \R_+^{n\times r}$ and $Y\in \R_+^{m\times r}$ such that $YX^T\approx D$. To find the ``correct'' factorization, again, several objective functions and constraints are proposed. Most commonly, the residual sum of squares (RSS) is minimized
\begin{equation}
	\min_{X,Y} F(X,Y) = \frac{1}{2}\left\|D-YX^T\right\|^2.\label{eq:NMF}
\end{equation}
The function $F$ is nonconvex, but convex in either $X$ or $Y$, if the other argument is fixed. That makes it suitable for the \emph{Gauss-Seidel} scheme, also known as \emph{block-coordinate descent} or \emph{alternating least squares}, an alternating minimization along one of the matrices while the other one is fixed. That is, a sequence $(X_k,Y_k)$ is created by
\begin{align}\label{eq:als}
\begin{split}
X_{k+1} &\in \argmin_{X} F(X,Y_k)\\
Y_{k+1} &\in \argmin_{Y} F(X_{k+1},Y).
\end{split}
\end{align}
However, finding a minimum in every iteration is computationally intensive. Thus,
existing algorithms for NMF approximate the scheme of Eq.~(\ref{eq:als}) in several ways~\citep{wang2013nmfReview}. Often, the minimization step is replaced by a single gradient descent update.

NMF is originally introduced by \cite{PaateroPMF} under the name Positive Matrix Factorization. It received much attention since the publication of the easily implementable multiplicative update algorithm by \cite{lee01}. Their intuitive explanation of coherence between  the nonnegativity constraints and the resulting parts-based explanation of the data~\citep{lee1999parts}, emphasizes the interpretability  of the results.

Although initially the difference between NMF and clustering was emphasized \citep{lee1999parts}, further research affirms inherent clustering properties~\citep{li06}. In this context, columns of $X$ equate cluster centroids and corresponding columns of $Y$ indicate cluster membership tendencies. Restricting $Y$ to a binary matrix makes the memberships definite and the orthogonality constraint $Y^TY=I$ enforces unique cluster assignments. This factorization task is actually equivalent to $k$-means \citep{ding05,ding06,bauckhage2015k}. 
If the data matrix is binary, a binary factorization is also desirable, at least to get interpretable results for the cluster centroids~\citep{li05}. In this way, the factorization can be read as a clustering of items, or by using the transposed product, as a clustering of transactions. This is also known under the terms \textit{biclustering, co-clustering} or \textit{subspace clustering}. 

To the best of our knowledge, \cite{zhangApplication} are the only ones approaching the task of biclustering in conjunction with alternating minimization, the standard procedure to solve NMF. 
They propose two methods: the first one uses gradient descent updates with the longest step size preserving nonnegativity of the factor matrices and integrates the penalization of non-binary values into the minimization of the factorization error. As penalizing function, they choose the Mexican hat function $\omega(x) =\frac{1}{2}(x^2-x)^2$. The second method is designed to find the threshold at which nonnegative factor matrices might be rounded best to binary matrices.

Although these methods have several drawbacks (the former lacks a convergence guarantee and the latter applies a costly backtracking linesearch), the results are very promising in comparison to common greedy biclustering algorithms. However, this branch of research is considered to be substantially different from its formulation in Boolean algebra \citep{Miettinen2014mdl4bmf,lucchese2014unifying}. Indeed, the numerical optimization of the binary factorization is not easily adopted for multiplications in Boolean algebra $\theta(YX^T)$; $\theta$ has a point of discontinuity at $0.5$. Equally, all proposed cost measures in Table~\ref{tbl:tiling} are not continuous for real valued matrices with entries in $[0,1]$.
\section{Merging Tiling, MDL, and NMF}\label{sec:PALTiling}
We wish to find a way out of the greedy minimization of tiling cost measures and ask to which extent the theory behind popular NMF optimization methods may be applied to Boolean matrix factorizations. In conclusion, we propose an adaption of the Gauss-Seidel method to minimize a suitable relaxation of tiling cost measures. Similar to the thresholding algorithm of \cite{zhangApplication}, the matrices are rounded according to the actual cost measure afterwards. Moreover, we incorporate the determination of the factorization rank, utilizing that the cost measure may select fewer tiles than offered.

With our ambition to adapt the alternating minimization for Boolean matrix factorization, we face two problems: First, as mentioned above, the use of Boolean algebra induces points of discontinuity. In particular, the gradient of the cost measures does not exist at all points which hinders the application of standard gradient descent methods. Second, many tiling cost measures are not convex in $X$ or $Y$, not even if the other argument is fixed, which is a necessary condition to prove the convergence of the Gauss-Seidel scheme.  

To begin with, we inspect how NMF (Eq.~(\ref{eq:NMF})) and BMF deal with overlapping tiles. This is the crucial point where Boolean algebra diverges from elementary algebra. An illustration of a binary data matrix $D$ consisting of two  overlapping tiles and its approximation by a NMF is shown in the top two equations of Fig.~\ref{fig:overlapFact}. We see that the factors contain values smaller than one at entries which are involved in overlapping parts. With this, overlapping sections are equally well approximated as non-overlapping components. 
The matrices $D_A$ and $D_B$ in Fig.~\ref{fig:overlapFact} show the resulting approximations when the nonnegative factor matrices are rounded to binary matrices. We find that the reconstruction error is largest when the binary matrices are multiplied in elementary algebra (matrix $D_A$ in Fig.~\ref{fig:overlapFact}).
This illustrates how binary matrix factorization penalizes  overlapping patterns, a feature which is desirable in clustering when clusters are not allowed to overlap. Similarly, NMF would return less overlapping factors at a higher factorization rank. In this case, however, the original data matrix is exactly reconstructed by the Boolean product of thresholded factor matrices (matrix $D_B$ in Fig.~\ref{fig:overlapFact}). That is why we consider the minimization of a relaxed cost measure with respect to the elementary algebra whereby the factorization rank is increased stepwise. An evaluation of the actual cost measure in Boolean algebra on the rounded matrices decides whether the rank shall be increased or not.

\begin{figure}
\centering
\begin{align*}
D=
\begin{tikzpicture}[baseline=-0.5ex]
   \matrix [matrix of math nodes,left delimiter=(,right delimiter=),ampersand replacement=\&] (n) {
1\&1\&1\&0 \\
1\&1\&1\&1 \\
0\&1\&1\&1 \\
};
\draw[color=myblue,fill=myblue,opacity=0.2] (n-1-1.north west) -- (n-1-3.north east) -- (n-2-3.south east) -- (n-2-1.south west) -- (n-1-1.north west);
\draw[color=mypink,fill=mypink,opacity=0.2] (n-2-2.north west) -- (n-2-4.north east) -- (n-3-4.south east) -- (n-3-2.south west) --(n-2-2.north west);
\end{tikzpicture}
&\approx
\begin{tikzpicture}[baseline=-0.5ex]
   \matrix [matrix of math nodes,left delimiter=(,right delimiter=),ampersand replacement=\&] (n) {
1\&.9\&.9\&.1 \\
.7\&1.2\&1.2\&.7 \\
.1\&.9\&.9\&1 \\
};
\draw[color=myblue,fill=myblue,opacity=0.2] (n-1-1.north west) -- (n-1-3.north east) -- (n-2-3.south east) -- (n-2-1.south west) -- (n-1-1.north west);
\draw[color=mypink,fill=mypink,opacity=0.2] (n-2-2.north west) -- (n-2-4.north east) -- (n-3-4.south east) -- (n-3-2.south west) --(n-2-2.north west);
\end{tikzpicture}
\\
&\approx
\begin{tikzpicture}[baseline=-0.5ex]
    \matrix [matrix of math nodes,left delimiter=(,right delimiter=),ampersand replacement=\&] (n) {
1\&0 \\
.6\&.6 \\
0\&1 \\
};
\draw[color=myblue,fill=myblue,opacity=0.2] (n-1-1.north west) -- (n-1-1.north east) -- (n-2-1.south east) -- (n-2-1.south west) -- (n-1-1.north west);
\draw[color=mypink,fill=mypink,opacity=0.2] (n-2-2.north west) -- (n-2-2.north east) -- (n-3-2.south east) -- (n-3-2.south west) --(n-2-2.north west);
\end{tikzpicture}
\cdot
\begin{tikzpicture}[baseline=-0.5ex]
    \matrix [matrix of math nodes,left delimiter=(,right delimiter=),ampersand replacement=\&] (n) {
1\&.9\&.9\&.1 \\
.1\&.9\&.9\&1 \\
};
\draw[color=myblue,fill=myblue,opacity=0.2] (n-1-1.north west) -- (n-1-3.north east) -- (n-1-3.south east) -- (n-1-1.south west) --(n-1-1.north west);
\draw[color=mypink,fill=mypink,opacity=0.2] (n-2-2.north west) -- (n-2-4.north east) -- (n-2-4.south east) -- (n-2-2.south west) -- (n-2-2.north west);
\end{tikzpicture}
\\
D_A=
\begin{tikzpicture}[baseline=-0.5ex]
   \matrix [matrix of math nodes,left delimiter=(,right delimiter=),ampersand replacement=\&] (n) {
1\&1\&1\&0 \\
1\&2\&2\&1 \\
0\&1\&1\&1 \\
};
\draw[color=myblue,fill=myblue,opacity=0.2] (n-1-1.north west) -- (n-1-3.north east) -- (n-2-3.south east) -- (n-2-1.south west) -- (n-1-1.north west);
\draw[color=mypink,fill=mypink,opacity=0.2] (n-2-2.north west) -- (n-2-4.north east) -- (n-3-4.south east) -- (n-3-2.south west) --(n-2-2.north west);
\end{tikzpicture}
&=
\mathlarger{\theta}
\begin{tikzpicture}[baseline=-0.5ex]
    \matrix [matrix of math nodes,left delimiter=(,right delimiter=),ampersand replacement=\&] (n) {
1\&0 \\
.6\&.6 \\
0\&1 \\
};
\draw[color=myblue,fill=myblue,opacity=0.2] (n-1-1.north west) -- (n-1-1.north east) -- (n-2-1.south east) -- (n-2-1.south west) -- (n-1-1.north west);
\draw[color=mypink,fill=mypink,opacity=0.2] (n-2-2.north west) -- (n-2-2.north east) -- (n-3-2.south east) -- (n-3-2.south west) --(n-2-2.north west);
\end{tikzpicture}
\cdot \mathlarger{\theta}
\begin{tikzpicture}[baseline=-0.5ex]
    \matrix [matrix of math nodes,left delimiter=(,right delimiter=),ampersand replacement=\&] (n) {
1\&.9\&.9\&.1 \\
.1\&.9\&.9\&1 \\
};
\draw[color=myblue,fill=myblue,opacity=0.2] (n-1-1.north west) -- (n-1-3.north east) -- (n-1-3.south east) -- (n-1-1.south west) --(n-1-1.north west);
\draw[color=mypink,fill=mypink,opacity=0.2] (n-2-2.north west) -- (n-2-4.north east) -- (n-2-4.south east) -- (n-2-2.south west) -- (n-2-2.north west);
\end{tikzpicture}
\\
D_B=
\begin{tikzpicture}[baseline=-0.5ex]
   \matrix [matrix of math nodes,left delimiter=(,right delimiter=),ampersand replacement=\&] (n) {
1\&1\&1\&0 \\
1\&1\&1\&1 \\
0\&1\&1\&1 \\
};
\draw[color=myblue,fill=myblue,opacity=0.2] (n-1-1.north west) -- (n-1-3.north east) -- (n-2-3.south east) -- (n-2-1.south west) -- (n-1-1.north west);
\draw[color=mypink,fill=mypink,opacity=0.2] (n-2-2.north west) -- (n-2-4.north east) -- (n-3-4.south east) -- (n-3-2.south west) --(n-2-2.north west);
\end{tikzpicture}
&=
\mathlarger{\theta}\left(\mathlarger{\theta}
\begin{tikzpicture}[baseline=-0.5ex]
    \matrix [matrix of math nodes,left delimiter=(,right delimiter=),ampersand replacement=\&] (n) {
1\&0 \\
.6\&.6 \\
0\&1 \\
};
\draw[color=myblue,fill=myblue,opacity=0.2] (n-1-1.north west) -- (n-1-1.north east) -- (n-2-1.south east) -- (n-2-1.south west) -- (n-1-1.north west);
\draw[color=mypink,fill=mypink,opacity=0.2] (n-2-2.north west) -- (n-2-2.north east) -- (n-3-2.south east) -- (n-3-2.south west) --(n-2-2.north west);
\end{tikzpicture}
\cdot \mathlarger{\theta}
\begin{tikzpicture}[baseline=-0.5ex]
    \matrix [matrix of math nodes,left delimiter=(,right delimiter=),ampersand replacement=\&] (n) {
1\&.9\&.9\&.1 \\
.1\&.9\&.9\&1 \\
};
\draw[color=myblue,fill=myblue,opacity=0.2] (n-1-1.north west) -- (n-1-3.north east) -- (n-1-3.south east) -- (n-1-1.south west) --(n-1-1.north west);
\draw[color=mypink,fill=mypink,opacity=0.2] (n-2-2.north west) -- (n-2-4.north east) -- (n-2-4.south east) -- (n-2-2.south west) -- (n-2-2.north west);
\end{tikzpicture}
\right)
\end{align*}
\caption{Approximation of a binary matrix $D$ with two overlapping tiles (top) applying NMF (second from above) and the factorizations resulting from thresholding the factor matrices to binary matrices in elementary algebra (second from below) and Boolean algebra (below). Tiles are highlighted.}
\label{fig:overlapFact}
\end{figure}

This leads us to the second concern, the minimization of a possibly not even partially convex objective.
\cite{bolte2014proximal} extend the application of the Gauss-Seidel scheme to such a larger class of functions with the \emph{Proximal Alternating Linearized Minimization} (PALM). This technique focuses on objective functions which break down into a smooth part $F:\R^{n\times r}\times \R^{m\times r}\times \R^{m\times n}\rightarrow \R$ and a nonsmooth component $\phi:\{X\in\R^{m\times n}|m,n\in\N\}\rightarrow (-\infty, \infty]$
\begin{align}\label{eq:PalmObj}
	F(X,Y,D)+ \phi(X) +\phi(Y).
\end{align}
Thereby, no convexity assumptions are made on $F$ and $\phi$. Furthermore, the function $\phi$ may return $\infty$, which can be used to model restrictions of the search space, e.g., the non-negativity constraint of NMF. 
The method performs an alternating minimization on the linearized objective, substituting $F$ with its first order Taylor approximation. This is achieved by alternating \emph{proximal mappings} from the gradient descent update with respect to $F$, i.e., the following steps are repeated for $1\leq k \leq K$:
\begin{align}\label{eq:PalmIterX}
X_{k+1} &= \prox_{\alpha_k\phi}(X_k-\alpha_k\nabla_XF(X_k,Y_k,D));\\
Y_{k+1} &= \prox_{\beta_k\phi}(Y_k-\beta_k\nabla_YF(X_{k+1},Y_k,D)).\label{eq:PalmIterY}
\end{align}
The proximal mapping of $\phi$, $\prox_\phi:\dom(\phi)\rightarrow \dom(\phi)$\footnote{$\dom(\phi)$ is the domain of $\phi$} is a function which returns a matrix satisfying the following minimization criterion: 
\[\prox_\phi(X) \in \argmin_{X^\star}\left\{\frac{1}{2}\|X-X^\star\|^2+\phi(X^\star)\right\}.\]
Loosely speaking, the proximal mapping gives its argument a little push into a direction which minimizes $\phi$. For a detailed discussion, see, e.g., \citep{parikh2014proximal}. As we can see in Eqs.~(\ref{eq:PalmIterX}) and (\ref{eq:PalmIterY}), the evaluation of this operator is a base operation. Similarly to the alternating minimization in Eq.~(\ref{eq:als}), finding the minimum of the proximal mapping in every iteration by numerical methods is infeasible in practice. Thus, the trick is to use only simple functions $\phi$ for which the proximal mapping can be calculated in a closed form. 

The variables $\alpha_k$ and $\beta_k$ in Eqs.~(\ref{eq:PalmIterX}) and (\ref{eq:PalmIterY}) are the step sizes, which are computed under the assumption that the partial gradients $\nabla_XF$ and $\nabla_YF$ are globally Lipschitz continuous with moduli $M_{\nabla_XF}(Y)$ and $M_{\nabla_YF}(X)$, i.e.,
\[\|\nabla_XF(X_1,Y,D)-\nabla_XF(X_2,Y,D)\|\leq M_{\nabla_XF}(Y)\|X_1-X_2\|\]
for all $X_1,X_2\in \R^{n\times r}$ and similarly for $\nabla_YF$. If $F$ computes the RSS as stated in Eq.~(\ref{eq:NMF}), the Lipschitz moduli are given as
\[M_{\nabla_XF}(Y)=\|YY^T\|, \quad M_{\nabla_YF}(X)=\|XX^T\|.\]
The step sizes are computed as
\[\alpha_k=\frac{1}{\gamma M_{\nabla_XF}(Y_k)}, \quad \beta_k=\frac{1}{\gamma M_{\nabla_YF}(X_{k+1})},\]
where $\gamma$ is a constant larger than one. The parameter $\gamma$ ensures that the step size is indeed smaller than the inverse Lipschitz constant, which is required to guarantee the convergence. Note that the step sizes are antimonotonic to $\gamma$, i.e., if $\gamma=2$ then the step sizes are almost half as small as they could be. 
Assuming that the infimum of $F$ and $\phi$ exists and $\phi$ is proper and lower continuous, PALM generates a nonincreasing sequence of function values which converges to a critical point. 
\subsection{PAL-Tiling: General Framework}
We employ the optimization scheme PALM to minimize a specified relaxation of a tiling cost measure. Adapting to the terminology of Eq.~(\ref{eq:PalmObj}), we assume that for a factor $\mu\geq 0$ and regularizing function $G$ the relaxation has the form 
\begin{align}\label{eq:F_PALTiling}
	F(X,Y,D)= \frac{\mu}{2}\|D-YX^T\|^2 + \frac{1}{2} G(X,Y).
\end{align} 
Here the multiplication by one half refers to the traditional formulation of the residual sum of squares in Eq.~(\ref{eq:NMF}), which shortens the formulation of gradients. 
The regularizing function $G$ is supposed to be real valued and smooth with partial gradients which are Lipschitz-continuous with moduli $M_{\nabla_YG}(X)$ and $M_{\nabla_XG}(Y)$. That is, 
\[\|\nabla_XG(X_1,Y)-\nabla_XG(X_2,Y)\|\leq M_{\nabla_XG}(Y)\|X_1-X_2\|,\]
and similarly for $\nabla_YG$. It follows from the triangle inequality that the Lipschitz moduli of the partial gradients of $F$ are given by the sum
\begin{align*}
M_{\nabla_XF}(Y) &= \mu\|YY^T\| + \frac{1}{2}M_{\nabla_XG}(Y)\\ 
M_{\nabla_YF}(X) &= \mu\|XX^T\| + \frac{1}{2}M_{\nabla_YG}(X).
\end{align*}
We use the function $\phi$, which is integrated into the objective function as stated in Eq.~(\ref{eq:PalmObj}), to limit the matrix entries to the interval $[0,1]$, i.e., $X\in [0,1]^{n\times r}$ and $Y\in [0,1]^{m\times r}$. As discussed by \cite{zhangApplication}, this prevents an imbalance between the factor matrices in which one matrix is very sparse and the other very dense. Apart from that, we wish that the relaxed optimization returns factor matrices which are as close to binary matrices as possible. Therefore, we incorporate penalty terms for non-binary matrix entries in the function $\phi$.
Choosing $\phi$ as a proper and lower semicontinuous function (to be defined in Sec.~\ref{sec:prox}), the objective meets the requirements of PALM to guarantee the convergence to a critical point in a nonincreasing sequence of iterative function values. 
\begin{algorithm}[t]
\caption{Proximal Alternating Linearized Tiling} 
\begin{algorithmic}[1]
  \Function{PAL-Tiling}{$D,\Delta_r,K,T,\gamma=1.00001$}
  	\State $(X_K,Y_K)\gets (\emptyset,\emptyset)$
    \For {$r\in\{\Delta_r,2\Delta_r,3\Delta_r,\ldots\}$}
    	\State $(X_0,Y_0)\gets$\Call{IncreaseRank}{$X_K,Y_K,\Delta_r$} \Comment{Append $\Delta_r$ random columns}
    \For {$k \in \{0,\ldots, K-1\}$}\label{alg:optStart}  
    	\State $\alpha_k^{-1} \gets \gamma M_{\nabla_XF}(Y_k)$ 
        \State $X_{k+1} \gets \prox_{\alpha_k\phi}\left(X_k-\alpha_k\nabla_XF(X_k,Y_k,D)\right)$ 
       	\State $\beta_k^{-1} \gets\gamma M_{\nabla_YF}(X_{k+1})$ 
        \State $Y_{k+1} \gets \prox_{\beta_k\phi}\left(Y_k-\beta_k\nabla_YF(X_{k+1},Y_k,D)\right)$ 
    \EndFor\label{alg:optEnd}
    \State $(X,Y)\gets \argmin\{f(\theta_x(X_K),\theta_y(Y_K))|x,y\in T\}$\Comment{Threshold to binary matrices}\label{alg:fmin}
    \If {$r-r(X,Y)>1$}
    	\State \Return $(X,Y)$
    \EndIf
    \EndFor
  \EndFunction
\end{algorithmic}
\label{alg:primp}
\end{algorithm}

We sketch our method, Proximal Alternating Linearized Tiling (\textsc{PAL-Tiling}), in Algorithm~\ref{alg:primp}. 
A data matrix $D$, rank increment $\Delta_r$, maximum number of iterations $K$, a set of threshold values $T$ and the parameter $\gamma$, having a default value of $\gamma=1.00001$, are the input of this algorithm. 
For every considered rank, we perform the proximal alternating linearized minimization of the relaxed objective (line \ref{alg:optStart}-\ref{alg:optEnd}). After the numerical minimization of the relaxed objective $F$, the matrices $X_K$ and $Y_K$, having entries between zero and one, are rounded to binary matrices $X$ and $Y$ with respect to the actual cost measure $f$ (line \ref{alg:fmin}). If the rounding procedure returns binary matrices which use at least one (non-singleton) pattern less than possible, the current factorization is returned. Otherwise, we increase the rank and add $\Delta_r$ random columns with entries between zero and one to the relaxed solution of the former iteration $(X_K,Y_K)$.

To apply this scheme, we need to define the penalizing function $\phi$, derive its proximal mapping in a closed form and find a suitable cost measure with its smooth relaxed approximation. 
\subsection{Penalizing Non-Binary Values}\label{sec:prox}
While the Mexican hat function can be seen as an $L2$ regularization equivalent penalizer for binary values, here, we choose an $L1$-equivalent form.
Specifically, we choose $\phi(X)=\sum_{i,j}\Lambda(X_{ij})$, which employs the one-dimensional function 
\[
	\Lambda(x) = 
    \begin{cases}
        -|1-2x|+1 &x\in[0,1]\\
        \infty &\text{otherwise}.
    \end{cases}
\]
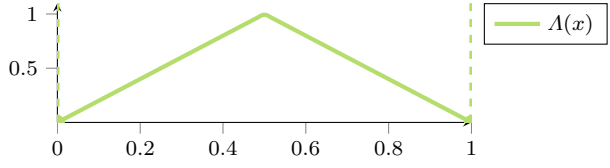
\begin{figure}
\centering
\begin{tikzpicture}
	\begin{axis}[width=200pt,height = 90pt, axis x line=left,ymax=1.1, domain=-0.5:1.5, axis y line=center, tick align=outside,legend pos=outer north east,legend entries={$\Lambda(x)$}]
    	\addplot+[domain=0:1,mark=none,smooth,mygreen,ultra thick,samples=100] (\x,{abs(abs(2*\x-1)-1)});
        \draw[dashed,mygreen,ultra thick] ({axis cs:1,0}|-{rel axis cs:0,0}) -- ({axis cs:1,0}|-{rel axis cs:0,1});
        \draw[dashed,mygreen,ultra thick] ({axis cs:0,0}|-{rel axis cs:0,0}) -- ({axis cs:0,0}|-{rel axis cs:0,1});
        \draw [draw=mygreen, fill=mygreen, ultra thick] 
            (axis cs: 0, 0) circle (2.0pt);
    	\draw [draw=mygreen, fill=mygreen, ultra thick] 
            (axis cs: 1, 0) circle (2.0pt);	
	\end{axis}
\end{tikzpicture}
\caption{The function $\Lambda$ penalizing non-binary values.}
\label{fig:lambda}
\end{figure}
to restrict matrix entries to the interval $[0,1]$ and to penalize non-binary values. The curve of $\Lambda$ is depicted  in Fig.~\ref{fig:lambda}. 
We derive with the following proposition a closed form for the computation of the exact minimum as assigned by the proximal mapping with respect to $\phi$. 
\begin{restatable}{thm}{proxphi}
Let $\alpha>0$ and $\phi(X)=\sum_{i,j}\Lambda(X_{ij})$ for $X\in\R^{m\times n}$. The proximal operator of $\alpha\phi$ maps the matrix $X$ to the matrix $\prox_{\alpha\phi}(X)=A\in [0,1]^{m\times n}$ defined by $A_{ji}=\prox_{\alpha\Lambda}(X_{ji})$, where for $x\in \R$ it holds that
    \begin{equation}\label{eq:prox}
	\prox_{\alpha \Lambda}(x)=
    \begin{cases}
        \max\{0,x-2\alpha\} &x\leq0.5\\
        \min\{1,x+2\alpha\} &x>0.5.
    \end{cases}
    \end{equation}
\end{restatable}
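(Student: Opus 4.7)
The plan is to reduce the matrix-level claim to a one-dimensional computation, exploit the piecewise-linear structure of $\Lambda$ on its effective domain $[0,1]$, and compare the minima on each of the two linear pieces. First, I would observe that both $\|X-X^\star\|^2 = \sum_{j,i}(X_{ji}-X^\star_{ji})^2$ and $\phi(X^\star) = \sum_{j,i}\Lambda(X^\star_{ji})$ split coordinate-wise, so the proximal minimization decomposes into $mn$ independent scalar problems of the form $\min_{y\in\R}\bigl\{\tfrac{1}{2}(y-x)^2 + \alpha\Lambda(y)\bigr\}$. It therefore suffices to prove the closed form \eqref{eq:prox} for the scalar operator $\prox_{\alpha\Lambda}$.

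Since $\Lambda(y)=\infty$ for $y\notin[0,1]$, the scalar minimization is over $[0,1]$. On $[0,0.5]$ the function $\Lambda$ reduces to $2y$, and on $[0.5,1]$ to $2(1-y)$, so on each half-interval the objective $h(y)=\tfrac{1}{2}(y-x)^2+\alpha\Lambda(y)$ is a strictly convex univariate quadratic. Setting the derivative to zero yields the unconstrained vertices $x-2\alpha$ on the left piece and $x+2\alpha$ on the right; clipping to the respective intervals gives the two candidate minimizers
\[
  y_L \;=\; \min\bigl\{0.5,\,\max\{0,\,x-2\alpha\}\bigr\}, \qquad y_R \;=\; \max\bigl\{0.5,\,\min\{1,\,x+2\alpha\}\bigr\}.
\]

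I would then compute $h(y_L)$ and $h(y_R)$ and show that the smaller of the two matches \eqref{eq:prox}. In the regime where both vertices lie inside their pieces, a direct calculation gives $h(y_R)-h(y_L)=2\alpha(1-2x)$, so $y_L$ is the global minimizer when $x\leq 0.5$ and $y_R$ when $x\geq 0.5$. When $x\leq 0.5$ the upper clip in $y_L$ is inactive (because $x-2\alpha\leq 0.5$), so $y_L=\max\{0,x-2\alpha\}$; symmetrically, for $x>0.5$ the lower clip in $y_R$ is inactive and $y_R=\min\{1,x+2\alpha\}$. This recovers the closed form in \eqref{eq:prox}.

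The main obstacle will be the bookkeeping for the boundary subcases in which one of the two vertices lies outside its piece, so that the candidate is obtained by clipping rather than at an interior critical point. A clean way to package these is to exploit the reflection symmetry $\Lambda(1-y)=\Lambda(y)$: under the substitution $(x,y)\mapsto(1-x,1-y)$ the two pieces and their candidate minimizers are exchanged, which reduces the entire case analysis to $x\leq 0.5$. The remaining comparisons (for instance $y_L=0$ against $y_R$ at a clipped position) then reduce to elementary quadratic inequalities in $x$ and $\alpha$ that can be checked directly without any non-trivial estimates.
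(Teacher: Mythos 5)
Your proposal is correct and follows essentially the same route as the paper's proof: entrywise separation of $\phi$, reduction to the two linear pieces of $\Lambda$ on $[0,1]$ with unconstrained vertices $x\mp 2\alpha$, and a comparison of the two candidate minima that yields the threshold at $x=0.5$ (the paper's $g(x-2\alpha;x,\alpha)\leq g(x+2\alpha;x,\alpha)\Leftrightarrow x\leq 0.5$ is your $h(y_R)-h(y_L)=2\alpha(1-2x)$ up to scaling). Your explicit clipping of the vertices to $[0,0.5]$ and $[0.5,1]$ and the reflection-symmetry device are just a more careful bookkeeping of the boundary subcases that the paper dispatches by invoking Euclidean projection outside $[0,1]$ and leaving the interior clipping implicit.
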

This enables a minimization according to the cost measure of \textsc{Asso}, setting $G(X,Y)=0$. However, without a generalizing term it is unlikely that the rank is properly identified; the loss function certainly attains a minimum when one of the factor matrices is equal to the data matrix and the other one is the identity. Thus, we seek cost measures which are suitable for a minimization within \textsc{PAL-Tiling} and whose application results in an algorithm which is capable to identify the correct rank.  
\subsection{\textsc{Panpal}}\label{sec:panpal}
The cost measure $f_{\mathsf{L1}}$ (as applied by \textsc{Panda}) can easily be  integrated into \textsc{PAL-Tiling}. Since the proximal operator ensures that the factor matrices in all steps have values between zero and one, the $L1$-norm of the factor matrices equates a simple summation over all matrix entries. Thus, the $L1$-norm is a smooth function on the nonnegative domain of the factor matrices and can be used as regularizing function. We call the resulting algorithm \textsc{Panpal} as it employs the cost measure of \textsc{\textbf{Pan}da} in the minimization technique \textsc{\textbf{PAL}-Tiling}:
\begin{mybox}
\textsc{Panpal}: 
Apply \textsc{PAL-Tiling} with
\begin{itemize}
\item Cost measure
	\[f_{\mathsf{L1}}(X,Y,D) = |D-YX^T|+|X|+|Y|\]
\item Relaxed objective
	\[F(X,Y,D)=\frac{1}{2}\|D-YX^T\|^2+ \frac{1}{2}(|X|+|Y|)\]
\item Partial Gradients
\begin{align*}
\nabla_XF(X,Y,D)&=(YX^T-D)^TY+(0.5)_{is}\\
\nabla_YF(X,Y,D)&=(YX^T-D)X+(0.5)_{js}
\end{align*}
\item Lipschitz moduli 
	\begin{align*}
		M_{\nabla_XF}(Y)=\|YY^T\|\\
        M_{\nabla_YF}(X)=\|XX^T\|
\end{align*} 
\end{itemize}
\vspace{2ex}
\end{mybox}
\subsection{\textsc{Primp}}\label{sec:primp}
So far, the cost measure of \textsc{Krimp} has been disregarded in the context of Boolean matrix factorization. Since the traditionally employed cover function is incompatible with overlapping patterns or patterns which cover more items than persistent in the transaction, the task to find the best encoding by code tables is associated with the sub-domain of pattern mining \citep{Miettinen2014mdl4bmf,lucchese2014unifying,nassau15}. The definition of the long-established cover function is heuristically determined under the assumption that there is one globally valid cover function which is applicable on all datasets if a suitable code table is found. Although this approach might be favorable in sub-domains like classification or detection of changes in a data stream \citep{vreeken2011krimp,streamkrimp}, it is current best practice in the domain of tiling to take negative noise into account (see Sec.~\ref{sec:Tiling}).  
Thus, we break away from the conventional view on the cover function as a predefined instance and regard it as an extrapolation of the mapping from patterns to transactions which is defined by the matrix $Y$. Thereby, we intend to learn a suitable pair of code table and cover function for every dataset. This is motivated by the following observation.
\begin{restatable}{lem}{lctbmf}\label{thm:CTBMF}
Let $D$ be a data matrix. For any code table $CT$ and its cover function there exists a Boolean matrix factorization $D=\theta(YX^T)+N$ such that non-singleton patterns in $CT$ are mirrored in $X$ and the cover function is reflected by $Y$. The description lengths correspond to each other, such that 
\[L_{\mathsf{CT}}(D,CT)=f_{\mathsf{CT}}(X,Y,D)=f_{\mathsf{CT}}^D(X,Y,D)+f_{\mathsf{CT}}^M(X,Y,D),\]
where the functions returning the model and the data description size are given as  
\begin{align*}
	f_{\mathsf{CT}}^D(X,Y,D)&=-\sum_{s=1}^r |Y_{\cdot s}| \cdot \log(p_s)
       -\sum_{i=1}^n |N_{\cdot i}| \cdot \log(p_{r+i})\\
       &=L^D_{\mathsf{CT}}(D,CT)\\
    f_{\mathsf{CT}}^M(X,Y,D)
    &=\sum_{s:|Y_{\cdot s}|> 0}\left(X_{\cdot s}^Tc-\log(p_s)\right)
	+\sum_{i:|N_{\cdot i}|> 0}\left(c_i-\log(p_{r+i})\right)\\
    	&=L_{\mathsf{CT}}^M(CT).
\end{align*}
The probabilities $p_s$ and $p_{r+i}$ indicate the relational usage of non-singleton patterns $X_{\cdot s}$ and singletons $\{i\}$,
\[
	p_s = \frac{|Y_{\cdot s}|}{|Y|+|N|},\  p_{r+i} = \frac{|N_{\cdot i}|}{|Y|+|N|}.
\]
We denote with $c\in\R_+^n$ the vector of standard code lengths for each item, i.e., 
\[c_i=-\log\left(\frac{|D_{\cdot i}|}{|D|}\right).\]
\end{restatable}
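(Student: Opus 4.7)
The plan is constructive: I build $(X,Y,N)$ directly from $CT$ and its cover, then read off the description-length summands from the resulting factorization. Enumerate the non-singleton patterns of $CT$ as $X_1,\ldots,X_r$ and let $X\in\{0,1\}^{n\times r}$ have $X_{\cdot s}$ as the indicator vector of $X_s$. For each transaction $j$ and each $s\leq r$, set $Y_{js}=1$ iff $X_s\in cover(CT,D_{j\cdot})$, and put $N=D-\theta(YX^T)$. Because any admissible cover partitions $\{i:D_{ji}=1\}$ into disjoint subsets that are themselves subsets of the transaction, the non-singleton patterns chosen for transaction $j$ are pairwise disjoint and contained in the support of $D_{j\cdot}$. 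Hence for every entry $(j,i)$ the quantity $\sum_s X_{is}Y_{js}$ lies in $\{0,1\}$ and is bounded above by $D_{ji}$, so $\theta(YX^T)=YX^T$ and $N\in\{0,1\}^{m\times n}$, with $N_{ji}=1$ exactly when $i$ occurs in transaction $j$ but is covered there by the singleton $\{i\}$.

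Next I would translate coverage statistics into matrix norms. Summing the indicators column-wise gives $|Y_{\cdot s}|=usage(X_s)$ for each non-singleton $X_s$ and $|N_{\cdot i}|=usage(\{i\})$ for each singleton $\{i\}$. Because the cover of each transaction partitions $\{i:D_{ji}=1\}$, totalling usages across transactions yields $\sum_{s}usage(X_s)+\sum_{i}usage(\{i\})=|D|=|Y|+|N|$. Consequently the code probabilities defined in Eq.~(\ref{eq:krimpCodeProb}) coincide with $p_s=|Y_{\cdot s}|/(|Y|+|N|)$ and $p_{r+i}=|N_{\cdot i}|/(|Y|+|N|)$, which are exactly the quantities appearing in the lemma statement.

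With this correspondence in hand, both description-length identities follow by substitution. For the data term, replacing each $usage(\cdot)$ in $L^D_{CT}$ by the matching norm produces $f^D_{\mathsf{CT}}$ term by term. For the model term, the code $C_\sigma$ contributes $-\log p_s$ (respectively $-\log p_{r+i}$), while the pattern description of $X_s$ as a concatenation of standard codes contributes $\sum_{i\in X_s}c_i=X_{\cdot s}^Tc$, and the pattern description of a singleton $\{i\}$ reduces to a single standard code $c_i$. Restricting the outer sums to indices with $|Y_{\cdot s}|>0$ and $|N_{\cdot i}|>0$ reflects the original restriction to patterns with positive usage. The only real obstacle in the argument is the structural step at the end of the first paragraph, namely verifying that the admissibility properties of $cover$ force $\theta(YX^T)$ to agree with ordinary matrix multiplication and keep $N$ nonnegative; once that is secured, every remaining step is routine bookkeeping.
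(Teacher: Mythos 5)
Your construction and bookkeeping match the paper's proof in Appendix~\ref{sec:appLCTBMF} essentially step for step: the same factor matrices, the same identification of $usage$ with the column norms $|Y_{\cdot s}|$ and $|N_{\cdot i}|$, and the same substitution into $L^D_{\mathsf{CT}}$ and $L^M_{\mathsf{CT}}$; if anything, you justify more explicitly why the disjointness of the cover forces $\theta(YX^T)=YX^T$ and $N\geq 0$, which the paper only asserts. One harmless slip: $\sum_s usage(X_s)+\sum_i usage(\{i\})$ equals $|Y|+|N|$ but not $|D|$ in general (each use of a pattern $X_s$ covers $|X_s|$ ones, so $|D|=\sum_s usage(X_s)\cdot|X_s|+\sum_i usage(\{i\})$); since only the first equality is needed for the probabilities, the argument stands.
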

The proof of this lemma can be found in Appendix \ref{sec:appLCTBMF}.
We remark that this formulation also puts new emphasis on the debate about the model definition in this MDL application. As commented by \cite{siebes2011structure}, the cover function actually is a part of the model and if we learn the cover function together with the code table, an encoding of the data is not possible if only the code table is present. 
However, to be in line with common practice in the field we stick with the description length computation as originally proposed by \cite{siebes2006item}, which is also used in Lemma \ref{thm:CTBMF} and makes our results comparable to previously published results.

The transfer from a code table encoding to a Boolean matrix factorization provides another view on the objective of \textsc{Krimp}-related algorithms. While the focus of matrix factorizations lies on the extraction of a given ground truth, the originally formulated task aims at the derivation of subjectively interesting patterns -- equating interestingness with the ability to compress. Moreover, the tiling derived by Boolean matrix factorizations obliges certain requirements such as the linear independence of columns/rows and the bound on the rank ($r\leq\min\{m,n\}$) which follows from that.

Considering, in reverse, the transfer from a matrix factorization to an encoding by code tables, we naturally receive access to the treatment of negative noise. We can calculate the description size $f_{\mathsf{CT}}$ for arbitrary factor matrices, even if the resulting noise matrix contains negative entries. Yet, the question arises if this also has a suitable interpretation with regard to the encoding. In fact, the interpretation is simple: the items having a negative noise entry can be transmitted just as the items with positive noise entries; their singleton codes are appended to the belonging transaction. If the item is not covered by any other pattern used in this transaction, then it belongs to a positive noise entry and otherwise to a negative one. We obtain therewith a description length equal to $f_{\mathsf{CT}}$. 
That is, each transaction $D_{j\cdot}$ is described by the code concatenation of patterns $X_{\cdot r}$ where $Y_{jr}=1$ and the singleton codes of items $i$ having $N_{ji}\neq 0$.

However, the compression size $f_{\mathsf{CT}}(X,Y,D)$ is not continuous. There are points of discontinuity at encodings which do not use a pattern present in $X$ or one of the singletons. To approximate this function as required in \textsc{PAL-Tiling} (Eq.~(\ref{eq:PalmObj})), we assume that each pattern in $X$ is used at least once. For singletons, we do not wish to make such an assumption; usages of singletons are reflected in the noise matrix and we want to keep the noise as small as possible. Therefore, we bound the description size which is induced by singletons by the RSS. Then, we obtain a smooth function which meets the requirements of \textsc{PAL-Tiling}. This is specified by the following theorem whose proof can be found in Appendix~\ref{sec:appbound}. 
\begin{restatable}{thm}{BoundLCT}\label{thm:bound}
Given binary matrices $X$ and $Y$ and $\mu = 1+\log(n)$, it holds that 
\begin{align} \label{eq:approxLN}
		f^D_{\mathsf{CT}}(X,Y,D) &\leq \mu \|D-YX^T\|^2-\sum_{s=1}^r(|Y_{\cdot s}|+1)\log\left(\frac{|Y_{\cdot s}|+1}{|Y|+r}\right)+|Y|
\end{align} 
\end{restatable}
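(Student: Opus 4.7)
The plan is to split the LHS of the claim, via Lemma~\ref{thm:CTBMF}, into the \emph{pattern} sum $-\sum_s |Y_{\cdot s}|\log p_s$ and the \emph{noise} sum $-\sum_i |N_{\cdot i}|\log p_{r+i}$, and bound the two separately so that the noise part absorbs $\log(n)\|D-YX^T\|^2 + |Y|$ and the pattern part absorbs $\|D-YX^T\|^2 - \sum_s(|Y_{\cdot s}|+1)\log((|Y_{\cdot s}|+1)/(|Y|+r))$. A bridge from the count $|N|$ to the Frobenius residual is needed in both parts; it comes from an elementary case check: for binary $X,Y,D$ and any entry $(j,i)$ with $(YX^T)_{ji}=k\geq 0$, one has $|N_{ji}|\leq (D_{ji}-k)^2$, hence $|N|\leq\|D-YX^T\|^2$.

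For the noise sum I first rewrite $-\sum_i|N_{\cdot i}|\log p_{r+i} = -\sum_i|N_{\cdot i}|\log(|N_{\cdot i}|/|N|) + |N|\log((|Y|+|N|)/|N|)$. The first term is an unnormalized Shannon entropy over at most $n$ atoms and is therefore at most $|N|\log n$; the second is at most $|Y|$ by $\log(1+x)\leq x$ applied to $|N|\log(1+|Y|/|N|)$. Invoking the bridge $|N|\leq\|D-YX^T\|^2$ gives $-\sum_i|N_{\cdot i}|\log p_{r+i}\leq \log(n)\|D-YX^T\|^2 + |Y|$, which matches the part of the RHS we want to capture here.

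For the pattern sum the key step is to introduce the Laplace-smoothed distribution $q_s=(|Y_{\cdot s}|+1)/(|Y|+r)$, which sums to $1$ over $s=1,\ldots,r$, and apply Gibbs' inequality to $P_s=|Y_{\cdot s}|/|Y|$ against $q_s$. This yields $|Y|\log|Y|-\sum_s|Y_{\cdot s}|\log|Y_{\cdot s}|\leq-\sum_s|Y_{\cdot s}|\log q_s$. Combined with the identity $-\sum_s|Y_{\cdot s}|\log p_s = |Y|\log(|Y|+|N|)-\sum_s|Y_{\cdot s}|\log|Y_{\cdot s}|$ and a second application of $\log(1+x)\leq x$ via $|Y|\log(1+|N|/|Y|)\leq|N|$, I get $-\sum_s|Y_{\cdot s}|\log p_s\leq|N|-\sum_s|Y_{\cdot s}|\log q_s$. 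Since each $q_s\leq 1$, padding the coefficient by one only weakens the bound: $-\sum_s|Y_{\cdot s}|\log q_s\leq-\sum_s(|Y_{\cdot s}|+1)\log q_s$. Adding this to the noise bound and invoking $|N|\leq\|D-YX^T\|^2$ once more yields the claim with the constant $\mu=1+\log n$.

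The main obstacle is spotting the reference distribution $q_s$; once introduced, the shifts from $|Y_{\cdot s}|$ to $|Y_{\cdot s}|+1$ in the numerator and from $|Y|+|N|$ to $|Y|+r$ in the denominator are both forced by Gibbs plus the $\log(1+x)\leq x$ step. Minor bookkeeping is required for the degenerate cases $|Y|=0$ and $|N_{\cdot i}|=0$, but the conventions $0\log 0 = 0$ make the pattern step trivial when $|Y|=0$ and keep the entropy step well defined in the noise sum.
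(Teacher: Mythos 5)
Your proposal is correct, and its overall architecture coincides with the paper's: the same split of $f^D_{\mathsf{CT}}$ into the pattern sum and the noise sum, the same two applications of $\log(1+x)\leq x$ to trade $|Y|\log(1+|N|/|Y|)$ for $|N|$ and $|N|\log(1+|Y|/|N|)$ for $|Y|$, and the same entropy/Jensen bound $-\sum_i|N_{\cdot i}|\log(|N_{\cdot i}|/|N|)\leq|N|\log n$ for the noise part. Where you genuinely diverge is the pattern sum. The paper proves the key inequality
\[
-\sum_{s}|Y_{\cdot s}|\log\!\left(\frac{|Y_{\cdot s}|}{|Y|}\right)\;\leq\;-\sum_{s}(|Y_{\cdot s}|+1)\log\!\left(\frac{|Y_{\cdot s}|+1}{|Y|+r}\right)
\]
by an auxiliary monotonicity lemma: the function $g(x)=-\sum_s(a_s+x)\log((a_s+x)/(S_r+rx))$ is shown, via a derivative computation, to be nondecreasing on $[0,\infty)$, and the inequality is $g(0)\leq g(1)$. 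You instead introduce the Laplace-smoothed distribution $q_s=(|Y_{\cdot s}|+1)/(|Y|+r)$, apply Gibbs' inequality (entropy $\leq$ cross-entropy) to $P_s=|Y_{\cdot s}|/|Y|$ against $q_s$, and then pad the coefficients using $q_s\leq 1$. Both routes yield identical bounds; yours is arguably more self-contained, since it replaces a calculus lemma by the nonnegativity of KL divergence, while the paper's lemma is a more general statement (monotonicity for all $x\geq 0$) of which only one instance is used. Note that the two arguments are close cousins: the paper's derivative $g'(x)=-\sum_s\log((a_s+x)/(S_r+rx))$ is nonnegative for exactly the reason your padding step works. A further small merit of your write-up is that you make explicit the bridge $|N|=|D-\theta(YX^T)|\leq\|D-YX^T\|^2$ needed to pass from the count of noisy entries to the Frobenius residual appearing in the theorem statement; the paper's proof stops at the $(1+\log n)|N|$ form and leaves this last step implicit.
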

So far, this bound encompasses the description size of the data, yet the description size of the model is also discontinuous at points where one of the patterns is not used at all. The description size of one side of the code table, representing the patterns by standard singleton codes $c$, computed by 
\[|X^Tc|=\sum_{s=1}^rX_{\cdot s}^Tc\geq\sum_{s:|Y_{\cdot s}|>0}X_{\cdot s}^Tc,\] 
can easily be integrated into the smooth approximation. The matrix $X$ and the standard code sizes $c_i$ contain only nonnegative entries, thus the computation of the $L1$-norm boils down to a summation over all entries in the vector $X^Tc$. 
The remaining terms which compose the model complexity are bounded above by a constant, due to the fixation of the rank during the minimization of the relaxed objective.
Thus, we minimize the relaxed function as denoted in the box below. 
The required Lipschitz constants are computed in Appendix~\ref{sec:applipschitzmoduli}. We refer to this algorithm as \textsc{Primp}, as it performs \textsc{\textbf{P}AL-Tiling} with the objective of \textsc{Kr\textbf{imp}}.   
\begin{mybox}
\textsc{Primp}: Apply \textsc{PAL-Tiling} with
\begin{itemize}
\item Constants
\begin{align*}
c&=(c_i)_{i\in \I}, \ c_i=-\log\left(\frac{|D_{\cdot i}|}{|D|}\right)\\
\mu&= 1+\log(n)
\end{align*}
\item Cost measure
	\[f_{\mathsf{CT}}(X,Y,D)\]
\item Relaxed objective
	\begin{align*}
		F(X,Y,D)&=\frac{\mu}{2}\|D-YX^T\|^2+ \frac{1}{2}G(X,Y)\\
        G(X,Y)&=-\sum_{s=1}^r(|Y_{\cdot s}|+1)\log\left(\frac{|Y_{\cdot s}|+1}{|Y|+r}\right) +|X^Tc| +|Y|
	\end{align*}
\item Partial Gradients
\begin{align*}
\nabla_XF(X,Y,D)&=\mu(YX^T-D)^TY+c(0.5)_s^T\\
\nabla_YF(X,Y,D)&=\mu(YX^T-D)X-\frac{1}{2}\left(\log\left(\frac{|Y_{\cdot s}|+1}{|Y|+r}\right)\right)_{js}+(0.5)_{js}
\end{align*}
\item Lipschitz moduli 
	\begin{align*}
		M_{\nabla_X F}(Y)&=\mu\|YY^T\|\\
        M_{\nabla_Y F}(X)&=\mu\|XX^T\|+m
\end{align*} 
\end{itemize}
\vspace{2ex}
\end{mybox}
\section{Experiments}\label{sec:experiments}
We conduct experiments on a series of synthetic data matrices, exploring the ability to detect the planted tiling, i.e., to recover generated matrices X and Y in presence of various noise structures. In real-world data experiments we compare the costs of obtained models in various measures. Also, we perform a qualitative evaluation of the factor methods, visualizing the algorithms' understanding of tiles and noise on the basis of images. We compare the \textsc{PAL-Tiling} instances \textsc{Panpal} and \textsc{Primp} with the available implementations of 
\textsc{PaNDa+}\footnote{\url{http://hpc.isti.cnr.it/~claudio/web/archives/20131113/index.html}}, \textsc{Mdl4bmf}\footnote{\label{note1}\url{http://people.mpi-inf.mpg.de/~skaraev/}} and \textsc{Nassau}\footnoteref{note1}. Concerning \textsc{Panpal} and \textsc{Primp}, we apply $K=50,000$ iterations and try thresholds with step size $0.05$, i.e., $T=\{0.05k\mid k\in\{0,1,\ldots,20\}\}$. We apply the same set of thresholds $T$ to \textsc{Mdl4bmf}, which is also the average increment used in experiments by \cite{Miettinen2014mdl4bmf,lucchese2014unifying,nassau15}. For \textsc{Panda+} we choose the TypedXOR measure and use 20 randomization rounds and correlating items as suggested in the literature \cite{lucchese2014unifying}. Apart from that, the default settings apply. 

We exclude \textsc{Slim} from our experiments as it can not be fairly compared to Boolean matrix factorization algorithms. To illustrate, \textsc{Slim} returns far more patterns ($500$ to $3000$ monotonically increasing with noise) than planted (25) in our synthetic data sets. Hence, a depiction of this algorithm's rank would distort the rank charts due to its unreasonable performance.

For synthetic and real world experiments, we set the rank increment $\Delta_r=10$; sensitivity to this parameter is explored in Sec.~\ref{sec:expRInc}. For our image evaluation, we set (if possible) a maximum number of $10$ returned tiles and depict the four most informative tiles. Here, we set $\Delta_r=1$, to consistently allow for more factorization rounds in case of higher estimated rank. 

A separate run time comparison of the aforementioned algorithms is not conducted. This is because we can not guarantee that the underlying data structures and platform specific optimizations are equally well tuned, especially for the approaches for which we make use of the reference implementation (\textsc{Panda+}, \textsc{Mdl4bmf} and \textsc{Nassau}). Note, however, that due to the formulation of \textsc{PAL-Tiling} in terms of linear algebra, a highly parallel implementation on graphics processing units (GPU) is straightforward. Therefore, experiments regarding \textsc{Panpal} and \textsc{Primp} are executed on a GPU with 2688 arithmetic cores and 6GiB GDDR5 memory. The run time of the GPU based algorithms is about 50 times lower, compared to the ordinary implementations, e.g., a task that is finished by \textsc{Primp} in a few seconds, requires 30 minutes by \textsc{Mdl4bmf}. We provide the source code of our algorithms together with the data generating script\footnote{\url{http://sfb876.tu-dortmund.de/primp}}.
\subsection{Synthetic Data Generation} 
We generate data matrices according to the scheme established by \cite{Miettinen2014mdl4bmf,nassau15} and \cite{lucchese2014unifying}. Yet, we constrain the set of generated factor matrices to contain at least one percent of uniquely assigned ones to ensure linear independence of column vectors. This ensures that for $r^\star\leq n,m$ and generated matrices $X^\star\in\{0,1\}^{n\times r^\star}$ and $Y^\star\in\{0,1\}^{m\times r^\star}$, the matrix $D=Y^\star {X^\star}^T$ indeed has rank $r^\star$. We describe the data generation process as a function from dimensions $n$ and $m$, rank $r^\star$, density parameter $q$ and noise probabilities $p_+$ and $p_-$. 

\begin{description}
\item [\textbf{GenerateData}($n,m,r^\star,q,p_+,p_-$)]\ 
\begin{enumerate}
\item Set $k= \lceil\frac{n}{100}\rceil$ and $l= \lceil\frac{m}{100}\rceil$. Let $\mathbf{1}_k$ and $\mathbf{1}_l$ denote the $k$- and $l$-dimensional vector filled with ones. Draw factor matrices of the form 
\[
X^\star=\begin{pmatrix}
\mathbf{1}_k &  & \raisebox{-1ex}{\text{\Large 0}}\\
&  \ddots &  \\
\text{\Large 0}&  & \mathbf{1}_k\\
\mid & & \mid\\
x_1 &\cdots & x_{r^\star}\\
\mid & & \mid
\end{pmatrix}, \quad
Y^\star=\begin{pmatrix}
\mathbf{1}_l &  & \raisebox{-1ex}{\text{\Large 0}}\\
&  \ddots & \\
\text{\Large 0}& & \mathbf{1}_l\\
\mid & & \mid\\
y_1 &\cdots & y_{r^\star}\\
\mid & & \mid
\end{pmatrix},
\]
where we draw for $1\leq s\leq r^\star$, $\tilde{n}=n-kr^\star$ and $\tilde{m}=m-lr^\star$
\begin{itemize}
\item $x_s\in\{x\in\{0,1\}^{\tilde{n}}\mid |x|\leq \lfloor q\tilde{n}\rfloor\}$ uniformly random 
\item $y_s\in\{y\in\{0,1\}^{\tilde{m}}\mid  |y|\leq \lfloor q\tilde{m}\rfloor\}$ uniformly random
\end{itemize}
\item Set $D=Y^\star {X^\star}^T+N$ with noise matrix $N$ generated by the following scheme
\begin{itemize}
\item If $D_{ji}=0$ then $N_{ji}=1$ with probability $p_+$
\item If $D_{ji}=1$ then $N_{ji}=-1$ with probability $p_-$
\end{itemize}
\end{enumerate}

\end{description}
We generate datasets for distinct settings with dimensions $(n,m)\in\{(500,1600),\allowbreak(1600,500),\allowbreak(800,1000),\allowbreak(1000,800)\}$, $r\in[5,25]$, $q\in [0.1,0.3]$ and $p_\pm\in [0,25]$. 
Table \ref{tbl:statSynthData} summarizes the basic statistics of the generated datasets.   
\begin{table}
	\centering
	\begin{tabular}{lrrrrrr}\toprule
Variation & $p_+[\%]$ & $p_-[\%]$ & $r$ & $q$ & Density $[\%]$ & Overlap $[\%]$ \\ \midrule
\multirow{2}{*}{Uniform Noise} &
0 & 0 & 25 & 0.1 & $6.6\pm0.8$ & $2.3\pm0.3$\\
& 25 & 25 & 25 & 0.1 & $28.3\pm0.4$ & $2.3\pm0.3$\\ \midrule
\multirow{2}{*}{Pos/Neg Noise} &25 & 3 & 25 & 0.1 & $30.6\pm0.4$ & $3.0\pm0.4$\\
 & 3 & 25 & 25 & 0.1 & $8.5\pm0.4$ & $2.9\pm0.5$\\ \midrule
\multirow{2}{*}{Rank} & 10 & 10 & 5 & 0.1 &  $11.3\pm0.4$ & $0.3\pm0.2$\\
 & 10 & 10 & 45 & 0.1 & $18.6\pm0.9$ & $8.7\pm1.0$\\ \midrule
\multirow{2}{*}{Density} & 10 & 10 & 25 & 0.1 & $15.3\pm0.6$ & $2.3\pm0.3$\\
 & 10 & 10 & 25 & 0.3 & $40.6\pm4.3$ & $26.9\pm6.1$\\\bottomrule
\end{tabular}
\caption{Characteristics of generated datasets. The values are aggregated over eight generated datasets, four for each combination of dimensions $(n,m)\in\{(500,1600),(800,1000)\}$. Overlap denotes the percentage of overlapping entries in relation to the region covered by all tiles together and density is the region covered by all tiles together in relation to $nm$.}\label{tbl:statSynthData}
\end{table}
\subsection{Measuring the Tiling Quality}\label{sec:MeasureExpr}
We quantify how well a computed tiling $(X,Y)$  matches the planted tiling $(X^\star,Y^\star)$ by an adaptation of the micro-averaged F-measure, known from multi-class classification tasks. In this regard, we identify a planted tile $Y^\star_{\cdot s }{X^\star_{\cdot s}}^T$  with a class which contains the tuples $(j,i)$ which indicate ones. Then, a suitable  one-to-one matching $\sigma$ between computed and planted tiles allows to compare the \textit{true} labels $Y^\star_{j s }{X^\star_{i s}}^T$  with the \textit{predicted} labels $Y_{j \sigma(s) }{X_{i \sigma(s)}}^T$.
Therewith, we can naturally calculate precision and recall and finally the $F$-measure. 

We assume w.l.o.g.\@ that $X,X^\star\in\{0,1\}^{n\times r}$ and $Y,Y^\star\in\{0,1\}^{n\times r}$, otherwise we attach zero columns to the matrices such that the dimensions match. 
We compute with the Hungarian algorithm~\citep{hungarian} a permutation $\sigma:\{1,\ldots, r\}\rightarrow \{1,\ldots,r\}$ which matches computed and planted tiles one-to-one such that $\sum_{s=1}^rF_{s,\sigma(s)}$ is maximized. The $F_{s,t}$-measure is calculated for $1\leq s,t\leq r$ by  
\[
	F_{s,t}=2\frac{\pre_{s,t}\cdot\rec_{s,t}}{\pre_{s,t}+\rec_{s,t}},
\]
where $\pre_{s,t}$ and $\rec_{s,t}$ denote precision and recall between the planted tile $(X^\star_{\cdot s},Y^\star_{\cdot s})$ and computed tile $(X_{\cdot t},Y_{\cdot t})$
\begin{align*}
	\pre_{s,t} &= \frac{|(Y^\star_{\cdot s}\circ Y_{\cdot t})(X^\star_{\cdot s}\circ X_{\cdot t})^T|}{|Y_{\cdot t}X_{\cdot t}^T|} &\quad 
    \rec_{s,t} &= \frac{|(Y^\star_{\cdot s}\circ Y_{\cdot t})(X^\star_{\cdot s}\circ X_{\cdot t})^T|}{|Y^\star_{\cdot s}{X^\star_{\cdot s}}^T|}.  
\end{align*}
Having computed the perfect matching $\sigma$, we calculate precision and recall for the obtained factorization by
\begin{align*}
\pre &= \frac{\sum_{s=1}^r|(Y^\star_{\cdot s}\circ Y_{\cdot \sigma(s)})(X^\star_{\cdot s}\circ X_{\cdot \sigma(s)})^T|}{\sum_{s=1}^r|Y_{\cdot s}X_{\cdot s}^T|} 
= \frac{|(Y^\star\circ Y_{\cdot \sigma(\cdot)})(X^\star\circ X_{\cdot \sigma(\cdot)})^T|}{|YX^T|}\\ 
\rec &= \frac{\sum_{s=1}^r|(Y^\star_{\cdot s}\circ Y_{\cdot \sigma(s)})(X^\star_{\cdot s}\circ X_{\cdot \sigma(s)})^T|}{\sum_{s=1}^r|Y_{\cdot s}^\star {X_{\cdot s}^\star}^T|} 
    = \frac{|(Y^\star\circ Y_{\cdot\sigma(\cdot)})(X^\star\circ X_{\cdot \sigma(\cdot)})^T|}{|Y^\star{X^\star}^T|}.
\end{align*}
The micro $F$-measure is defined in terms of precision and recall as defined above. This is equivalent to a convex combination of the $F_{s,\sigma(s)}$-measurements:  
\begin{align*}
	F&=2\frac{\pre\cdot\rec}{\pre+\rec}
    = \sum_{s=1}^r\frac{\left|Y^\star_{\cdot s}{X^\star_{\cdot s}}^T\right| + \left|Y_{\cdot\sigma(s)}X_{\cdot\sigma(s)}^T\right|}{\left|Y^\star {X^\star}^T\right|+\left|Y X^T\right|}F_{s,\sigma(s)}.
\end{align*}
The $F$-measure has values between zero and one. The closer it approaches one, the more accurate the obtained tiling is. The plots which display the $F$-measure indicate the average value with error bars having the length of twice the standard deviation.

We express the values of involved cost measures in relation to the empty model
\[
	\%f(X,Y,D) = \frac{f(X,Y,D)}{f(\mathbf{0}_n,\mathbf{0}_m,D)}\cdot 100.
\]
\subsection{Make some Noise}
In the following series of experiments, varying the noise, we plot the $F$-measure and the rank of the returned tiling against the percentage of noise which is added. The planted factorization has a rank of $r^\star=25$ and density parameter $q=0.1$. The noise level varies from $0\%$ to $25\%$ as displayed on the $x$-axis.  
\begin{figure}
\centering
\begin{filecontents}{Panda_n800m1000F.dat}
x y std
0 0.762 0.22
5 0.914 0.15
10 0.98 0.03
15 0.901 0.16
20 0.855 0.07
25 0.16 0.07
\end{filecontents}
\begin{filecontents}{Panda_n800m1000R.dat}
x y std
0 23.4 1.71
5 24.5 1.35
10 24.3 0.82
15 23.2 1.69
20 18.4 3.63
25 1.1 0.32
\end{filecontents}
\begin{filecontents}{superPimpLam4_n800m1000F.dat}
x y std
0 0.9993000000000001 0.0
5 0.9991999999999999 0.0
10 0.9960000000000001 0.0
15 0.9872 0.0
20 0.9692000000000001 0.0
25 0.9089 0.02
\end{filecontents}
\begin{filecontents}{superPimpLam4_n800m1000R.dat}
x y std
0 25.4 0.52
5 25.6 0.7
10 29.6 2.91
15 33.3 1.83
20 34.2 3.22
25 36.9 6.26
\end{filecontents}
\begin{filecontents}{nmfL1Lam2_n800m1000F.dat}
x y std
0 0.9190000000000002 0.02
5 0.9053000000000001 0.06
10 0.7759 0.04
15 0.8127000000000001 0.08
20 0.7596999999999999 0.06
25 0.5826 0.07
\end{filecontents}
\begin{filecontents}{nmfL1Lam2_n800m1000R.dat}
x y std
0 16.1 1.45
5 16.6 3.27
10 9.6 1.9
15 11.3 2.5
20 9.8 1.99
25 7.7 1.83
\end{filecontents}
\begin{filecontents}{Nassau_n800m1000F.dat}
x y std
0 1.0 0.0
5 0.999 0.0
10 0.962 0.01
15 0.867 0.03
20 0.432 0.1
25 0.1 0.05
\end{filecontents}
\begin{filecontents}{Nassau_n800m1000R.dat}
x y std
0 25.0 0.0
5 25.0 0.0
10 21.6 1.9
15 17.3 2.36
20 7.1 1.52
25 1.3 0.48
\end{filecontents}
\begin{filecontents}{Mdl4bmf_n800m1000F.dat}
x y std
0 1.0 0.0
5 0.988 0.0
10 0.934 0.02
15 0.824 0.03
20 0.696 0.03
25 0.48 0.07
\end{filecontents}
\begin{filecontents}{Mdl4bmf_n800m1000R.dat}
x y std
0 25.0 0.0
5 25.6 0.7
10 24.2 2.82
15 21.0 2.16
20 14.7 2.21
25 8.8 1.55
\end{filecontents}
\begin{filecontents}{Panda_n1000m800F.dat}
x y std
0 0.666 0.23
5 0.705 0.19
10 0.857 0.16
15 0.86 0.16
20 0.561 0.31
25 0.177 0.04
\end{filecontents}
\begin{filecontents}{Panda_n1000m800R.dat}
x y std
0 24.6 2.32
5 23.3 1.49
10 24.1 0.99
15 23.1 1.91
20 10.5 7.93
25 1.2 0.42
\end{filecontents}
\begin{filecontents}{superPimpLam4_n1000m800F.dat}
x y std
0 0.9997 0.0
5 0.9955999999999999 0.0
10 1.0 0.0
15 0.9984 0.0
20 0.9892 0.0
25 0.9397 0.02
\end{filecontents}
\begin{filecontents}{superPimpLam4_n1000m800R.dat}
x y std
0 25.2 0.42
5 25.9 0.74
10 25.0 0.0
15 24.9 0.32
20 24.1 1.1
25 21.5 3.6
\end{filecontents}
\begin{filecontents}{nmfL1Lam2_n1000m800F.dat}
x y std
0 0.9343 0.02
5 0.9550999999999998 0.03
10 0.8512000000000001 0.07
15 0.8643000000000001 0.04
20 0.7451000000000001 0.07
25 0.5422 0.09
\end{filecontents}
\begin{filecontents}{nmfL1Lam2_n1000m800R.dat}
x y std
0 17.9 2.42
5 19.4 2.01
10 13.3 2.11
15 13.5 1.84
20 9.1 1.85
25 7.7 1.83
\end{filecontents}
\begin{filecontents}{Nassau_n1000m800F.dat}
x y std
0 0.988 0.02
5 0.196 0.15
10 0.156 0.23
15 0.0 0.0
20 0.0 0.0
25 0.0 0.0
\end{filecontents}
\begin{filecontents}{Nassau_n1000m800R.dat}
x y std
0 25.2 1.48
5 4.0 3.16
10 2.3 2.5
15 0.0 0.0
20 0.0 0.0
25 0.0 0.0
\end{filecontents}
\begin{filecontents}{Mdl4bmf_n1000m800F.dat}
x y std
0 1.0 0.0
5 0.991 0.0
10 0.933 0.03
15 0.833 0.04
20 0.711 0.02
25 0.515 0.04
\end{filecontents}
\begin{filecontents}{Mdl4bmf_n1000m800R.dat}
x y std
0 25.0 0.0
5 25.0 0.82
10 23.8 1.14
15 21.8 2.49
20 15.8 1.93
25 8.7 0.95
\end{filecontents}
\begin{tikzpicture}[baseline]  
		\begin{axis}[
        	axis lines = left,
			title={$D\in\{0,1\}^{800\times1000}$},
            height=.33\linewidth,
            width=.5\linewidth, 
			ylabel={$F$},
			xmax=25.5, ymin=0, ymax=1.1
			]
            \addplot+[cPrimp,mark options ={cPrimp},mark repeat={3}, thick, error bars/.cd,y dir = both, y explicit]  table[x=x,y=y,y error=std] {superPimpLam4_n800m1000F.dat}; 
            \addplot+[cPanpal,mark options ={cPanpal},mark repeat={3}, thick, error bars/.cd,y dir = both, y explicit]  table[x=x,y=y,y error=std] {nmfL1Lam2_n800m1000F.dat};
            \addplot+[cPanda,dashed,mark options ={cPanda},mark repeat={3}, thick, error bars/.cd,y dir = both, y explicit]  table[x=x,y=y,y error=std] {Panda_n800m1000F.dat}; 
            \addplot+[cNassau,dashed,mark options ={cNassau},mark repeat={3}, thick, error bars/.cd,y dir = both, y explicit]  table[x=x,y=y,y error=std] {Nassau_n800m1000F.dat}; 
           \addplot+[cMdl4bmf,dashed,mark options ={cMdl4bmf},mark repeat={3}, thick, error bars/.cd,y dir = both, y explicit]  table[x=x,y=y,y error=std] {Mdl4bmf_n800m1000F.dat};
		\end{axis} 
\end{tikzpicture}
\begin{tikzpicture}[baseline]  
		\begin{axis}[
        	title={$D^T\in\{0,1\}^{1000\times800}$},
        	axis lines = left,
            height=.33\linewidth,
            width=.5\linewidth,
			xmax=25.5, ymin=0, ymax=1.1
			]
            \addplot+[cPrimp,mark options ={cPrimp},mark repeat={3}, thick, error bars/.cd,y dir = both, y explicit]  table[x=x,y=y,y error=std] {superPimpLam4_n1000m800F.dat}; 
            \addplot+[cPanpal,mark options ={cPanpal},mark repeat={3}, thick, error bars/.cd,y dir = both, y explicit]  table[x=x,y=y,y error=std] {nmfL1Lam2_n1000m800F.dat};
            \addplot+[cPanda,dashed,mark options ={cPanda},mark repeat={3}, thick, error bars/.cd,y dir = both, y explicit]  table[x=x,y=y,y error=std] {Panda_n1000m800F.dat};
            \addplot+[cNassau,dashed,mark options ={cNassau},mark repeat={3}, thick, error bars/.cd,y dir = both, y explicit]  table[x=x,y=y,y error=std] {Nassau_n1000m800F.dat}; 
           \addplot+[cMdl4bmf,dashed,mark options ={cMdl4bmf},mark repeat={3}, thick, error bars/.cd,y dir = both, y explicit]  table[x=x,y=y,y error=std] {Mdl4bmf_n1000m800F.dat};
		\end{axis} 
\end{tikzpicture}
\begin{tikzpicture}[baseline]  
		\begin{axis}[
        	axis lines = left,
            height=.33\linewidth,
            width=.5\linewidth,
			xlabel={$p_+=p_-\ [\%]$},     
			ylabel={$r(X,Y)$},
			xmax=25.5, xmin=0, ymin=0, ymax=40.2
			]
            \addplot+[cPrimp,mark options ={cPrimp},mark repeat={3}, thick, error bars/.cd,y dir = both, y explicit]  table[x=x,y=y,y error=std] {superPimpLam4_n800m1000R.dat}; 
            \addplot+[cPanpal,mark options ={cPanpal},mark repeat={3}, thick, error bars/.cd,y dir = both, y explicit]  table[x=x,y=y,y error=std] {nmfL1Lam2_n800m1000R.dat};
            \addplot+[cPanda,dashed,mark options ={cPanda},mark repeat={3}, thick, error bars/.cd,y dir = both, y explicit]  table[x=x,y=y,y error=std] {Panda_n800m1000R.dat};
            \addplot+[cNassau,dashed,mark options ={cNassau},mark repeat={3}, thick, error bars/.cd,y dir = both, y explicit]  table[x=x,y=y,y error=std] {Nassau_n800m1000R.dat}; 
           \addplot+[cMdl4bmf,dashed,mark options ={cMdl4bmf},mark repeat={3}, thick, error bars/.cd,y dir = both, y explicit]  table[x=x,y=y,y error=std] {Mdl4bmf_n800m1000R.dat};
		\end{axis} 
\end{tikzpicture}
\begin{tikzpicture}[baseline]  
		\begin{axis}[
        	axis lines = left,
            height=.33\linewidth,
            width=.5\linewidth,
			xlabel={$p_+=p_-\ [\%]$},    
			xmax=25.5, xmin=0, ymin=0, ymax=40.2,
            legend columns =-1,
            legend entries={Primp,Panpal,Panda+,Nassau,Mdl4bmf},
            legend to name=named
			]
            \addplot+[cPrimp,mark options ={cPrimp},mark repeat={3}, thick, error bars/.cd,y dir = both, y explicit]  table[x=x,y=y,y error=std] {superPimpLam4_n1000m800R.dat}; 
            \addplot+[cPanpal,mark options ={cPanpal},mark repeat={3}, thick, error bars/.cd,y dir = both, y explicit]  table[x=x,y=y,y error=std] {nmfL1Lam2_n1000m800R.dat};
            \addplot+[cPanda,dashed,mark options ={cPanda},mark repeat={3}, thick, error bars/.cd,y dir = both, y explicit]  table[x=x,y=y,y error=std] {Panda_n1000m800R.dat};
            \addplot+[cNassau,dashed,mark options ={cNassau},mark repeat={3}, thick, error bars/.cd,y dir = both, y explicit]  table[x=x,y=y,y error=std] {Nassau_n1000m800R.dat}; 
           \addplot+[cMdl4bmf,dashed,mark options ={cMdl4bmf},mark repeat={3}, thick, error bars/.cd,y dir = both, y explicit]  table[x=x,y=y,y error=std] {Mdl4bmf_n1000m800R.dat};
		\end{axis} 
\end{tikzpicture}
\\

\pgfplotslegendfromname{named}
\caption{Variation of uniform noise for $800\times 1000$ and $1000\times 800$ dimensional data. Comparison of $F$-measures (the higher the better) and the estimated rank of the calculated tiling (the closer to 25 the better) for varying levels of noise, i.e., $p_+=p_-$ is indicated on the x-axis (best viewed in color).}
\label{fig:noise810}
\end{figure}
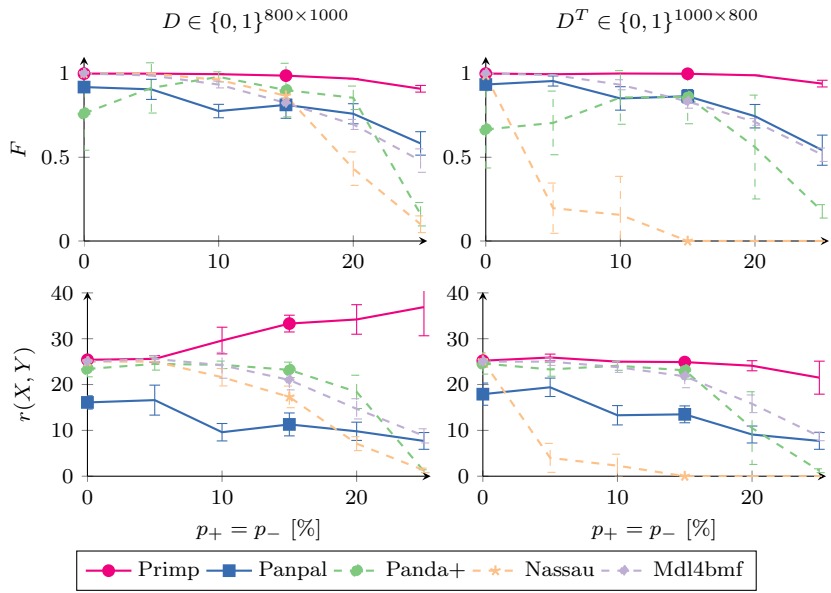
\begin{figure}
\centering
\begin{filecontents}{Panda_n1600m500F.dat}
x y std
0 0.535 0.18
5 0.582 0.17
10 0.653 0.19
15 0.719 0.24
20 0.523 0.32
25 0.121 0.03
\end{filecontents}
\begin{filecontents}{Panda_n1600m500R.dat}
x y std
0 23.5 1.43
5 24.8 1.14
10 24.5 1.43
15 20.9 3.6
20 8.7 6.22
25 1.0 0.0
\end{filecontents}
\begin{filecontents}{superPimpLam4_n1600m500F.dat}
x y std
0 0.9987 0.0
5 0.9993000000000001 0.0
10 0.9979000000000001 0.0
15 0.9914999999999999 0.01
20 0.9677999999999999 0.02
25 0.8629 0.07
\end{filecontents}
\begin{filecontents}{superPimpLam4_n1600m500R.dat}
x y std
0 25.7 0.67
5 25.2 0.63
10 25.2 0.42
15 24.0 0.82
20 21.0 2.62
25 15.8 3.12
\end{filecontents}
\begin{filecontents}{nmfL1Lam2_n1600m500F.dat}
x y std
0 0.9538 0.02
5 0.9289 0.07
10 0.8401 0.05
15 0.7579999999999999 0.17
20 0.58 0.16
25 0.24450000000000002 0.17
\end{filecontents}
\begin{filecontents}{nmfL1Lam2_n1600m500R.dat}
x y std
0 19.4 2.55
5 18.5 3.6
10 12.8 1.69
15 10.7 2.06
20 8.1 0.74
25 5.0 2.16
\end{filecontents}
\begin{filecontents}{Nassau_n1600m500F.dat}
x y std
0 0.0 0.0
5 0.0 0.0
10 0.0 0.0
15 0.0 0.0
20 0.0 0.0
25 0.0 0.0
\end{filecontents}
\begin{filecontents}{Nassau_n1600m500R.dat}
x y std
0 0.0 0.0
5 0.0 0.0
10 0.0 0.0
15 0.0 0.0
20 0.0 0.0
25 0.0 0.0
\end{filecontents}
\begin{filecontents}{Mdl4bmf_n1600m500F.dat}
x y std
0 1.0 0.0
5 0.984 0.01
10 0.926 0.03
15 0.844 0.02
20 0.722 0.05
25 0.541 0.06
\end{filecontents}
\begin{filecontents}{Mdl4bmf_n1600m500R.dat}
x y std
0 25.0 0.0
5 24.9 0.88
10 22.1 1.45
15 16.7 3.53
20 13.3 2.91
25 9.1 1.37
\end{filecontents}
\begin{filecontents}{Panda_n500m1600F.dat}
x y std
0 0.814 0.21
5 0.776 0.23
10 0.906 0.16
15 0.896 0.13
20 0.822 0.1
25 0.252 0.11
\end{filecontents}
\begin{filecontents}{Panda_n500m1600R.dat}
x y std
0 23.7 2.21
5 24.0 1.33
10 24.1 0.74
15 23.5 1.84
20 17.9 4.04
25 2.0 1.25
\end{filecontents}
\begin{filecontents}{superPimpLam4_n500m1600F.dat}
x y std
0 0.9994 0.0
5 0.9999 0.0
10 0.9993000000000001 0.0
15 0.9957 0.0
20 0.9802000000000002 0.01
25 0.9224 0.02
\end{filecontents}
\begin{filecontents}{superPimpLam4_n500m1600R.dat}
x y std
0 24.8 0.42
5 25.8 0.63
10 25.0 0.0
15 24.7 0.48
20 23.4 1.17
25 20.5 3.17
\end{filecontents}
\begin{filecontents}{nmfL1Lam2_n500m1600F.dat}
x y std
0 0.8344999999999999 0.06
5 0.7868 0.06
10 0.7615000000000001 0.05
15 0.7421 0.08
20 0.6948 0.12
25 0.41529999999999995 0.11
\end{filecontents}
\begin{filecontents}{nmfL1Lam2_n500m1600R.dat}
x y std
0 12.4 1.84
5 9.8 1.32
10 9.0 0.0
15 8.7 0.67
20 7.9 1.2
25 5.2 1.48
\end{filecontents}
\begin{filecontents}{Nassau_n500m1600F.dat}
x y std
0 0.887 0.19
5 0.589 0.16
10 0.531 0.07
15 0.616 0.16
20 0.458 0.08
25 0.176 0.09
\end{filecontents}
\begin{filecontents}{Nassau_n500m1600R.dat}
x y std
0 25.6 1.07
5 21.5 2.59
10 17.9 2.81
15 15.2 2.62
20 8.1 1.73
25 2.4 1.35
\end{filecontents}
\begin{filecontents}{Mdl4bmf_n500m1600F.dat}
x y std
0 1.0 0.0
5 0.989 0.0
10 0.932 0.02
15 0.838 0.03
20 0.673 0.05
25 0.444 0.07
\end{filecontents}
\begin{filecontents}{Mdl4bmf_n500m1600R.dat}
x y std
0 25.0 0.0
5 25.7 0.48
10 22.2 1.81
15 19.4 2.17
20 13.4 1.51
25 8.0 2.4
\end{filecontents}
\begin{tikzpicture}[baseline]  
		\begin{axis}[
        	axis lines = left,
			title={$D\in\{0,1\}^{500\times1600}$},
            height=.33\linewidth,
            width=.5\linewidth, 
			ylabel={$F$},
			xmax=25.5, ymin=0, ymax=1.1
			]
            \addplot+[cPrimp,mark options ={cPrimp},mark repeat={3}, thick, error bars/.cd,y dir = both, y explicit]  table[x=x,y=y,y error=std] {superPimpLam4_n500m1600F.dat}; 
            \addplot+[cPanpal,mark options ={cPanpal},mark repeat={3}, thick, error bars/.cd,y dir = both, y explicit]  table[x=x,y=y,y error=std] {nmfL1Lam2_n500m1600F.dat};
            \addplot+[cPanda,dashed,mark options ={cPanda},mark repeat={3}, thick, error bars/.cd,y dir = both, y explicit]  table[x=x,y=y,y error=std] {Panda_n500m1600F.dat}; 
            \addplot+[cNassau,dashed,mark options ={cNassau},mark repeat={3}, thick, error bars/.cd,y dir = both, y explicit]  table[x=x,y=y,y error=std] {Nassau_n500m1600F.dat}; 
           \addplot+[cMdl4bmf,dashed,mark options ={cMdl4bmf},mark repeat={3}, thick, error bars/.cd,y dir = both, y explicit]  table[x=x,y=y,y error=std] {Mdl4bmf_n500m1600F.dat};
		\end{axis} 
\end{tikzpicture}
\begin{tikzpicture}[baseline]  
		\begin{axis}[
        	title={$D^T\in\{0,1\}^{1600\times500}$},
        	axis lines = left,
            height=.33\linewidth,
            width=.5\linewidth,
			xmax=25.5, ymin=0, ymax=1.1
			]
            \addplot+[cPrimp,mark options ={cPrimp},mark repeat={3}, thick, error bars/.cd,y dir = both, y explicit]  table[x=x,y=y,y error=std] {superPimpLam4_n1600m500F.dat}; 
            \addplot+[cPanpal,mark options ={cPanpal},mark repeat={3}, thick, error bars/.cd,y dir = both, y explicit]  table[x=x,y=y,y error=std] {nmfL1Lam2_n1600m500F.dat};
            \addplot+[cPanda,dashed,mark options ={cPanda},mark repeat={3}, thick, error bars/.cd,y dir = both, y explicit]  table[x=x,y=y,y error=std] {Panda_n1600m500F.dat};
            \addplot+[cNassau,dashed,mark options ={cNassau},mark repeat={3}, thick, error bars/.cd,y dir = both, y explicit]  table[x=x,y=y,y error=std] {Nassau_n1600m500F.dat}; 
           \addplot+[cMdl4bmf,dashed,mark options ={cMdl4bmf},mark repeat={3}, thick, error bars/.cd,y dir = both, y explicit]  table[x=x,y=y,y error=std] {Mdl4bmf_n1600m500F.dat};
		\end{axis} 
\end{tikzpicture}
\begin{tikzpicture}[baseline]  
		\begin{axis}[
        	axis lines = left,
            height=.33\linewidth,
            width=.5\linewidth,
			xlabel={$p_+=p_-\ [\%]$},     
			ylabel={$r(X,Y)$},
			xmax=25.5, xmin=0, ymin=0, ymax=40.2
			]
            \addplot+[cPrimp,mark options ={cPrimp},mark repeat={3}, thick, error bars/.cd,y dir = both, y explicit]  table[x=x,y=y,y error=std] {superPimpLam4_n500m1600R.dat}; 
            \addplot+[cPanpal,mark options ={cPanpal},mark repeat={3}, thick, error bars/.cd,y dir = both, y explicit]  table[x=x,y=y,y error=std] {nmfL1Lam2_n500m1600R.dat};
            \addplot+[cPanda,dashed,mark options ={cPanda},mark repeat={3}, thick, error bars/.cd,y dir = both, y explicit]  table[x=x,y=y,y error=std] {Panda_n500m1600R.dat};
            \addplot+[cNassau,dashed,mark options ={cNassau},mark repeat={3}, thick, error bars/.cd,y dir = both, y explicit]  table[x=x,y=y,y error=std] {Nassau_n500m1600R.dat}; 
           \addplot+[cMdl4bmf,dashed,mark options ={cMdl4bmf},mark repeat={3}, thick, error bars/.cd,y dir = both, y explicit]  table[x=x,y=y,y error=std] {Mdl4bmf_n500m1600R.dat};
		\end{axis} 
\end{tikzpicture}
\begin{tikzpicture}[baseline]  
		\begin{axis}[
        	axis lines = left,
            height=.33\linewidth,
            width=.5\linewidth,
			xlabel={$p_+=p_-\ [\%]$},    
			xmax=25.5, xmin=0, ymin=0, ymax=40.2,
            legend columns =-1,
            legend entries={Primp,Panpal,Panda+,Nassau,Mdl4bmf},
            legend to name=named
			]
            \addplot+[cPrimp,mark options ={cPrimp},mark repeat={3}, thick, error bars/.cd,y dir = both, y explicit]  table[x=x,y=y,y error=std] {superPimpLam4_n1600m500R.dat}; 
            \addplot+[cPanpal,mark options ={cPanpal},mark repeat={3}, thick, error bars/.cd,y dir = both, y explicit]  table[x=x,y=y,y error=std] {nmfL1Lam2_n1600m500R.dat};
            \addplot+[cPanda,dashed,mark options ={cPanda},mark repeat={3}, thick, error bars/.cd,y dir = both, y explicit]  table[x=x,y=y,y error=std] {Panda_n1600m500R.dat};
            \addplot+[cNassau,dashed,mark options ={cNassau},mark repeat={3}, thick, error bars/.cd,y dir = both, y explicit]  table[x=x,y=y,y error=std] {Nassau_n1600m500R.dat}; 
           \addplot+[cMdl4bmf,dashed,mark options ={cMdl4bmf},mark repeat={3}, thick, error bars/.cd,y dir = both, y explicit]  table[x=x,y=y,y error=std] {Mdl4bmf_n1600m500R.dat};
		\end{axis} 
\end{tikzpicture}
\\

\pgfplotslegendfromname{named}
\caption{Variation of uniform noise for $500\times 1600$ and $1600\times 500$ dimensional data. Comparison of $F$-measures (the higher the better) and the estimated rank of the calculated tiling (the closer to 25 the better) for varying levels of noise, i.e., $p_+=p_-$ is indicated on the x-axis (best viewed in color).}
\label{fig:noise516}
\end{figure}

First, we compare the effects of the matrix dimensions and aggregate results over 10 generated matrices with dimensions $800\times 1000$ and $500\times 1600$ together with their transpose, as depicted in Figs.~\ref{fig:noise810} and \ref{fig:noise516}. Comparing the results for a data matrix and its transpose is particularly interesting for the algorithm \textsc{Primp}. Since it applies different regularizations on $X$ and $Y$, we want to asses how this affects the results of \textsc{Primp} in practice. The remaining algorithms minimize an objective which is invariant to a transposition of the input matrix. It is desirable that this is also reflected in practice.

We observe from Figs.~\ref{fig:noise810} and \ref{fig:noise516} that the algorithms likely return fewer tiles the more the noise increases. This culminates in the replication of almost none of the tiles at highest noise level for the algorithms \textsc{Panda+} and \textsc{Nassau}. \textsc{Nassau} particularly strongly underestimates the rank if the data matrix is transposed, i.e., $n>m$. In this case, \textsc{Nassau} returns close or equal to zero tiles, even if the noise is low. \textsc{Panda+} yields correct rank estimations up to a noise of $15\%$, but its fluctuating $F$-measure indicates that planted tiles are not correctly recovered after all. In particular, its $F$-values  differ from the untransposed to the transposed case even if the rank estimations are similar and close to $r^\star$. \textsc{Mdl4bmf} shows a robust behavior towards a transposition of the the input matrix. Its suitable rank estimations up to a noise of $15\%$ are mirrored in a high $F$-measure. \textsc{Panpal} consistently underestimates the rank, yet can achieve comparatively high $F$-measures. Its results exhibit minor deviations from the untransposed to the transposed case. Recognizable differences occur when $n$ and $m$ differ more widely (Fig.~\ref{fig:noise516}) and the noise level is low. Under these circumstances, \textsc{Panpal} yields  higher rank estimations if the matrix is transposed. We note, that  the code of \textsc{PAL-Tiling} and therewith also the code of \textsc{Panpal} inhibits only one distinction between  $X$ and $Y$, which is the order in which gradient steps are invoked. Whether this actually influences the output of the algorithm is an interesting question but it is beyond the scope of this paper.
\textsc{Primp} is characterized by overall high values in the $F$-measure. It has a tendency to estimate the rank higher in the untransposed case, i.e., if $m>n$. This is particularly notably if the  matrices are almost square (Fig.~\ref{fig:noise810}). This suggests that the cost measure favors modeling tiles having fewer items and more transactions. That aside, the overall high $F$-measure shows that additionally modeled tiles cover only a small area in comparison to planted ones. 

\begin{figure}
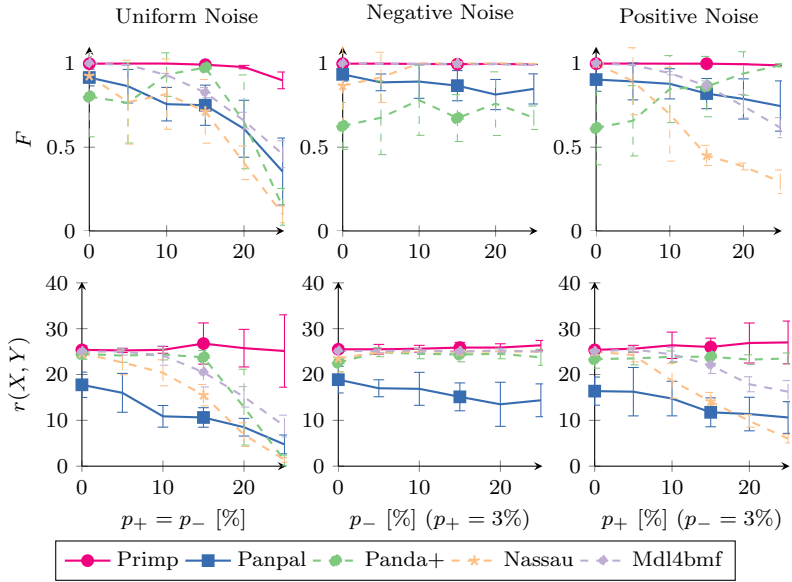

\centering
\include{pics/synthNoise}
\caption{Variation of uniform, positive and negative noise. Comparison of $F$-measures (the higher the better) and the estimated rank of the calculated tiling (the closer to 25 the better) for varying levels of noise, i.e., $p_+$ and $p_-$ are indicated on the x-axis (best viewed in color).}
\label{fig:noise}
\end{figure}
In Fig.~\ref{fig:noise} we contrast varying distributions of positive and negative noise ($p_+$ and $p_-$). From here on, we aggregate results over eight matrices, two for each of the considered matrix dimensions. However, we make an exception for \textsc{Nassau} and transpose the input matrix if $n>m$, as \textsc{Nassau} tends to return zero tiles in this case.

On the left of Fig.~\ref{fig:noise}, we show the aggregated results when varying uniform noise, as discussed for individual dimensions before. All algorithms except for \textsc{Primp} tend to return fewer tiles with increasing noise. Despite correct rank estimations, \textsc{Panda+} displays volatile $F$-measure values. \textsc{Primp}'s rank estimations are correct in the mean, but variance is quite high.

The middle plot depicts variations of negative noise while positive noise is fixed to $3\%$. In this setting, the algorithms \textsc{Primp}, \textsc{Mdl4bmf} and \textsc{Nassau} are capable of identifying the planted tiling for all noise levels. The suitability of \textsc{Nassau} in the prevalence of negative noise corresponds to the experimental evaluation by \cite{nassau15}. \textsc{Mdl4bmf} and \textsc{Primp} yield  equally appropriate results in this experiment.  The approximations of \textsc{Panda+} and \textsc{Panpal} are notably less accurate. Although \textsc{Panda+} correctly estimates the rank around 25 and \textsc{Panpal}'s estimations lie between 10 and 20, \textsc{Panpal} achieves higher $F$-measures than \textsc{Panda+}.  

The plots on the right of Fig.~\ref{fig:noise} show the impact of variations on the positive noise, fixing the negative noise to $3\%$. Here, \textsc{Nassau}, \textsc{Mdl4bmf} and \textsc{Panpal} tend to underestimate the rank the more the noise increases, similarly to but not as drastic as in experiments with uniformly distributed noise. \textsc{Panda+} shows a poor recovery of planted coherent tiles at $0\%$  positive noise, but its $F$-value peculiarly increases with increasing positive noise.  \textsc{Primp} robustly identifies the true tiling for all levels of noise, yet inhibits a higher variance from the mean of the rank estimations.   
\subsection{Variation of Tiling Generation Parameters}
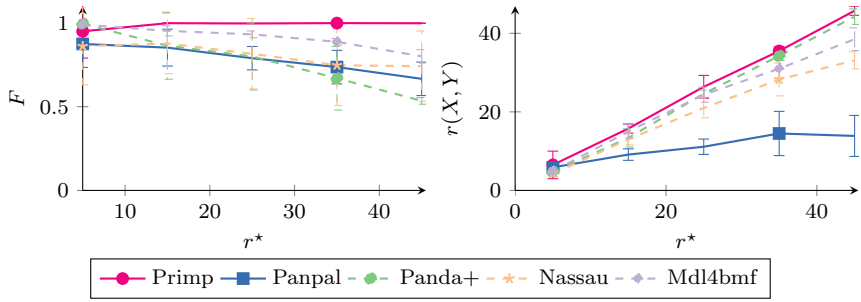
\begin{figure}
\centering
\begin{filecontents}{superPimpLam4F.dat}
x y std
5 0.95 0.16
15 0.998375 0.0
25 0.9967499999999999 0.0
35 0.998875 0.0
45 0.998 0.0
\end{filecontents}
\begin{filecontents}{superPimpLam4R.dat}
x y std
5 6.5 3.51
15 15.75 1.16
25 26.375 2.92
35 35.5 1.07
45 45.75 1.16
\end{filecontents}
\begin{filecontents}{nmfL1Lam2F.dat}
x y std
5 0.8744999999999998 0.14
15 0.8525 0.11
25 0.7893749999999999 0.07
35 0.736375 0.1
45 0.6662500000000001 0.1
\end{filecontents}
\begin{filecontents}{nmfL1Lam2R.dat}
x y std
5 5.875 0.99
15 9.125 1.46
25 11.125 1.96
35 14.5 5.63
45 13.875 5.22
\end{filecontents}
\begin{filecontents}{PandaF.dat}
x y std
5 0.99525 0.0
15 0.864 0.2
25 0.799875 0.2
35 0.669625 0.19
45 0.534625 0.02
\end{filecontents}
\begin{filecontents}{PandaR.dat}
x y std
5 4.75 0.46
15 13.375 2.33
25 24.625 0.92
35 34.25 1.16
45 44.25 1.98
\end{filecontents}
\begin{filecontents}{NassauF.dat}
x y std
5 0.8606249999999999 0.23
15 0.87725 0.18
25 0.8156250000000002 0.21
35 0.747125 0.24
45 0.741875 0.21
\end{filecontents}
\begin{filecontents}{NassauR.dat}
x y std
5 4.25 0.71
15 13.0 1.41
25 21.0 2.51
35 28.25 4.17
45 33.125 2.17
\end{filecontents}
\begin{filecontents}{Mdl4bmfF.dat}
x y std
5 0.9893749999999999 0.01
15 0.95275 0.03
25 0.9316249999999999 0.02
35 0.88725 0.02
45 0.800375 0.04
\end{filecontents}
\begin{filecontents}{Mdl4bmfR.dat}
x y std
5 4.875 0.35
15 15.0 1.69
25 24.375 1.92
35 31.0 2.33
45 38.5 2.88
\end{filecontents}
\begin{tikzpicture}[baseline]  
		\begin{axis}[
        	axis lines = left,
            height=.33\linewidth,
            width=.5\linewidth, 
            xlabel={$r^\star$},   
			ylabel={$F$},
			xmax=45.5, ymin=0, ymax=1.1
			]
            \addplot+[cPrimp,mark options ={cPrimp},mark repeat={3}, thick, error bars/.cd,y dir = both, y explicit]  table[x=x,y=y,y error=std] {superPimpLam4F.dat}; 
            \addplot+[cPanpal,mark options ={cPanpal},mark repeat={3}, thick, error bars/.cd,y dir = both, y explicit]  table[x=x,y=y,y error=std] {nmfL1Lam2F.dat};
            \addplot+[cPanda,dashed,mark options ={cPanda},mark repeat={3}, thick, error bars/.cd,y dir = both, y explicit]  table[x=x,y=y,y error=std] {PandaF.dat}; 
            \addplot+[cNassau,dashed,mark options ={cNassau},mark repeat={3}, thick, error bars/.cd,y dir = both, y explicit]  table[x=x,y=y,y error=std] {NassauF.dat}; 
           \addplot+[cMdl4bmf,dashed,mark options ={cMdl4bmf},mark repeat={3}, thick, error bars/.cd,y dir = both, y explicit]  table[x=x,y=y,y error=std] {Mdl4bmfF.dat};
		\end{axis} 
\end{tikzpicture}
\begin{tikzpicture}[baseline]  
		\begin{axis}[
        	axis lines = left,
            height=.33\linewidth,
            width=.5\linewidth,
			xlabel={$r^\star$},     
			ylabel={$r(X,Y)$},
			xmax=45.5, xmin=0, ymin=0,
            legend columns =-1,
            legend entries={Primp,Panpal,Panda+,Nassau,Mdl4bmf},
             legend to name=named
			]
            \addplot+[cPrimp,mark options ={cPrimp},mark repeat={3}, thick, error bars/.cd,y dir = both, y explicit]  table[x=x,y=y,y error=std] {superPimpLam4R.dat}; 
            \addplot+[cPanpal,mark options ={cPanpal},mark repeat={3}, thick, error bars/.cd,y dir = both, y explicit]  table[x=x,y=y,y error=std] {nmfL1Lam2R.dat};
            \addplot+[cPanda,dashed,mark options ={cPanda},mark repeat={3}, thick, error bars/.cd,y dir = both, y explicit]  table[x=x,y=y,y error=std] {PandaR.dat};
            \addplot+[cNassau,dashed,mark options ={cNassau},mark repeat={3}, thick, error bars/.cd,y dir = both, y explicit]  table[x=x,y=y,y error=std] {NassauR.dat}; 
           \addplot+[cMdl4bmf,dashed,mark options ={cMdl4bmf},mark repeat={3}, thick, error bars/.cd,y dir = both, y explicit]  table[x=x,y=y,y error=std] {Mdl4bmfR.dat};
		\end{axis} 
\end{tikzpicture}
\\

\pgfplotslegendfromname{named}
\caption{Variation of the rank $r^\star\in\{5,\ldots,45\}$ of the planted tiling. Comparison of $F$-measures (the higher the better) and estimated rank (the closer to the identity function the better) of calculated tilings for uniform noise of $p_+=p_-=10\%$ (best viewed in color).}
\label{fig:rank}
\end{figure}
We present effects on variations from the rank in Fig.~\ref{fig:rank} whereby the default parameters of $10\%$ uniform noise and $q=0.1$ apply. We observe a hierarchy of algorithms in the tendency to underestimate the rank throughout all values of $r^\star$. By far the lowest rank estimations are returned by \textsc{Panpal}, followed by \textsc{Nassau}, \textsc{Mdl4bmf}, \textsc{Panda+} and \textsc{Primp}. \textsc{Panda+} and \textsc{Primp} consistently return accurate rank estimations. It is remarkable that for ranks higher than 30, \textsc{Panpal} obtains higher F values than \textsc{Panda+} despite of modeling only a fraction of the planted tiles. \textsc{Primp} provides a steadily accurate recovery of planted tiles.

\begin{figure}
\centering
\begin{filecontents}{superPimpLam4F.dat}
x y std
0.1 0.9977500000000001 0.0
0.15 0.995875 0.0
0.2 0.9536250000000002 0.06
0.25 0.8011250000000001 0.04
0.3 0.702375 0.11
\end{filecontents}
\begin{filecontents}{superPimpLam4R.dat}
x y std
0.1 26.25 2.19
0.15 26.625 3.25
0.2 29.875 6.38
0.25 48.375 5.48
0.3 73.375 25.12
\end{filecontents}
\begin{filecontents}{nmfL1Lam2F.dat}
x y std
0.1 0.81725 0.08
0.15 0.9237499999999998 0.06
0.2 0.961 0.02
0.25 0.9727500000000001 0.02
0.3 0.9198749999999999 0.11
\end{filecontents}
\begin{filecontents}{nmfL1Lam2R.dat}
x y std
0.1 12.25 3.58
0.15 16.125 4.05
0.2 18.625 3.2
0.25 20.625 3.54
0.3 20.375 3.7
\end{filecontents}
\begin{filecontents}{PandaF.dat}
x y std
0.1 0.83175 0.21
0.15 0.490375 0.06
0.2 0.48074999999999996 0.05
0.25 0.510875 0.04
0.3 0.508375 0.05
\end{filecontents}
\begin{filecontents}{PandaR.dat}
x y std
0.1 24.125 0.64
0.15 25.0 1.69
0.2 26.375 1.41
0.25 29.0 2.67
0.3 28.0 2.83
\end{filecontents}
\begin{filecontents}{NassauF.dat}
x y std
0.1 0.787 0.22
0.15 0.784875 0.22
0.2 0.8200000000000001 0.19
0.25 0.87475 0.17
0.3 0.9035 0.09
\end{filecontents}
\begin{filecontents}{NassauR.dat}
x y std
0.1 20.875 2.23
0.15 22.125 1.89
0.2 24.25 0.71
0.25 24.0 1.6
0.3 24.5 1.07
\end{filecontents}
\begin{filecontents}{Mdl4bmfF.dat}
x y std
0.1 0.9381250000000001 0.02
0.15 0.9061250000000001 0.02
0.2 0.7769999999999999 0.05
0.25 0.6609999999999999 0.05
0.3 0.5883750000000001 0.04
\end{filecontents}
\begin{filecontents}{Mdl4bmfR.dat}
x y std
0.1 23.375 1.85
0.15 23.0 2.0
0.2 26.125 5.11
0.25 22.375 1.69
0.3 23.125 2.95
\end{filecontents}
\begin{tikzpicture}[baseline]  
		\begin{axis}[
        	axis lines = left,
            height=.33\linewidth,
            width=.5\linewidth, 
            xlabel={$q$},
			ylabel={$F$},
			xmax=0.31, ymin=0, ymax=1.1
			]
            \addplot+[cPrimp,mark options ={cPrimp},mark repeat={3}, thick, error bars/.cd,y dir = both, y explicit]  table[x=x,y=y,y error=std] {superPimpLam4F.dat}; 
            \addplot+[cPanpal,mark options ={cPanpal},mark repeat={3}, thick, error bars/.cd,y dir = both, y explicit]  table[x=x,y=y,y error=std] {nmfL1Lam2F.dat};
            \addplot+[cPanda,dashed,mark options ={cPanda},mark repeat={3}, thick, error bars/.cd,y dir = both, y explicit]  table[x=x,y=y,y error=std] {PandaF.dat}; 
            \addplot+[cNassau,dashed,mark options ={cNassau},mark repeat={3}, thick, error bars/.cd,y dir = both, y explicit]  table[x=x,y=y,y error=std] {NassauF.dat}; 
           \addplot+[cMdl4bmf,dashed,mark options ={cMdl4bmf},mark repeat={3}, thick, error bars/.cd,y dir = both, y explicit]  table[x=x,y=y,y error=std] {Mdl4bmfF.dat};
		\end{axis} 
\end{tikzpicture}
\begin{tikzpicture}[baseline]  
		\begin{axis}[
        	axis lines = left,
            height=.33\linewidth,
            width=.5\linewidth,
			xlabel={$q$},     
			ylabel={$r(X,Y)$},
			xmax=0.31,  ymin=0,
            legend columns =-1,
             legend entries={Primp,Panpal,Panda+,Nassau,Mdl4bmf},
             legend to name=named
			]
            \addplot+[cPrimp,mark options ={cPrimp},mark repeat={3}, thick, error bars/.cd,y dir = both, y explicit]  table[x=x,y=y,y error=std] {superPimpLam4R.dat}; 
            \addplot+[cPanpal,mark options ={cPanpal},mark repeat={3}, thick, error bars/.cd,y dir = both, y explicit]  table[x=x,y=y,y error=std] {nmfL1Lam2R.dat};
            \addplot+[cPanda,dashed,mark options ={cPanda},mark repeat={3}, thick, error bars/.cd,y dir = both, y explicit]  table[x=x,y=y,y error=std] {PandaR.dat};
            \addplot+[cNassau,dashed,mark options ={cNassau},mark repeat={3}, thick, error bars/.cd,y dir = both, y explicit]  table[x=x,y=y,y error=std] {NassauR.dat}; 
           \addplot+[cMdl4bmf,dashed,mark options ={cMdl4bmf},mark repeat={3}, thick, error bars/.cd,y dir = both, y explicit]  table[x=x,y=y,y error=std] {Mdl4bmfR.dat};
		\end{axis} 
\end{tikzpicture}
 \\

\pgfplotslegendfromname{named}
\caption{Variation of density and overlap influencing parameter $q\in[0.1,\ldots,0.3]$. Comparison of $F$-measures (the higher the better) and the estimated rank of the calculated tiling (the closer to 25 the better) for uniform noise of $p_+=p_-=10\%$ (best viewed in color).}
\label{fig:density}
\end{figure}
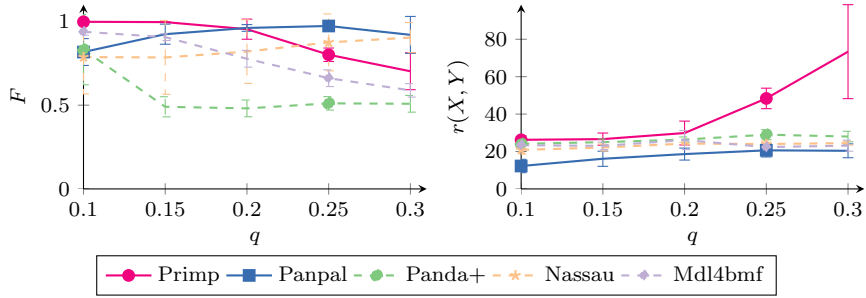
In Fig.~\ref{fig:density} we vary the density and overlap influencing parameter $q$, which determines the maximum density of a column vector in $X$ and  $Y$.  We observe two classes of algorithms. The first class, consisting of \textsc{Primp}, \textsc{Panda+} and \textsc{Mdl4bmf} decreases in the $F$-measure with increasing $q$. In this class, \textsc{Primp} always retrieves highest $F$-values. In return, the $F$-values from the second class of \textsc{Panpal} and \textsc{Nassau} increase with $q$. Here, \textsc{Panpal} bounds the $F$-values of \textsc{Nassau} from above. Correspondingly, \textsc{Primp} and \textsc{Panpal} have a \textit{break-even-point} at $q=0.2$. From this value on, \textsc{Primp} starts to considerably overestimate the rank while \textsc{Panpal}'s tendency to underestimate the rank, decreases. For $q\geq0.2$, \textsc{Panpal} estimates the rank close to 20 in average. That is, five planted tiles are not modeled in average. Still, the $F$-measure indicates that for the denser and more overlapping datasets, \textsc{Panpal} most accurately discovers the planted tiles. 
\subsection{Sensitivity to the Rank Increment}
\label{sec:expRInc}
\begin{figure}
\centering
\begin{filecontents}{superPimpLam4F.dat}
x y std
0 0.998625 0.0
5 0.999375 0.0
10 0.9994999999999999 0.0
15 0.9926249999999999 0.01
20 0.978625 0.01
25 0.899 0.05
\end{filecontents}
\begin{filecontents}{superPimpLam4R.dat}
x y std
0 25.375 0.92
5 25.25 0.46
10 25.375 0.74
15 26.75 4.5
20 25.75 4.1
25 25.125 7.9
\end{filecontents}
\begin{filecontents}{nmfL1Lam2F.dat}
x y std
0 0.9161250000000001 0.05
5 0.864375 0.1
10 0.7569999999999999 0.1
15 0.7506249999999999 0.12
20 0.610125 0.17
25 0.35475 0.2
\end{filecontents}
\begin{filecontents}{nmfL1Lam2R.dat}
x y std
0 17.75 2.76
5 16.0 4.24
10 10.875 2.36
15 10.625 2.13
20 8.5 1.93
25 4.75 2.05
\end{filecontents}
\begin{filecontents}{superPimpInc5F.dat}
x y std
0 0.997625 0.0
5 1.0 0.0
10 0.9986249999999999 0.0
15 0.9945 0.0
20 0.9813749999999999 0.01
25 0.8735 0.14
\end{filecontents}
\begin{filecontents}{superPimpInc5R.dat}
x y std
0 24.5 0.93
5 25.0 0.0
10 25.125 1.13
15 25.125 2.1
20 24.5 2.39
25 21.625 4.66
\end{filecontents}
\begin{filecontents}{nmfL1Inc5F.dat}
x y std
0 0.8147500000000001 0.17
5 0.7867500000000001 0.21
10 0.6371249999999999 0.2
15 0.5994999999999999 0.19
20 0.34625000000000006 0.19
25 0.1735 0.23
\end{filecontents}
\begin{filecontents}{nmfL1Inc5R.dat}
x y std
0 14.75 5.31
5 13.125 6.03
10 8.125 4.26
15 7.125 3.23
20 4.25 2.19
25 2.25 2.6
\end{filecontents}
\begin{filecontents}{superPimpInc20F.dat}
x y std
0 0.9962500000000001 0.01
5 0.985875 0.01
10 0.99725 0.0
15 0.9947499999999999 0.0
20 0.9764999999999999 0.01
25 0.8987500000000002 0.05
\end{filecontents}
\begin{filecontents}{superPimpInc20R.dat}
x y std
0 25.875 0.83
5 29.875 3.09
10 27.25 3.49
15 26.625 3.38
20 26.375 4.31
25 24.125 7.08
\end{filecontents}
\begin{filecontents}{nmfL1Inc20F.dat}
x y std
0 0.895375 0.06
5 0.9243750000000001 0.05
10 0.87625 0.04
15 0.8815 0.08
20 0.8252499999999999 0.13
25 0.6742500000000001 0.15
\end{filecontents}
\begin{filecontents}{nmfL1Inc20R.dat}
x y std
0 19.25 4.65
5 20.875 3.98
10 18.5 1.41
15 17.75 0.71
20 15.5 1.51
25 12.875 2.17
\end{filecontents}
\begin{filecontents}{superPimpInc2F.dat}
x y std
0 0.99575 0.01
5 1.0 0.0
10 0.9994999999999999 0.0
15 0.9942500000000001 0.0
20 0.977625 0.02
25 0.8778750000000001 0.11
\end{filecontents}
\begin{filecontents}{superPimpInc2R.dat}
x y std
0 24.125 1.36
5 25.0 0.0
10 24.875 0.35
15 24.875 1.73
20 22.75 1.98
25 19.5 4.47
\end{filecontents}
\begin{filecontents}{nmfL1Inc2F.dat}
x y std
0 0.8761249999999999 0.13
5 0.698 0.33
10 0.617875 0.29
15 0.18625 0.28
20 0.050125 0.08
25 0.0 0.0
\end{filecontents}
\begin{filecontents}{nmfL1Inc2R.dat}
x y std
0 18.875 5.59
5 12.375 7.52
10 8.375 5.07
15 2.125 2.9
20 0.75 0.89
25 0.0 0.0
\end{filecontents}
\begin{tikzpicture}[baseline]  
		\begin{axis}[
        	axis lines = left,
			title={},
            height=.33\linewidth,
            width=.5\linewidth, 
            xlabel={$p_+=p_-\ [\%]$},
			ylabel={$F$},
			xmax=25.2, ymin=0, ymax=1.1,
            legend columns =-1,
      		legend to name = named
			]
            \addlegendimage{no marks,dash pattern=on 4pt off 2pt on 4pt off 2pt,  color=black!60}
			\addlegendimage{no marks,dash pattern=on 8pt off 1pt on 8pt off 1pt, color=black!80}
            \addlegendimage{no marks,thick, color=black}
            \addlegendimage{no marks,ultra thick, color=black}
            \addplot+[cPrimp!60,dash pattern=on 4pt off 2pt on 4pt off 2pt,mark=*,mark options ={cPrimp!60},mark repeat={3}, thick, error bars/.cd,y dir = both, y explicit]  table[x=x,y=y,y error=std] {superPimpInc2F.dat};
            \addlegendentry{$\Delta_r=2$};
            \addplot+[cPrimp!80,dash pattern=on 8pt off 1pt on 8pt off 1pt,mark=*,mark options ={cPrimp!80},mark repeat={3}, thick, error bars/.cd,y dir = both, y explicit]  table[x=x,y=y,y error=std] {superPimpInc5F.dat};
            \addlegendentry{$\Delta_r=5$};
            \addplot+[cPrimp,mark=*,mark options ={cPrimp},mark repeat={3}, thick, error bars/.cd,y dir = both, y explicit]  table[x=x,y=y,y error=std] {superPimpLam4F.dat}; 
            \addlegendentry{$\Delta_r=10$};
            \addplot+[cPrimp!70!black,mark options ={cPrimp!70!black},mark=*,mark repeat={3},ultra thick, error bars/.cd,y dir = both, y explicit]  table[x=x,y=y,y error=std] {superPimpInc20F.dat}; 
            \addlegendentry{$\Delta_r=20$};
            \addplot+[cPanpal!60,dash pattern=on 4pt off 2pt on 4pt off 2pt,mark=square*,mark options ={cPanpal!60},mark repeat={3}, thick, error bars/.cd,y dir = both, y explicit]  table[x=x,y=y,y error=std] {nmfL1Inc2F.dat};
            \addplot+[cPanpal!80,dash pattern=on 8pt off 1pt on 8pt off 1pt,mark=square*,mark options ={cPanpal!80},mark repeat={3}, thick, error bars/.cd,y dir = both, y explicit]  table[x=x,y=y,y error=std] {nmfL1Inc5F.dat};
            \addplot+[cPanpal,solid,mark=square*,mark options ={cPanpal},mark repeat={3}, thick, error bars/.cd,y dir = both, y explicit]  table[x=x,y=y,y error=std] {nmfL1Lam2F.dat};
            \addplot+[cPanpal!70!black,solid,mark=square*,mark options ={cPanpal!70!black},mark repeat={3}, ultra thick, error bars/.cd,y dir = both, y explicit]  table[x=x,y=y,y error=std] {nmfL1Inc20F.dat}; 
		\end{axis} 
\end{tikzpicture}
\begin{tikzpicture}[baseline]  
		\begin{axis}[
        	axis lines = left,
            height=.33\linewidth,
            width=.5\linewidth,
			xlabel={$p_+=p_-\ [\%]$},     
			ylabel={$r(X,Y)$},
			xmax=25.2, xmin=0, ymin=0, ymax=35,
            legend columns =-1,
        	legend style={yshift=-3.5cm},  
			legend cell align=center,
            legend to name = algos
			]
            \addlegendimage{only marks, mark=*, color=cPrimp}
			\addlegendimage{only marks, mark=square*, color=cPanpal}
            \addplot+[cPrimp!60,mark=*,dash pattern=on 4pt off 2pt on 4pt off 2pt,mark options ={cPrimp!60},mark repeat={2}, thick, error bars/.cd,y dir = both, y explicit]  table[x=x,y=y,y error=std] {superPimpInc2R.dat};
            \addlegendentry{\textsc{Primp}};
             \addplot+[cPrimp!80,mark=*,dash pattern=on 8pt off 1pt on 8pt off 1pt,mark options ={cPrimp!80},mark repeat={3}, thick, error bars/.cd,y dir = both, y explicit]  table[x=x,y=y,y error=std] {superPimpInc5R.dat};
            \addplot+[cPrimp,mark=*,mark options ={cPrimp},mark repeat={2}, thick, error bars/.cd,y dir = both, y explicit]  table[x=x,y=y,y error=std] {superPimpLam4R.dat};
            \addplot+[cPrimp!70!black,mark=*, mark options ={cPrimp!70!black},mark repeat={2}, ultra thick, error bars/.cd,y dir = both, y explicit]  table[x=x,y=y,y error=std] {superPimpInc20R.dat}; 
         \addplot+[cPanpal!60,dash pattern=on 4pt off 2pt on 4pt off 2pt,mark=square*,mark options ={cPanpal!60},mark repeat={2}, thick, error bars/.cd,y dir = both, y explicit]  table[x=x,y=y,y error=std] {nmfL1Inc2R.dat};
         \addlegendentry{\textsc{Panpal}};
             \addplot+[cPanpal!80,dash pattern=on 8pt off 1pt on 8pt off 1pt,mark=square*,mark options ={cPanpal!80},mark repeat={2}, thick, error bars/.cd,y dir = both, y explicit]  table[x=x,y=y,y error=std] {nmfL1Inc5R.dat};
            \addplot+[cPanpal,solid,mark=square*, mark options ={cPanpal},mark repeat={2}, thick, error bars/.cd,y dir = both, y explicit]  table[x=x,y=y,y error=std] {nmfL1Lam2R.dat};
            \addplot+[cPanpal!70!black,solid,mark=square*, mark options ={cPanpal!70!black},mark repeat={2}, ultra thick, error bars/.cd,y dir = both, y explicit]  table[x=x,y=y,y error=std] {nmfL1Inc20R.dat}; 
		\end{axis} 
\end{tikzpicture}
\\

\pgfplotslegendfromname{algos}
\pgfplotslegendfromname{named}
\caption{Variation of rank increment $\Delta_r\in\{2,5,10,20\}$. Comparison of $F$-measures (the higher the better) and the estimated rank of the calculated tiling (the closer to 25 the better) for uniform noise of $p_+=p_-$ indicated by the $x$-axis (best viewed in color).}
\label{fig:rInc}
\end{figure}
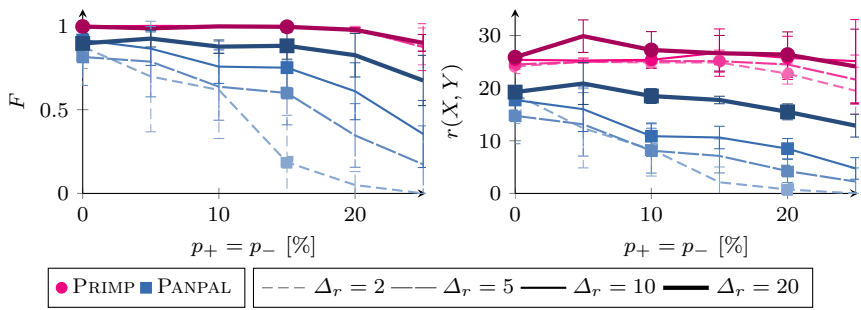
In the default setting of our synthetic experiments, the \textsc{PAL-Tiling} algorithms \textsc{Primp} and \textsc{Panpal} have to increase the rank two times by $\Delta_r=10$ to estimate the rank $r^\star=25$ correctly. In the experiments varying the rank, we have seen that \textsc{Primp} is able to find the correct rank if twice as many decisions correctly have to be made.  Here, we want to assess how robust the performance of \textsc{Pal-Tiling} algorithms to the parameter $\Delta_r$ is. What happens if, e.g., $\Delta_r=2$ and 23 rank increments have to be administered correctly?

Fig.~\ref{fig:rInc} shows $F$-measurements and estimated ranks of the algorithms \textsc{Primp} and \textsc{Panpal}, invoked with diverse rank increments $\Delta_r\in \{2,5,10,20\}$ on datasets with varying uniform noise. It is noticeable that the rank estimations of \textsc{Panpal} rapidly diverge  with increasing noise while the plots of \textsc{Primp} stay comparatively close. \textsc{Panpal}'s tendency to underestimate the rank grows for smaller rank increments. In return, the rank estimations of \textsc{Panpal} can be improved by choosing a large rank increment, i.e. $\Delta_r\approx r^\star$. However, since we do not know the rank in real world applications, different increment values have to be tried and compared, contradicting our goal to automatically determine this parameter. Still, \textsc{Panpal} yields potentially useful lower bounds on the actual rank.

The average rank estimations of \textsc{Primp} have a maximum aberration of five from the actual rank throughout all noise variations. The graphical display of $r(X,Y)$ for $\Delta_r=20$ has a peak at $5\%$ uniform noise but is close to $r^\star$ otherwise. For rank increments smaller than 10, the estimations do not distinctly decrease until the noise exceeds $20\%$. Here, a rank increment of $\Delta_r=5$ yields the most accurate rank estimations, having also lowest standard deviations from the mean. Particularly, \textsc{Primp}'s tendency to overestimate the rank in specific settings can be corrected by choosing smaller rank increments. Nonetheless, all these rank deviations barely effect the $F$-measure, which demonstrates the robustly well fitted recovery of the underlying model regardless of the choice of rank increment. 
\subsection{Comparison of Cost Measures}
\begin{table}
	\centering
    \begin{adjustbox}{max width=\textwidth}
	\begin{tabular}{clrrrrr}\toprule
 & Algorithm & $\overline{F\strut}$ & $\overline{\%f_{\mathsf{RSS}}\strut}$ & $\overline{\%f_{\mathsf{CT}}\strut}$ & $\overline{\%f_{\mathsf{L1}}\strut}$  & $\overline{\%f_{\mathsf{TXD}}\strut}$  \\ \midrule
 \rowcolor{TUGray!10}
\multirow{6}{*}{\cellcolor{white}\rotatebox{90}{ $p_\pm=25\%$ }  } 
&Planted& 1.0 $\pm$ 0.0 & 88.37 $\pm$ 1.24 & 89.88 $\pm$ 1.25 & 89.51 $\pm$ 1.25 & 96.5 $\pm$ 0.61\\
 & \textsc{Primp} & $\mathbf{0.9\pm0.05}$ & $\mathbf{89.58\pm1.71}$ & $\mathbf{91.0\pm1.54}$ & $\mathbf{90.65\pm1.59}$ & $\mathbf{97.0\pm0.62}$\\
 & \textsc{Panpal} & 0.35 $\pm$ 0.2 & 97.17 $\pm$ 1.62 & 97.56 $\pm$ 1.46 & 97.47 $\pm$ 1.49 & 99.16 $\pm$ 0.56\\
 & \textsc{Mdl4bmf} & 0.46 $\pm$ 0.08 & 96.6 $\pm$ 0.86 & 97.25 $\pm$ 0.76 & 97.11 $\pm$ 0.77 & 99.2 $\pm$ 0.25\\
 & \textsc{Panda} & 0.14 $\pm$ 0.11 & 99.14 $\pm$ 0.77 & 99.28 $\pm$ 0.62 & 99.24 $\pm$ 0.66 & 99.75 $\pm$ 0.19\\
 & \textsc{Nassau} & 0.1 $\pm$ 0.05 & 100.5 $\pm$ 0.29 & 100.7 $\pm$ 0.3 & 100.69 $\pm$ 0.31 & 99.75 $\pm$ 0.15\\
 \midrule
\rowcolor{TUGray!10}
\multirow{6}{*}{\cellcolor{white}\rotatebox{90}{ $r^\star = 45$ }  }  
& Planted & 1.0 $\pm$ 0.0 &  50.27 $\pm$ 1.25 & 54.67 $\pm$ 1.29 & 53.29 $\pm$ 1.29 & 69.77 $\pm$ 0.96\\
 & \textsc{Primp} & $\mathbf{1.0\pm0.0}$ & $\mathbf{50.32\pm1.23}$ & $\mathbf{54.74\pm1.27}$ & $\mathbf{53.35\pm1.27}$ & $\mathbf{69.85\pm0.93}$\\
 & \textsc{Panpal} & 0.67 $\pm$ 0.1 & 73.0 $\pm$ 6.04 & 75.04 $\pm$ 5.56 & 74.29 $\pm$ 5.71 & 84.66 $\pm$ 3.78\\
 & \textsc{Mdl4bmf} & 0.8 $\pm$ 0.04 & 62.67 $\pm$ 1.41 & 66.72 $\pm$ 1.53 & 65.48 $\pm$ 1.46 & 79.21 $\pm$ 1.03\\
 & \textsc{Panda} & 0.53 $\pm$ 0.02 & 89.02 $\pm$ 1.76 & 92.34 $\pm$ 2.16 & 92.76 $\pm$ 2.09 & 86.0 $\pm$ 0.7\\
 & \textsc{Nassau} & 0.74 $\pm$ 0.21 & 64.43 $\pm$ 10.08 & 68.27 $\pm$ 10.24 & 67.28 $\pm$ 10.43 & 77.47 $\pm$ 4.87\\ \midrule
\rowcolor{TUGray!10}
 \multirow{6}{*}{\cellcolor{white}\rotatebox{90}{ $q=0.3$ }  } 
&Planted &  1.0 $\pm$ 0.0  & 24.94 $\pm$ 2.74 & 27.84 $\pm$ 2.78 & 27.11 $\pm$ 2.7 & 51.97 $\pm$ 1.04\\
 & \textsc{Primp} & 0.7 $\pm$ 0.1  & $\mathbf{27.04\pm2.46}$ & $\mathbf{31.23\pm2.33}$ & $\mathbf{30.33\pm2.25}$ & 57.52 $\pm$ 2.03\\
 & \textsc{Panpal}  & $\mathbf{0.92\pm0.11}$ & 29.45 $\pm$ 3.17 & 31.98 $\pm$ 3.06 & 31.31 $\pm$ 3.1 & 57.09 $\pm$ 3.86\\
 & \textsc{Mdl4bmf} & 0.59 $\pm$ 0.04 & 45.08 $\pm$ 2.14 & 48.53 $\pm$ 2.01 & 47.88 $\pm$ 1.96 & 73.81 $\pm$ 1.49\\
 & \textsc{Panda} & 0.51 $\pm$ 0.05 & 54.12 $\pm$ 8.9 & 57.07 $\pm$ 8.83 & 56.74 $\pm$ 8.86 & 75.88 $\pm$ 2.32\\
 & \textsc{Nassau} & 0.9 $\pm$ 0.09 & 29.11 $\pm$ 5.19 & 32.07 $\pm$ 5.2 & 31.42 $\pm$ 5.2 & $\mathbf{56.79\pm4.2}$\\ 
 \bottomrule
\end{tabular}
\end{adjustbox}
\caption{Average cost measures of computed and planted models, denoted relation to the costs of the empty model. For each setting (variation of one data generation parameter while the others are set to default values $r^\star=25$, $q=0.3$ and $p_\pm=25\%$) the average value is computed over all considered dimension variations.}
\label{tbl:avgCosts}
\end{table}

We have seen how well the competing algorithms perform with regard to the $F$-measure. Then again, assessing the performance on real data requires other measurements. Possible candidates are the costs listed in Table~\ref{tbl:tiling}. Subsequently, we relate selected costs of computed and planted models to the $F$-measure and discuss whether we can deduce a suitable extraction of the underlying model from a low cost measure; is smaller always better? 

Table~\ref{tbl:avgCosts} displays the average costs in relation to the empty model for the four measures $f_{\mathsf{RSS}}$, the residual sum of squares, $f_{\mathsf{CT}}$, the compression size obtained by code tables, $f_{\mathsf{L1}}$, the $L1$-regularized residual sum of squares and $f_{\mathsf{TXD}}$, the Typed XOR DtM measure. We examine three parameter settings, one for the highest value in each variation of the data generation parameters $r^\star, q$ and $p_\pm$. Thereby, default settings of $r^\star=25$, $q=0.1$, $p_\pm=10\%$ apply, if not stated otherwise. The values of the planted model are shaded out while the highest $F$-measure and lowest mean costs of computed models are highlighted. 

We can trace that high $F$-values often correspond to lower costs, regardless of the measurement. This effect is immediately perceivable at rows where \textsc{Primp} attains highest $F$-values and all of its cost values are highlighted as well. Yet, the experiments for $q=0.3$ display a  more diverse ranking among the measurements. In this setting, \textsc{Primp} decidedly overestimates the rank but still obtains lowest costs in all but the $f_\mathsf{TXD}$ measure. \textsc{Panpal} attains the highest $F$-value, closely followed by \textsc{Nassau}. Both algorithms reach second or third lowest costs in $f_{\mathsf{RSS}},f_{\mathsf{CT}}$ and $f_{\mathsf{L1}}$. The $f_\mathsf{TXD}$ costs reflect the order of $F$-values more suitably, \textsc{Nassau} obtains lowest costs, closely followed by \textsc{Panpal} and \textsc{Primp}. In brief, the costs of \textsc{Primp}, \textsc{Panpal} and \textsc{Nassau} are always close while only \textsc{Mdl4bmf} and \textsc{Panda+} lie notably behind. Here, the deciding clue is given by the rank, which separates the close cost measurements of \textsc{Primp}, \textsc{Panpal} and \textsc{Nassau} by showing that slight improvements in the costs by \textsc{Primp} are achieved by a disproportionate increase of the rank. 

While for $q=0.3$, the $f_\mathsf{TXD}$ costs appear suitable to reflect an appropriate extraction of tiles, in the setting of $p_\pm=25\%$ we observe another facet. Here, we see that  \textsc{Nassau} reaches the same average $f_\mathsf{TXD}$ costs as \textsc{Panda+} although \textsc{Nassau} increases the RSS in comparison to the empty model. This is indicated by relative costs larger than $100\%$ in all measurements but $f_\mathsf{TXD}$. Still, the ranking of  $f_\mathsf{TXD}$ costs matches the $F$-measure ranking but this example shows, that a compression with respect to the $f_\mathsf{TXD}$ description length can be achieved without adaptation to the data.

\subsection{Real-World Data Experiments}
\begin{table} 
	\centering
	\begin{tabular}{lrrr}\toprule
    Dataset $D$ & $m$ & $n$ & Density $[\%]$\\ 
    \midrule
    Abstracts & 859 & 4977 & 1.02\\
    Mushroom & 8124 & 120 & 19.33\\
    MovieLens5M & 29980 &9044 &1.81\\
    MovieLens500K &3329 & 3015 & 4.99\\
    Chess & 3196 & 75 &49.33\\ \bottomrule
    \end{tabular}
    \caption{Characteristics of considered datasets: Number of rows $m$, number of columns $n$ and density $|D|/(nm)$ in percent.}
    \label{tbl:dataStats}
\end{table}
We conduct experiments on five datasets, whose characteristics are summarized in Table~\ref{tbl:dataStats}. \textit{Chess} and \textit{Mushroom} are discretized benchmark UCI datasets having a comparatively high density and around 50 times more rows than columns. The \textit{Abstracts} dataset indicates the presence of stemmed words, excluding stop-words, in all ICDM paper abstracts until 2007 \citep{DeBie2011}. It is a sparse dataset with around 5 times as many columns (words) as rows (documents). Finally, the \textit{MovieLens5M} and \textit{MovieLens500K} are binarized versions of the \textit{MovieLens10M}\footnote{\url{http://grouplens.org/datasets/movielens/10m/}} and \textit{MovieLens1M}\footnote{\url{http://grouplens.org/datasets/movielens/1m/}} datasets, where rows correspond to users and columns to movies. We set $D_{ji}=1$ iff user $j$ recommends movie $i$ with more than three out of five stars. After selecting only those users which recommend more than 50 movies and those movies which receive more than five recommendations, we obtain two datasets with a balanced number of of rows and columns. The \textit{MovieLens5M} dataset, containing 5M ones, and the \textit{MovieLens500K} dataset, with 500 thousand ones, have a (as one would expect, due to the dataset domain) high amount of negative noise due to missing values. Originally, we intended to consider only the \textit{MovieLens5M} dataset, but \textsc{Nassau} and \textsc{Mdl4bmf} could not terminate in reasonable time -- we aborted the calculations after one month. Therefore, we also prepared the smaller \textit{MovieLens500K} dataset. (For comparison: While \textsc{Primp}, \textsc{Panpal} and \textsc{Panda+} require around ten minutes to compute the result for MovieLens500K, \textsc{Mdl4bmf} and \textsc{Nassau} need more than five days.)  Furthermore, we note that we transpose the \textit{Abstracts} dataset for \textsc{Nassau}, as it returns zero tiles otherwise.

We state the estimated rank and the attained costs, relative to the costs of the empty model for every considered dataset and algorithm in Table~\ref{tbl:realWorldCosts}. The lowest costs are highlighted for each measure and dataset. We observe, similarly to the evaluation in Sec.~\ref{sec:MeasureExpr}, a tendency toward compliance among all measures, except for the Typed XOR description length. 
As such, \textsc{Primp} mostly obtains minimal costs in all datasets but \textit{Mushroom}, where \textsc{Mdl4bmf} reaches lowest costs.   The models of \textsc{Panda+} exhibit for sparse datasets low Typed XOR DtM costs although the fit to the data is low ($f_\mathsf{RSS}>100\%$). The discrepancy between the $f_\mathsf{TXD}$ compression size and the other measurements is most remarkably for the \textit{MovieLens} datasets. Here, the ranking with respect to $f_\mathsf{TXD}$ is  almost inverse to the ranking with respect to other costs.
\begin{table}
	\centering
	\begin{tabular}{cclrrrrr}\toprule
    \multicolumn{2}{c}{Data} & Algorithm & Rank & $ \%f_{\mathsf{RSS}}$ & $ \%f_{\mathsf{CT}}$ & $\%f_{\mathsf{L1}}$  & $\%f_{\mathsf{TXD}}$  \\ \midrule
\multirow{5}{*}{\rotatebox{90}{ Abstracts }  } 
 && \textsc{Primp} & 46 & \textbf{93.0} & \textbf{98.6} & \textbf{96.33} & 96.12\\
 && \textsc{Panpal} & 1 & 99.8 & 99.96 & 99.89 & 99.74\\
 && \textsc{Mdl4bmf} & 24 & 95.84 & 100.68 & 100.49 & 97.05\\ 
 && \textsc{Nassau} & 3 & 99.81 & 101.76 & 103.05 & 96.84\\
 && \textsc{Panda+} & 133 & 113.34 & 125.27 & 140.49 & \textbf{88.19}\\
 \midrule
\multirow{5}{*}{\rotatebox{90}{ Chess }  } &
 & \textsc{Primp} & 18 & \textbf{24.61} & \textbf{31.3} & \textbf{29.32} & \textbf{62.8}\\
 && \textsc{Panpal} & 6 & 40.76 & 46.71 & 45.67 & 78.92\\
 && \textsc{Mdl4bmf} & 3 & 35.91 & 39.34 & 39.51 & 68.88\\ 
 && \textsc{Nassau} & 10 & 31.92 & 39.03 & 38.94 & 65.78\\
 && \textsc{Panda+} & 27 & 25.76 & 36.58 & 35.74 & 65.01\\
 \midrule
\multirow{6}{*}{\rotatebox{90}{MovieLens}} & \multirow{3}{*}{\rotatebox{90}{500K  }  }
 & \textsc{Primp} & 78 & \textbf{88.59} & \textbf{93.29} & \textbf{91.4} & 89.37\\
 && \textsc{Panpal} & 15 & 94.05 & 95.92 & 94.87 & 92.26\\
 && \textsc{Mdl4bmf} & 56 & 89.65 & 94.97 & 93.43 & 88.72\\
 && \textsc{Nassau} & 29 & 111.15 & 118.47 & 120.58 & 85.89\\
 && \textsc{Panda+} & 120 & 160.93 & 165.61 & 168.58 & \textbf{79.49}\\
\cmidrule(lr{0em}){2-8}
 & \multirow{3}{*}{\rotatebox{90}{5M}  }
 & \textsc{Primp} & 209 & \textbf{89.31} & \textbf{93.14} & \textbf{91.2} & 88.16\\
 && \textsc{Panpal} & 38 & 93.68 & 95.72 & 94.39 & 88.73\\
 && \textsc{Panda+} & 1919 & 181.42 & 202.87 & 201.45 & \textbf{72.23}\\
 \midrule
\multirow{5}{*}{\rotatebox{90}{ Mushroom }  }
 && \textsc{Primp} & 14 & 35.75 & 40.89 & 40.25 & 56.09\\
 && \textsc{Panpal} & 7 & 44.03 & 51.23 & 48.52 & 63.75\\
 && \textsc{Mdl4bmf} & 87 & \textbf{23.39} & \textbf{36.6} & \textbf{32.47} & \textbf{50.37}\\
 && \textsc{Nassau} & 65 & 40.60 & 58.77 & 54.93 & 50.62\\ 
 && \textsc{Panda+} & 40 & 100.30 & 117.34 & 112.80 & 66.98\\
 \bottomrule
    \end{tabular}
    \caption{Comparison of cost measures for real-world datasets.}
    \label{tbl:realWorldCosts}
\end{table}
\begin{table}
	\centering
	\begin{tabular}{crrrrr}\toprule
    MovieLens & \textsc{Primp} & \textsc{Panpal} & \textsc{Mdl4bmf} & \textsc{Nassau} & \textsc{Panda+}\\ 
    \midrule
    500K & 2.38 & 2.23 &3.68 & 10.33 & 18.78\\
    5M   & 2.08 & 2.78 &-&-&23.14\\
    \bottomrule
    \end{tabular}
    \caption{Percentage of traceable wrong recommendations of computed models for the MovieLens datasets, i.e., the relative amount of user-movie recommendations which correspond to bad reviews ($<2.5$ stars out of five).}
    \label{tbl:FP}
\end{table}

This leads to the question which cost measure indicates the suitable tiling in such situations? Luckily, we have for the MovieLens data the possibility to assess how many recommendations would fail by the submitted bad reviews, which are not reflected in the input data. We state the relative amount of recommendations which correspond to bad reviews, i.e., $\nicefrac{|D_-\circ \theta(YX^T)|}{|D_-|}$ where $D_-$ is the matrix having $D_{ji}=1$ iff user $j$ rates movie $i$ with less than $2.5$ of five stars, in Table~\ref{tbl:FP}. We observe that the lower $f_\mathsf{TXD}$ costs are, the higher is the rate of recommendation failures, regardless of the estimated rank.  Therefore, we expect \textsc{Primp} to discover the most liable grouping of users and movies, having the lowest approximation error and a very low ration of traceable wrong recommendations. Similarly, it is questionable if low $f_\mathsf{TXD}$ costs indicate suitable models in specific cases where the approximation error diverges such as for the \textit{Abstracts} dataset.
\subsection{Qualitative Inspection of Mined Tiles} 
The $F$-measure gives a hint at the kind of tiling we can expect from the algorithms, e.g., \textsc{Panpal} returns a coarse view, modeling only a few tiles which match actually persistent ones, the quality of \textsc{Panda+}' results substantially varies and \textsc{Mdl4bmf} and particularly \textsc{Primp} are most often able to identify the persistent interrelations. Yet how do the algorithms relate in their actual cognition of structure and noise, what makes a tile a tile?

Image data allows us to visually inspect the resulting factorizations without the need to specify a numeric measure. We can intuitively assess the attempts to capture relevant sub-structures. 
However, some preprocessing is required in order to feed $w \times h$ images to the mining algorithms. 
We employ a standard representation of images: the RGB888 pixel format. Each of the $w \times h$ pixels is represented by $24$ bits, using $8$ bits per color (red, green and blue). 
In order to convert an image into a set of transactions, we divide it into blocks (patches) of $4 \times 4$ pixels, resulting in a total of $\frac{w}{4} \times \frac{h}{4}$ transactions per image. We adopt this representation from computer vision, where image patches are a standard preprocessing step for raw pixel data \citep{Jarrett/etal/2009a}.
Within each block, let $(r,g,b)_{l,k}$ denote the pixel at row $l$ and column $k$, where $r,g,b\in\{0,1\}^8$ are the $8$-bit binary representation of its red, green and blue color values.
We model the concatenation of all $16$ pixels within one block as one transaction
\begin{equation}
\left[(r,g,b)_{1,1},(r,g,b)_{1,2},(r,g,b)_{1,3},(r,g,b)_{1,4},(r,g,b)_{2,1},\dots,(r,g,b)_{4,4}\right]
\end{equation}
which has a length of $24\cdot 16=384$ bits. 

This way, we process two images: an illustration of {\em Alice} in Wonderland (Fig.~\ref{fig:alice}) and a selection of ``aliens'' from the classic game {\em Space Invaders} (Fig.~\ref{fig:spaceInv}). 
We select Alice because the image contains multiple connected areas, each representing a reasonable substructure, i.e., hair, face, dress, arm and background. In return, the Space Invaders image contains multiple patterns in terms of color and shape, but the components are clearly spatially separable. 

\begin{figure}
  \centering
  \begin{tabular}{ccc@{\quad}cccc}
   Original& & Reconstruction & Fac 1 & Fac 2 & Fac 3 & Fac 4 \\
    \includegraphics[width=0.11\columnwidth]{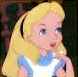}
    & \rotatebox{90}{ \textsc{Nassau} }    
    &  \includegraphics[width=0.11\columnwidth]{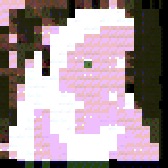} 
    &  \includegraphics[width=0.11\columnwidth]{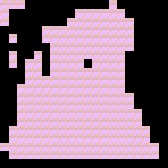}
    &  \includegraphics[width=0.11\columnwidth]{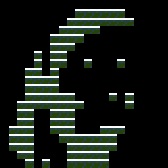}
    & \includegraphics[width=0.11\columnwidth]{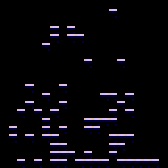}
    & \includegraphics[width=0.11\columnwidth]{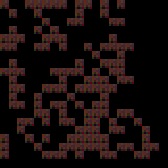} \\ \\
    &\rotatebox{90}{ \textsc{Mdl4bmf} } 
    & \includegraphics[width=0.11\columnwidth]{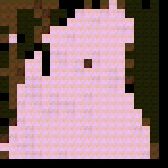}
    & \includegraphics[width=0.11\columnwidth]{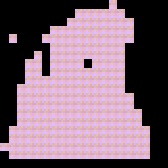}
    & \includegraphics[width=0.11\columnwidth]{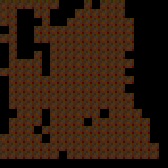}
    &  
    & \\ \\
    & \rotatebox{90}{ \textsc{Panda+} } 
    & \includegraphics[width=0.11\columnwidth]{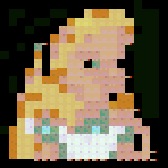}
    &  \includegraphics[width=0.11\columnwidth]{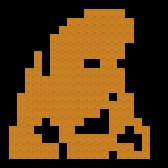}
    & \includegraphics[width=0.11\columnwidth]{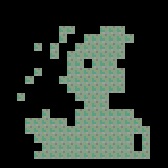}
    & \includegraphics[width=0.11\columnwidth]{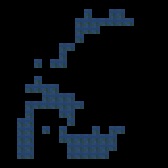}
    & \includegraphics[width=0.11\columnwidth]{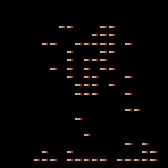}\\ \\
    & \rotatebox{90}{ \textsc{Panpal} } 
    & \includegraphics[width=0.11\columnwidth]{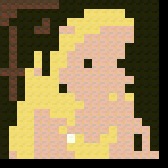} 
    & \includegraphics[width=0.11\columnwidth]{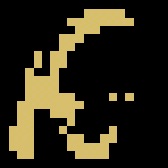}
    & \includegraphics[width=0.11\columnwidth]{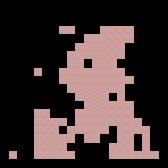}  
    & \includegraphics[width=0.11\columnwidth]{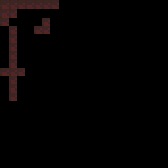} 
    &  \\ \\
    & \rotatebox{90}{ \textsc{Primp} } 
    & \includegraphics[width=0.11\columnwidth]{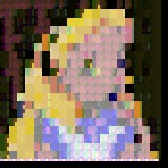}
    & \includegraphics[width=0.11\columnwidth]{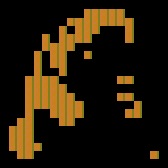}
    & \includegraphics[width=0.11\columnwidth]{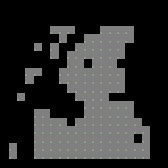}
    & \includegraphics[width=0.11\columnwidth]{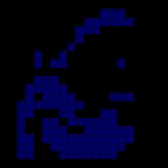}
    & \includegraphics[width=0.11\columnwidth]{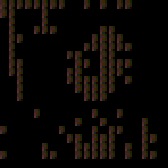}
  \end{tabular}
  \caption{Reconstructions of the Alice image and postprocessed top-4 tiles. Best viewed in color.\label{fig:alice}}
\end{figure}

The original Alice image, as well as reconstructions $\theta(XY)$ and the top-4 tiles generated by \textsc{Nassau}, \textsc{Mdl4bmf}, \textsc{Panda+}, \textsc{Panpal} and \textsc{Primp}, are depicted in Fig.~\ref{fig:alice}. 
Clearly, only \textsc{Panda+} and \textsc{Primp} select patterns, i.e., blocks of pixels which provide a reasonable reconstruction of the original image. \textsc{Panpal}'s tendency to underestimate the rank (choosing only three factors) becomes apparent here again.
Regarding the figured structures, \textsc{Panda+}, \textsc{Panpal} and \textsc{Primp} discover a hair-related substructure, where the one found by \textsc{Primp} has the most distinctive contours, and \textsc{Panda+}, \textsc{Panpal} and \textsc{Primp} identify a face-related structure. The reconstructions and factors found by \textsc{Nassau} and \textsc{Mdl4bmf} are not easy to interpret without knowledge of the original image. 

\begin{figure}
  \centering
  \begin{tabular}{cc@{\quad}cccc}
  	\rotatebox{90}{ Original} & \includegraphics[width=0.14\columnwidth]{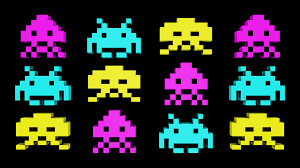} & & & & \\ \\
    & Reconstruction & Fac 1 & Fac 2 & Fac 3 & Fac 4 \\
    \rotatebox{90}{ \textsc{Nassau} }
    &  \includegraphics[width=0.14\columnwidth]{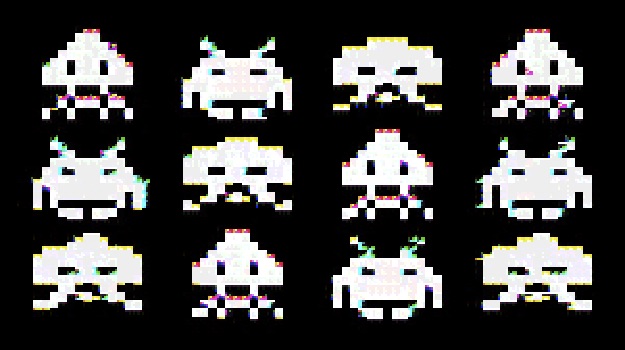}
    &  \includegraphics[width=0.14\columnwidth]{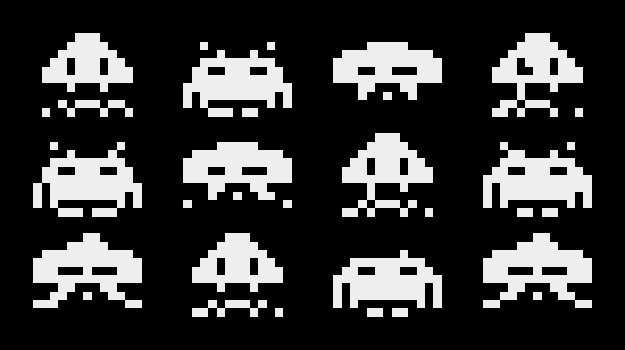}
    &  \includegraphics[width=0.14\columnwidth]{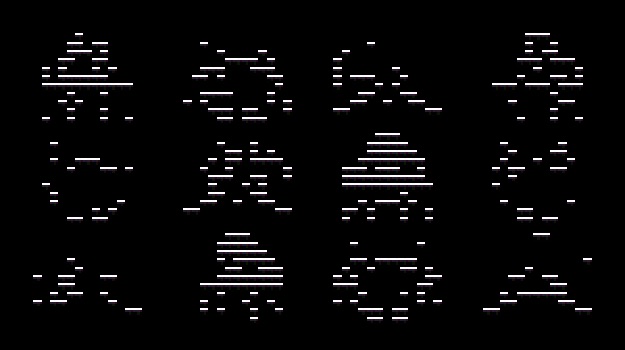}
    & \includegraphics[width=0.14\columnwidth]{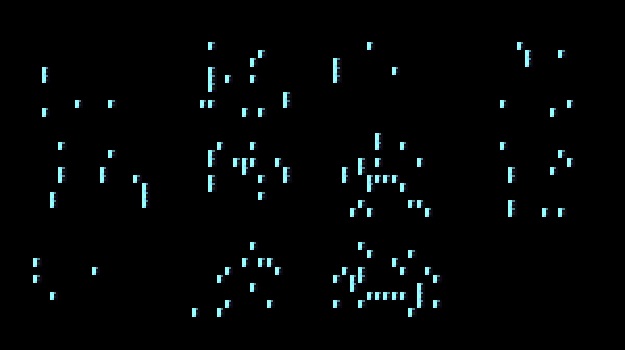}
    & \includegraphics[width=0.14\columnwidth]{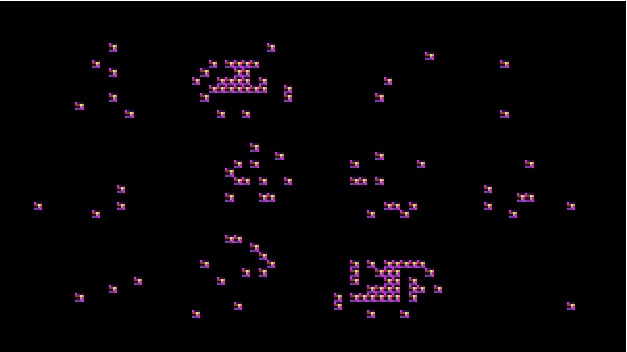} \\ \\
    \rotatebox{90}{ \textsc{Mdl4bmf} } 
    & \includegraphics[width=0.14\columnwidth]{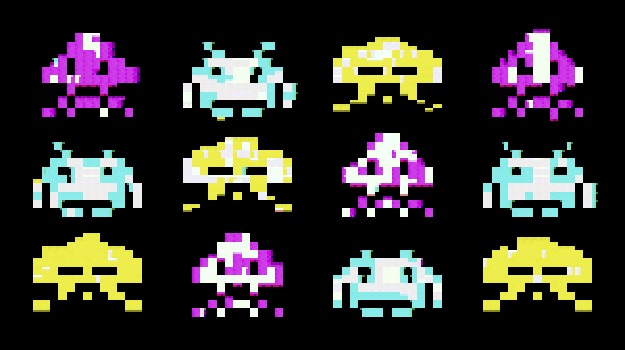}
    & \includegraphics[width=0.14\columnwidth]{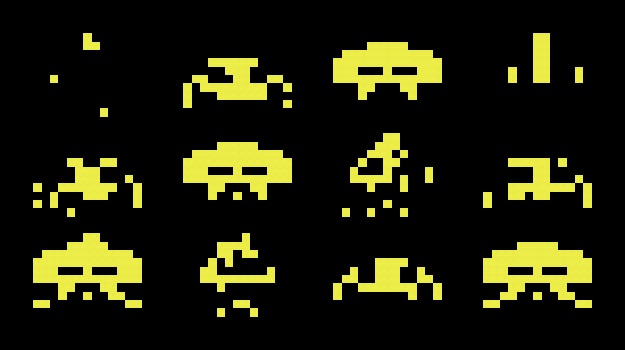}
    & \includegraphics[width=0.14\columnwidth]{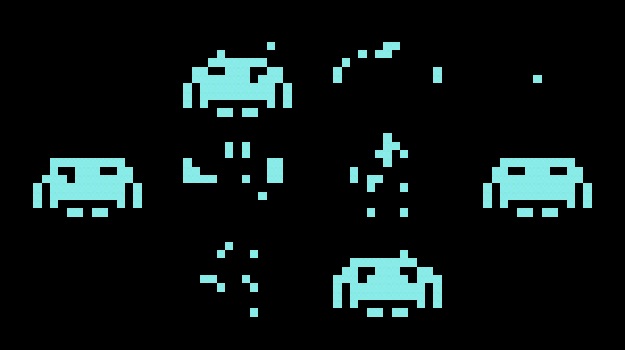}
    & \includegraphics[width=0.14\columnwidth]{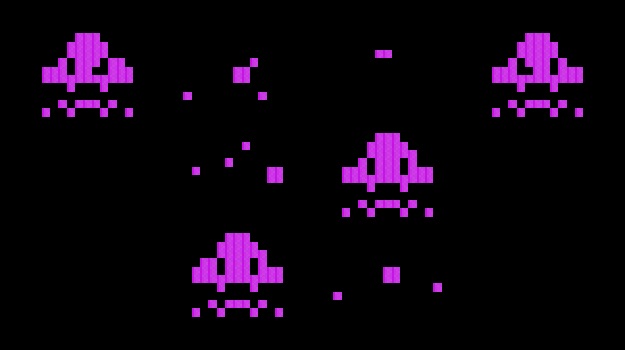} 
    & \includegraphics[width=0.14\columnwidth]{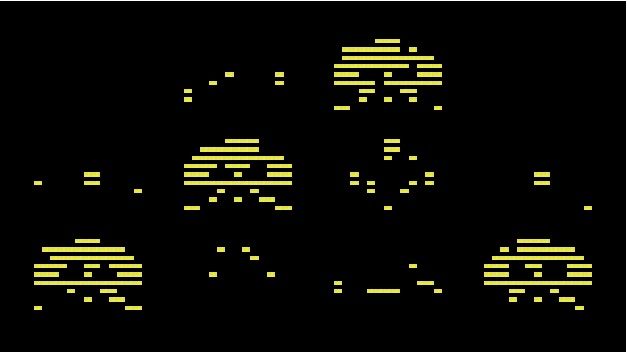}\\ \\
     \rotatebox{90}{ \textsc{Panda+} } 
    & \includegraphics[width=0.14\columnwidth]{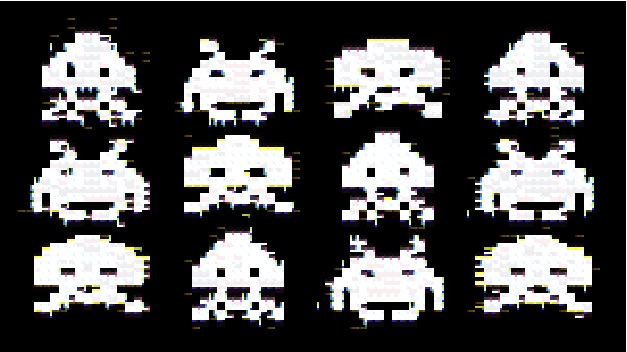}
    &  \includegraphics[width=0.14\columnwidth]{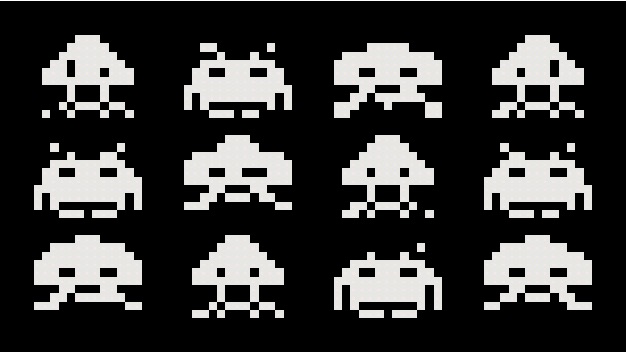}
    & \includegraphics[width=0.14\columnwidth]{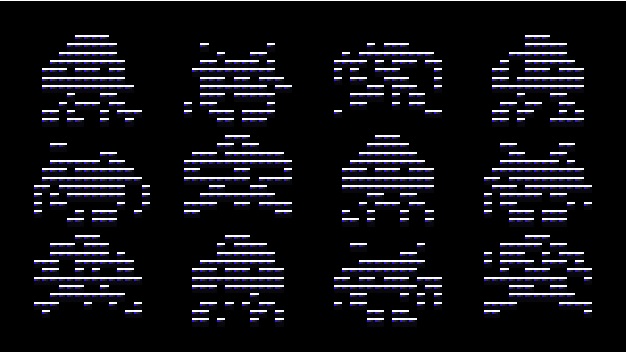}
    & \includegraphics[width=0.14\columnwidth]{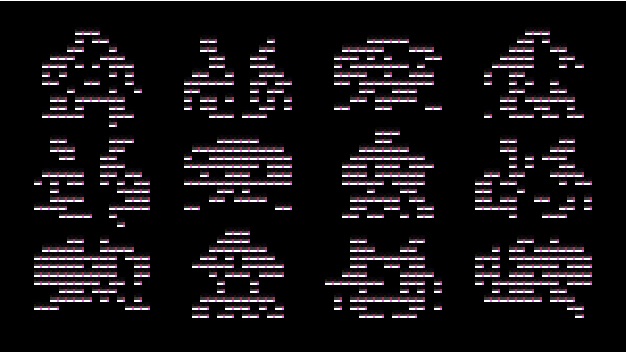}
    & \includegraphics[width=0.14\columnwidth]{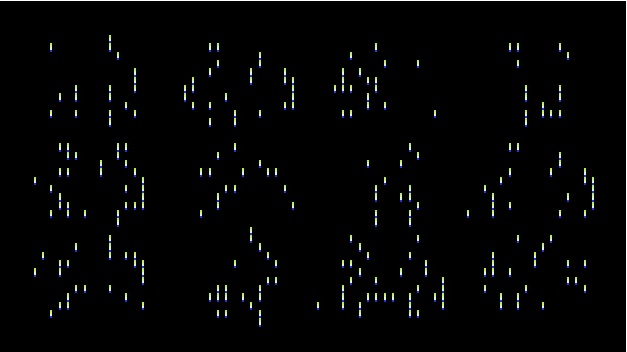}\\ \\
    \rotatebox{90}{ \textsc{Panpal} } 
    & \includegraphics[width=0.14\columnwidth]{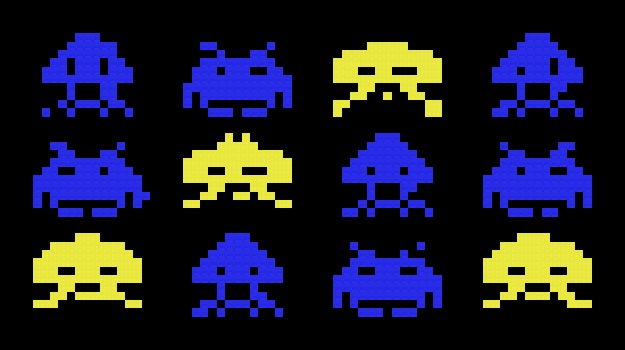} 
    & \includegraphics[width=0.14\columnwidth]{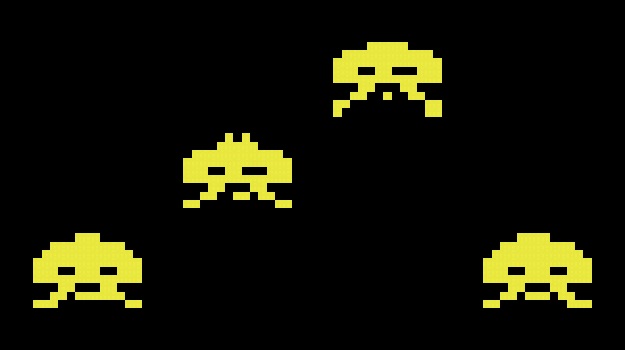}
    & \includegraphics[width=0.14\columnwidth]{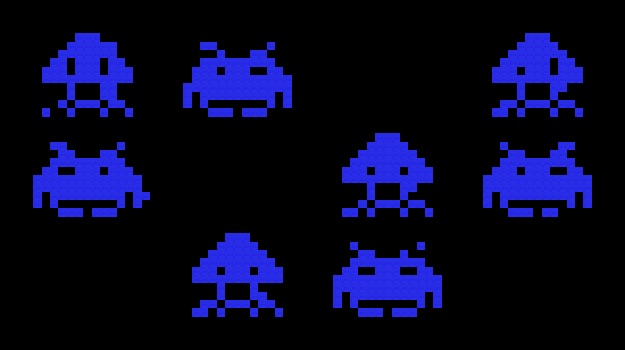}  
    & 
    &  \\ \\
    \rotatebox{90}{ \textsc{Primp} } 
    & \includegraphics[width=0.14\columnwidth]{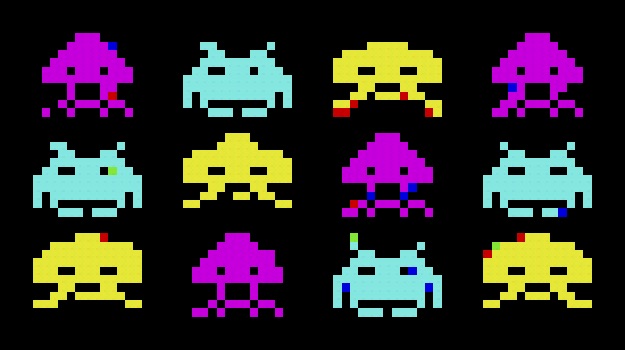}
    &  \includegraphics[width=0.14\columnwidth]{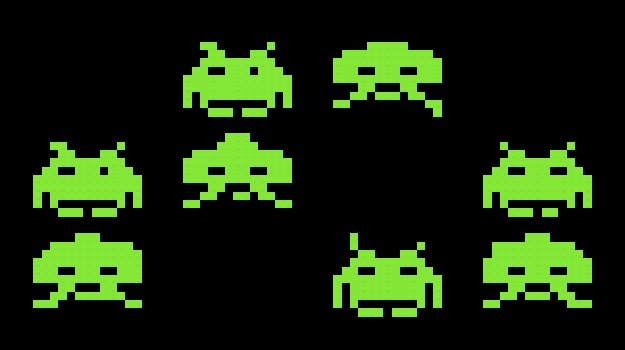}
    & \includegraphics[width=0.14\columnwidth]{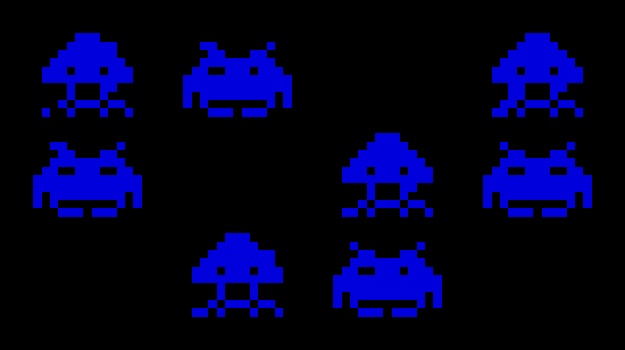}
    & \includegraphics[width=0.14\columnwidth]{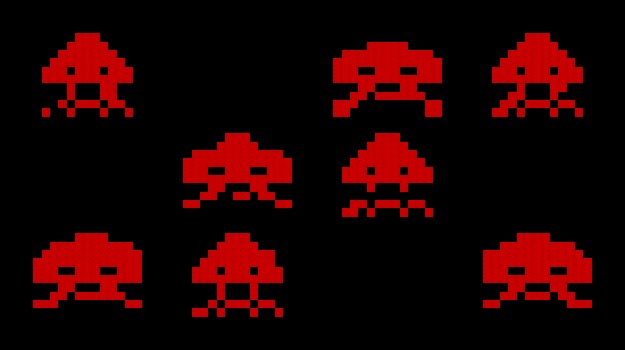}
    & 
  \end{tabular}
  \caption{Reconstructions of the Space Invaders image and the top-4 postprocessed tiles. Best viewed in color.\label{fig:spaceInv}}
\end{figure}
Reconstruction results and top-4 patterns of the Space Invaders image are shown in Fig.~\ref{fig:spaceInv}. All methods reconstruct at least the shape of the aliens. In terms of color, however, the results diverge. \textsc{Panda+} and \textsc{Nassau} interpret all colors as negative noise effects on the color white; white has a binary representation of $24$ ones. \textsc{Panpal} recovers the yellow color correctly and it extracts the full blue channel from the image---an identical pattern is also detected by \textsc{Primp}. \textsc{Primp} and \textsc{Mdl4bmf} reconstruct all three colors of the original image, yet the reconstruction of \textsc{Mdl4bmf} exhibits injections of white blocks. Hence, only \textsc{Primp} is capable to reconstruct the color information correctly. 

Having a look at derived tiles, the greedy processes of \textsc{Panda+} and \textsc{Nassau} become particularly visible; \textsc{Panda+} and \textsc{Nassau} overload the first factor with all the shape information. The remaining factors reduce the quantitative reconstruction error, but have no deeper interpretation. \textsc{Mdl4bmf} tries to model one type of aliens by each tile. Although this would result in a reasonable description of the image, the actual extraction of tiles suffers from the greedy implementation. We can see that, e.g., the first tile captures information about the yellow aliens as well as strayed parts of other aliens. This unfortunate allocation of tiles results in the injection of white blocks in the reconstruction image.  \textsc{Panpal} clearly separates yellow and blue aliens but interprets differences from the color blue to purple and to turquoise as noise. Finally, \textsc{Primp} separates by its tiles the three basic color channels which are actually used to mix the colors that appear in the original image. Hence, \textsc{Primp} achieves the factorization rank that corresponds to the natural amount of color concepts in the image, unlike all other competitors.

The results of this qualitive experiment particularly illustrates the benefits of a non-greedy minimization procedure. Even though \textsc{Panpal} is often not able to minimize the costs due to an underestimation of the rank, its categorization into tiles always yields interpretable parts.
\section{Conclusion}\label{sec:conclusion}
We introduce \textsc{PAL-Tiling}, a general framework to compute tilings according to a cost measure based on a theoretically founded numerical optimization technique. Requiring that the cost measure has a smooth relaxed function,  which combines the matrix factorization error with a regularizing function, \textsc{PAL-Tiling} minimizes the relaxed objective under convergence guarantees. To simulate the minimization subject to the constraint that the matrices are binary, we derive a closed form of the proximal mapping with respect to a function which penalizes non-binary values. A thresholding to binary values according to the actual cost measure enables an automatic determination of the factorization rank.

Aiming at the robust identification of tilings in presence of various noise distributions, we consider two cost measures in this framework which defines two tiling algorithms. The first algorithm uses a simple $L1$-norm regularization on the factor matrices and is called \textsc{Panpal}. The second minimizes the MDL-description length of the encoding by code tables as known from \textsc{Krimp} \citep{siebes2006item}. Foregoing the heuristics in computing the usage of codes, we extend the application of this encoding from pattern mining to Boolean matrix factorization and derive an upper bound which induces the relaxed objective. We refer to this instance of \textsc{PAL-Tiling} as \textsc{Primp}.

Our experiments on synthetically generated datasets show  that the quality of competing algorithms \textsc{Panda+}, \textsc{Mdl4bmf} and \textsc{Nassau} is sensitive towards multiple data generation parameters. The first of the two newly introduced algorithms, \textsc{Panpal}, regularly underestimates the true factorization rank. We have seen that this property can be beneficial in settings with large, overlapping tiles which induce dense datasets (cf.\@ Fig.\@ \ref{fig:density}). In all other settings, the second algorithm \textsc{Primp} is able to detect the underlying structure, regardless of the considered distribution of noise or variations the factorization rank (cf.\@ Figs.\@ \ref{fig:noise810}-\ref{fig:rank}). 

A comparison of cost measures on real-world datasets show that \textsc{Primp} also most often achieves lowest costs (cf.\@ Table\@ \ref{tbl:realWorldCosts}). With experiments based on images, we visualize the derived tiles under  presence of ambiguous tiling structures and special noise distributions (cf.\@ Figs.\@ \ref{fig:alice} and \ref{fig:spaceInv}). The quality of the reconstruction by established algorithms varies considerably between both images. On the contrary, \textsc{Panpal} and \textsc{Primp} provide solid representations of the original images. The extracted factors reveal a parts-based decomposition of the data (as known from non-negative matrix factorizations), which allows for interpretation of the results. In the Space Invaders image (cf.\@ Fig.\@ \ref{fig:spaceInv}), \textsc{Panpal} partitions the space invaders into those with a non-zero blue component in their color (rank-1 factorization 2) and those with a zero blue component in their color (rank-1 factorization 1). On the other hand, \textsc{Primp} divides the space invaders by the primary colors they contain (repeating each space invader exactly twice, hence finding structure in the data too, albeit a different structure from the one found by \textsc{Panpal}). From the Alice image (cf.\@ Fig.\@ \ref{fig:alice}) particularly \textsc{Primp} manages to extract coherent factors representing the hair (rank-1 factorization 1) and the face (rank-1 factorization 4).

The implementation of other popular cost measures , e.g., the Typed XOR DtM, is possible in \textsc{PAL-Tiling} and a topic of future research. Furthermore, the application of other penalizing functions $\phi$ is possible if the corresponding $\prox$-operator can be derived. An analysis of the synergy between the penalizing function, the cost-measure and the thereby derived Boolean Matrix Factorization has the potential to show how the structure from arbitrary binary datasets can be robustly identified.

\begin{acknowledgements}
Part of the work on this paper has been supported by Deutsche Forschungsgemeinschaft (DFG) within the Collaborative Research Center SFB 876 ``Providing Information by Resource-Constrained Analysis'', projects A1 and C1
\url{http://sfb876.tu-dortmund.de}.

Furthermore, we thank Jilles Vreeken and Sanjar Karaev for their support in the execution of experiments and useful remarks.
\end{acknowledgements}


\appendix
\section[tbl]{Derivation of the Proximal Operator} \label{sec:appproxproof}
\proxphi*
\begin{proof}
Let $\alpha>0$, $X\in\R^{m\times n}$ for some $m,n\in\N$ and $A=\prox_{\alpha\phi}(X)$. The function $\phi$ is fully separable across all matrix entries. In this case, the proximal operator can be applied entry-wise to the composing scalar functions \citep{parikh2014proximal}, i.e., $A_{ji}=\prox_{\alpha\Lambda}(X_{ji})$. It remains to derive the proximal mapping of $\Lambda$ (Eq.~(\ref{eq:prox})).

The proximal operator reduces to Euclidean projection if the argument lies outside of the function's domain \citep{parikh2014proximal} and it follows that
\[\prox_{\alpha\Lambda}(x)=\theta(x) \text{ if } x\notin[0,1].\]
For $x\in[0,1]$ holds $\Lambda(x)=-|1-2x|+1$ and
\begin{align*}
  \prox_{\alpha\Lambda}(x) &= \argmin_{x^\star\in\R} \left\{\frac{1}{2}(x-x^\star)^2-\alpha|1-2x^\star| +1\alpha\right\}\\
  &= \argmin_{x^\star\in\R} \left\{\underbrace{(x-x^\star)^2-2\alpha|1-2x^\star| +(2\alpha)^2}_{=g(x^\star;x,\alpha)}\right\},
\end{align*}
where $g$ is derived by a multiplication and addition of constants, such that the minimum can easily be derived by completing the square.
\begin{align*}
  g(x^\star;x,\alpha) &=\begin{cases}
    (x-x^\star)^2  -2\alpha(1-2 x^\star) +(2\alpha)^2 & x^\star \leq 0.5\\
    (x-x^\star)^2 +2\alpha(1-2 x^\star) +(2\alpha)^2 & x^\star> 0.5
  \end{cases}\\
  &=
  \begin{cases}
    (x^\star-(x-2\alpha))^2 -2\alpha( 1-2x)& x^\star \leq 0.5\\
    (x^\star-(x+2\alpha))^2 +2\alpha( 1-2x) & x^\star> 0.5
  \end{cases}.
\end{align*}
The function $g$ is a continuous piecewise quadratic function which attains its global minimum at the minimum of one of the two quadratic functions, i.e.,
\[
  \argmin_{x^\star\in\R}g(x^\star;x,\alpha) \in \{x-2\alpha\mid x\leq 0.5+2\alpha\}\cup \{x+2\alpha\mid x> 0.5-2\alpha\}.
\] 
A function value comparison in the intersecting domain $x\in(0.5-2\alpha,0.5+2\alpha]$ yields that
\begin{align*}
g(x-2\alpha;x,\alpha)=-2\alpha(1-2x)\leq g(x+2\alpha;x,\alpha) =2\alpha(1-2x) \Leftrightarrow x\leq 0.5
\end{align*}
\qed
\end{proof}
\section[tbl]{\textsc{Krimp}'s Encoding as Matrix Factorization} \label{sec:appLCTBMF}
\lctbmf*
\begin{proof}
Let $D$ be a data matrix, $CT=\{(\mathit{X_\sigma,C_\sigma})|1\leq\sigma\leq\tau\}$ a $\tau$-element code table and $cover$ the cover function. Let $r$ be the number of non-singleton patterns in $CT$ and assume w.l.o.g. that $CT$ is indexed such that these non-singleton patterns have an index $1\leq\sigma\leq r$. We construct the pattern matrix $X\in \{0,1\}^{n\times r}$ and usage matrix $Y\in \{0,1\}^{m\times r}$ such that for $1\leq\sigma\leq r$ it holds that
\begin{align*}
X_{i\sigma}=1&\Leftrightarrow i\in \mathit{X_\sigma}\\
Y_{j\sigma}=1&\Leftrightarrow \mathit{X_\sigma}\in cover(CT,D_{j\cdot}).
\end{align*}
The Boolean product $\theta(YX^T)$ indicates the entries of $D$ which are covered by non-singleton patterns of $CT$. That implies that ones in the noise matrix $N=D-\theta(YX^T)$ are covered by singletons, it holds that 
\[N_{ji}\neq 0\Leftrightarrow {i}\in cover(CT,D_{j\cdot}).\]
The usage of a non-singleton pattern $X_\sigma$ is then computed as
\begin{align*}
usage(X_\sigma)&=|\{X_\sigma\in cover(CT,D_{j\cdot})|j\in\mathcal{T}\}|\\
&=|\{Y_{j\sigma}=1|j\in\mathcal{T}\}|\\
&=|Y_{\cdot\sigma}|,
\end{align*}
and correspondingly it follows that $usage(\{i\})=|N_{\cdot i}|$. The calculation of the probabilities $p_\sigma$ for $1\leq \sigma \leq r+n$ is directly obtained by inserting this usage calculation in the definition of code-usage-probabilities of Eq.~(\ref{eq:krimpCodeProb}). Likewise follow the functions $f_{\mathsf{CT}}^M$ and $f_{\mathsf{CT}}^D$ from the definition of the description sizes $L_{\mathsf{CT}}^M$ and $L_{\mathsf{CT}}^D$.
\qed
\end{proof}
\section[tbl]{Bounding the Description Length of Code Tables} \label{sec:appbound}
\begin{lemma}\label{thm:monoticity}
Let $(a_s)$ be a finite sequence of $r$ non-negative scalars such that $S_r=\sum_{s=1}^ra_s>0$. The function $g:[0,\infty)\rightarrow[0,\infty)$ defined by
\[g(x;a_1,\ldots,a_r,S_r)=-\sum_{s=1}^r(a_s+x)\log\left(\frac{a_s+x}{S_r+rx}\right)\]
is monotonically increasing in $x$.
\end{lemma}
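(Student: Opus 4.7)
The plan is to prove the monotonicity by directly computing $g'(x)$ and showing it is non-negative.

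First, I would rewrite the expression in a convenient form. Let $b_s(x) = a_s + x$ and $T(x) = S_r + rx = \sum_s b_s(x)$, so that
\[
g(x) = -\sum_{s=1}^r b_s(x)\log\!\left(\frac{b_s(x)}{T(x)}\right) = \sum_{s=1}^r b_s(x)\bigl[\log T(x) - \log b_s(x)\bigr].
\]
Differentiation under the finite sum is justified since, for $x$ in the interior of $[0,\infty)$, each $b_s(x) > 0$ (as $a_s\geq 0$ and $x\geq 0$, with the degenerate case handled by the convention $0\log 0 = 0$ and a continuity argument at the boundary).

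Next I would apply the product rule to each summand, which yields two contributions per $s$: one from differentiating the prefactor $b_s(x)$ and one from differentiating the logarithm. The derivative of the logarithm is
\[
\frac{\partial}{\partial x}\log\!\frac{b_s(x)}{T(x)} = \frac{1}{b_s(x)} - \frac{r}{T(x)},
\]
so the prefactor kills the $1/b_s(x)$ term and the total contribution from this piece is $\sum_s\bigl(1 - r b_s(x)/T(x)\bigr) = r - r\cdot T(x)/T(x) = 0$. The key simplification is that this cross term vanishes identically, leaving
\[
g'(x) = -\sum_{s=1}^r \log\!\frac{b_s(x)}{T(x)} = \sum_{s=1}^r \log\!\frac{T(x)}{b_s(x)}.
\]

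Finally, since $T(x) = \sum_{s'} b_{s'}(x) \geq b_s(x) > 0$ for each $s$ (all $b_{s'}(x)\geq 0$ and at least one is positive because $S_r > 0$ or $x>0$), every summand is non-negative, so $g'(x)\geq 0$ on $(0,\infty)$. Continuity of $g$ at $x=0$ (using $0\log 0 = 0$) then extends monotonicity to all of $[0,\infty)$. The only mildly delicate point is the boundary behavior when some $a_s = 0$ and $x = 0$, but this is handled by the standard entropy convention and continuity, rather than requiring a separate argument.
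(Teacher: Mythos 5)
Your proof is correct and follows essentially the same route as the paper's: both differentiate, observe that the cross terms cancel (in the paper's notation, $\sum_s (S_r - ra_s)=0$), and are left with $g'(x) = -\sum_{s}\log\bigl((a_s+x)/(S_r+rx)\bigr)\geq 0$ because each ratio is at most one. The only difference is in handling indices with $a_s=0$: the paper splits them off and argues for that subsum separately, whereas you treat all terms uniformly by differentiating on $(0,\infty)$ and invoking continuity at $x=0$ under the convention $0\log 0=0$, which is a perfectly valid (and arguably tidier) way to cover the boundary case.
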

\begin{proof}
W.l.o.g., let $a_1,\ldots,a_{r_0}>0$ and $a_{r_0+1},\ldots,a_r=0$ for some $r_0\in \N$. We rewrite the function $g$ as
\[g(x;a_1,\ldots,a_r,S_r)=g(x;a_1,\ldots,a_{r_0},S_r)+g(x;a_{r_0+1},\ldots,a_r,S_r)\]
and show that each of  the subfunctions is monotonically increasing.
The first subfunction is differentiable and its derivative is non-negative
\begin{align*}
\frac{d}{dx}g(x;a_1,\ldots,a_{r_0},S_r) &= -\sum_{s=1}^r\left(\log\left(\frac{a_s+x}{S_r+rx}\right)+(a_s+x)\frac{S_r+rx}{a_s+x}\frac{S_r+rx-r(a_s+x)}{(S_r+rx)^2}\right)\\
&= -\sum_{s=1}^r\log\left(\frac{a_s+x}{S_r+rx}\right)+\sum_{s=1}^r\frac{S_r-ra_s}{S_r+rx}\\
&= -\sum_{s=1}^r\log\left(\frac{a_s+x}{S_r+rx}\right)\geq 0.
\end{align*}
The second subfunction is monotonically increasing, since for $a_s=0$ and all $x\geq 0$ it holds that
\begin{align*}
	a_s\log\left(\frac{a_s}{S_r}\right)=0\leq -(a_s+x)\log\left(\frac{a_s+x}{S_r+rx}\right).
\end{align*}
\qed
\end{proof}
\BoundLCT*
\begin{proof}
We recall that the description size of the data is computed by
\[f_{\mathsf{CT}}^D(X,Y,D)=\underbrace{-\sum_{s=1}^r |Y_{\cdot s}| \cdot \log\left(\frac{|Y_{\cdot s}|}{|Y|+|N|}\right)}_{=f_1(X,Y,D)}
       \underbrace{-\sum_{i=1}^n |N_{\cdot i}| \cdot \log\left(\frac{|N_{\cdot i}|}{|N|+|Y|}\right)}_{=f_2(X,Y,D)}.
\]
Applying the logarithmic properties, we rewrite the first sum 
\begin{align*}
 f_1(X,Y,D)&= -\sum_{s=1}^r|Y_{\cdot s}|\log\left(\frac{|Y_{\cdot s}|}{|Y|}\frac{|Y|}{|Y|+|N|}\right)\\
 &= -\sum_{s=1}^r|Y_{\cdot s}|\log\left(\frac{|Y_{\cdot s}|}{|Y|}\right)+\sum_{s=1}^r|Y_{\cdot s}|\log\left(\frac{|Y|+|N|}{|Y|}\right)\\
 &= g(0;|Y_{\cdot 1}|,\ldots,|Y_{\cdot r}|,|Y|) +|Y|\log\left(1+\frac{|N|}{|Y|}\right). 
\end{align*}
It follows from the monotonicity of $g$ (Lemma~\ref{thm:monoticity}) and the logarithm inequality ($\log(1+x)\leq x, \forall x\geq 0$) that $f_1$ is upper bounded by
\[f_1(X,Y,D)\leq-\sum_{s=1}^r(|Y_{\cdot s}|+1)\log\left(\frac{|Y_{\cdot s}|+1}{|Y|+r}\right)+|N|.\]
The second term $f_2$ can be transformed into
\begin{align*}
    f_2(X,Y,D)&=-\sum_{i=1}^n |N_{\cdot i}| \cdot \log\left(|N_{\cdot i}|\right)+\sum_{i=1}^n |N_{\cdot i}| \cdot \log\left(|N|+|Y|\right)\\
    &= \sum_{i=1}^n|N_i|\log\frac{1}{|N_i|} +|N|\log(|N|+|Y|).
\end{align*}
Subsequently, we show $f_2(X,Y,D)\leq |N|\log(n) +|Y|$. This inequality trivially holds if $|N|=0$. Otherwise, we apply Jensen's inequality to the concave logarithm function
	\[|N|\sum_{i=1}^n\frac{|N_i|}{|N|}\log\frac{1}{|N_i|}\leq |N|\log\left(\frac{n}{|N|}\right).\]
and obtain 
\begin{align*}
    f_2(X,Y,D)&\leq|N|\log\left(\frac{n}{|N|}\right) +|N|\log(|N|+|Y|)\\ &= |N|\log(n) +|N|\log\left(1+\frac{|Y|}{|N|}\right) \\
    &\leq |N|\log(n) +|Y|,
\end{align*}
where the last equality again follows from the logarithm inequality. We derive the final inequality by
\begin{align*}
f_{\mathsf{CT}}^D(X,Y,D) &= f_1(X,Y,D)+f_2(X,Y,D)\\
&\leq (1+\log(n))|N|-\sum_{s=1}^r(|Y_{\cdot s}|+1)\log\left(\frac{|Y_{\cdot s}|+1}{|Y|+r}\right)+|Y|
\end{align*}
\qed
\end{proof}
\section[tbl]{Calculating the Lipschitz Moduli of PRIMP} \label{sec:applipschitzmoduli}
We study the partial gradients of the regularization term used in \textsc{Primp} (Sec.~\ref{sec:primp})
\begin{align*}
\nabla_X G(X,Y)&=c(0.5)_s^T\\
\nabla_Y G(X,Y)&=-\frac{1}{2}\left(\log\left(\frac{|Y_{\cdot s}|+1}{|Y|+r}\right)\right)_{js}+(0.5)_{js}.
\end{align*}
The partial gradient with respect to $X$ is constant and has a Lipschitz constant of zero. The partial gradient with respect to $Y$ can be written as the sum
\begin{align*}
\nabla_YG(X,Y)=-\frac{1}{2}(\underbrace{(\log(|Y_{\cdot s}|+1))_{js}}_{=A(Y)}-\underbrace{(\log(|Y|+r))_{js}}_{=B(Y)})+(0.5)_{js}.
\end{align*}
From the triangle inequality follows that the gradient with respect to $Y$ is Lipschitz continuous with modulus $M_{\nabla_YG}(X)=\frac{1}{2}(M_A+M_B)$, if the functions $A$ and $B$ are Lipschitz continuous with moduli $M_A$ and $M_B$:
\begin{align*}
\|\nabla_YG(X,Y)-\nabla_VG(X,V)\|&=\frac{1}{2}\|A(Y)-A(V)+B(Y)-B(V)\|\\
&\leq \frac{1}{2}\|A(Y)-A(V)\|+\|B(Y)-B(V)\|\\
&\leq \frac{M_A+M_B}{2}\|Y-V\|.
\end{align*}
The one-dimensional function $x\mapsto\log(x+\delta)$, $x\in \R_+$ is for any $\delta>0$ Lipschitz continuous with modulus $\delta^{-1}$. This can be easily derived by the mean value theorem and the bound 
\[\frac{d}{dx}\log(x+\delta)=\frac{1}{x+\delta}\leq \frac{1}{\delta}\]
for all $x\geq 0$. We show with the following equations, that $M_A=M_B=m$. For improved readability, we use the squared Lipschitz inequality, i.e.,
\begin{align}
\|A(Y)-A(V)\|^2 &=\sum_{s,j}(\log(|Y_{\cdot s}|+1)-\log(|V_{\cdot s}|+1))^2\nonumber\\
&=m\sum_{s=1}^r(\log(|Y_{\cdot s}|+1)-\log(|V_{\cdot s}|+1))^2\nonumber \\
&\leq m\sum_{s=1}^r(|Y_{\cdot s}|-|V_{\cdot s}|)^2\label{eq:lipLog1}\\
&= m\sum_{s=1}^r\left(\sum_{j=1}^m(Y_{j s}-V_{j s})\right)^2\nonumber\\
&\leq m^2\sum_{s,j}(Y_{j s}-V_{j s})^2= m^2\|Y-V\|^2,\label{eq:cauchySchw1}
\end{align}
where Eq.~(\ref{eq:lipLog1}) follows from the Lipschitz continuity of the logarithmic function as discussed above for $\delta=1$ and Eq.~(\ref{eq:cauchySchw1}) follows from the Cauchy-Schwarz inequality. Similar steps yield the Lipschitz modulus of $B$,
\begin{align}
\|B(Y)-B(V)\|^2 &=\sum_{s,j}(\log(|Y|+r)-\log(|V|+r))^2\nonumber\\
&=mr(\log(|Y|+r)-\log(|V|+r))^2\nonumber\\
&\leq \frac{mr}{r^2}(|Y|-|V|)^2\nonumber\\
&= \frac{m}{r}\left(\sum_{s,j}(Y_{j s}-V_{j s})\right)^2\nonumber\\
&\leq m^2\sum_{s,j}(Y_{j s}-V_{j s})^2.\nonumber
\end{align}
We conclude that the Lipschitz moduli of the gradients are given as
\[M_{\nabla_X G}(Y)=0 \quad M_{\nabla_YG}(X)=m.\]

\begin{thebibliography}{36}
\providecommand{\natexlab}[1]{#1}
\providecommand{\url}[1]{{#1}}
\providecommand{\urlprefix}{URL }
\expandafter\ifx\csname urlstyle\endcsname\relax
  \providecommand{\doi}[1]{DOI~\discretionary{}{}{}#1}\else
  \providecommand{\doi}{DOI~\discretionary{}{}{}\begingroup
  \urlstyle{rm}\Url}\fi
\providecommand{\eprint}[2][]{\url{#2}}

\bibitem[{Bauckhage(2015)}]{bauckhage2015k}
Bauckhage C (2015) k-means clustering is matrix factorization. arXiv preprint
  arXiv:151207548

\bibitem[{Bolte et~al(2014)Bolte, Sabach, and Teboulle}]{bolte2014proximal}
Bolte J, Sabach S, Teboulle M (2014) Proximal alternating linearized
  minimization for nonconvex and nonsmooth problems. Mathematical Programming
  146(1-2):459--494

\bibitem[{Cover and Thomas(2006)}]{CoverThomas}
Cover T, Thomas J (2006) {Elements of information theory}. Wiley-Interscience

\bibitem[{De~Bie(2011)}]{DeBie2011}
De~Bie T (2011) Maximum entropy models and subjective interestingness: an
  application to tiles in binary databases. Data Mining and Knowledge Discovery
  23(3):407--446

\bibitem[{Ding et~al(2006)Ding, Li, Peng, and Park}]{ding06}
Ding C, Li T, Peng W, Park H (2006) Orthogonal nonnegative matrix
  t-factorizations for clustering. In: Proceedings of the 12th ACM SIGKDD
  international conference on Knowledge discovery and data mining, ACM, pp
  126--135

\bibitem[{Ding et~al(2005)Ding, He, and Simon}]{ding05}
Ding CH, He X, Simon HD (2005) On the equivalence of nonnegative matrix
  factorization and spectral clustering. In: Proceedings of the SIAM
  International Conference on Data Mining (SDM), SIAM, vol~5, pp 606--610

\bibitem[{Geerts et~al(2004)Geerts, Goethals, and
  Mielik{\"a}inen}]{tilingGeertz04}
Geerts F, Goethals B, Mielik{\"a}inen T (2004) Tiling databases. In:
  International Conference on Discovery Science, Springer, pp 278--289

\bibitem[{Gr{\"u}nwald(2007)}]{mdlGrunwald}
Gr{\"u}nwald P (2007) {Minimum Description Length Principle}. MIT press,
  Cambridge, MA

\bibitem[{Hess et~al(2014)Hess, Piatkowski, and Morik}]{hess2014shrimp}
Hess S, Piatkowski N, Morik K (2014) Shrimp: Descriptive patterns in a tree.
  In: Proceedings of the 16th {LWA} Workshops: KDML, {IR} and FGWM., pp
  181--192

\bibitem[{Jarrett et~al(2009)Jarrett, Kavukcuoglu, Ranzato, and
  LeCun}]{Jarrett/etal/2009a}
Jarrett K, Kavukcuoglu K, Ranzato M, LeCun Y (2009) What is the best
  multi-stage architecture for object recognition? In: {IEEE} 12th
  International Conference on Computer Vision, {IEEE} Computer Society, pp
  2146--2153

\bibitem[{Karaev et~al(2015)Karaev, Miettinen, and Vreeken}]{nassau15}
Karaev S, Miettinen P, Vreeken J (2015) Getting to know the unknown unknowns:
  Destructive-noise resistant boolean matrix factorization. In: Proceedings of
  the SIAM International Conference on Data Mining (SDM), SIAM, pp 325--333

\bibitem[{Kontonasios and De~Bie(2010)}]{kontonasios2010information}
Kontonasios KN, De~Bie T (2010) An information-theoretic approach to finding
  informative noisy tiles in binary databases. In: Proceedings of the SIAM
  International Conference on Data Mining (SDM), SIAM, pp 153--164

\bibitem[{Kuhn(1955)}]{hungarian}
Kuhn HW (1955) The hungarian method for the assignment problem. Naval research
  logistics quarterly 2(1-2):83--97

\bibitem[{Lee and Seung(1999)}]{lee1999parts}
Lee DD, Seung HS (1999) Learning the parts of objects by non-negative matrix
  factorization. Nature 401(6755):788--791

\bibitem[{Lee and Seung(2001)}]{lee01}
Lee DD, Seung HS (2001) Algorithms for non-negative matrix factorization. In:
  Advances in neural information processing systems, pp 556--562

\bibitem[{Li(1997)}]{KolmogorovComplexity}
Li PVM (1997) An Introduction to Kolmogorov Complexity and Its Applications.
  Springer

\bibitem[{Li(2005)}]{li05}
Li T (2005) A general model for clustering binary data. In: Proceedings of the
  eleventh ACM SIGKDD international conference on Knowledge discovery in data
  mining, ACM, pp 188--197

\bibitem[{Li and Ding(2006)}]{li06}
Li T, Ding C (2006) The relationships among various nonnegative matrix
  factorization methods for clustering. In: International Conference on Data
  Mining (ICDM), IEEE, pp 362--371

\bibitem[{Lucchese et~al(2010)Lucchese, Orlando, and
  Perego}]{lucchese2010noise}
Lucchese C, Orlando S, Perego R (2010) Mining top-k patterns from binary
  datasets in presence of noise. In: Proceedings of the SIAM International
  Conference on Data Mining (SDM), SIAM, vol~10, pp 165--176

\bibitem[{Lucchese et~al(2014)Lucchese, Orlando, and
  Perego}]{lucchese2014unifying}
Lucchese C, Orlando S, Perego R (2014) A unifying framework for mining
  approximate top-k binary patterns. Transactions on Knowledge and Data
  Engineering 26(12):2900--2913

\bibitem[{Miettinen(2015)}]{miettinen2015generalized}
Miettinen P (2015) Generalized matrix factorizations as a unifying framework
  for pattern set mining: Complexity beyond blocks. In: Machine Learning and
  Knowledge Discovery in Databases, Springer International Publishing, pp
  36--52

\bibitem[{Miettinen and Vreeken(2014)}]{Miettinen2014mdl4bmf}
Miettinen P, Vreeken J (2014) Mdl4bmf: Minimum description length for boolean
  matrix factorization. ACM Transactions on Knowledge Discovery from Data
  (TKDD) 8(4):18:1--18:31

\bibitem[{Miettinen et~al(2008)Miettinen, Mielikainen, Gionis, Das, and
  Mannila}]{discreteBasisProb}
Miettinen P, Mielikainen T, Gionis A, Das G, Mannila H (2008) The discrete
  basis problem. IEEE Transactions on Knowledge and Data Engineering
  20(10):1348--1362

\bibitem[{Paatero and Tapper(1994)}]{PaateroPMF}
Paatero P, Tapper U (1994) Positive matrix factorization: A non-negative factor
  model with optimal utilization of error estimates of data values.
  Environmetrics 5(2):111--126

\bibitem[{Parikh and Boyd(2014)}]{parikh2014proximal}
Parikh N, Boyd S (2014) Proximal algorithms. Foundations and Trends in
  Optimization 1(3)

\bibitem[{Rissanen(1978)}]{RissanenMdl}
Rissanen J (1978) Modeling by shortest data description. Automatica 14:465--471

\bibitem[{Siebes and Kersten(2011)}]{siebes2011structure}
Siebes A, Kersten R (2011) A structure function for transaction data. In:
  Proceedings of the SIAM International Conference on Data Mining (SDM), SIAM,
  pp 558--569

\bibitem[{Siebes et~al(2006)Siebes, Vreeken, and van Leeuwen}]{siebes2006item}
Siebes A, Vreeken J, van Leeuwen M (2006) Item sets that compress. In:
  Proceedings of the SIAM International Conference on Data Mining (SDM), SIAM,
  vol~6, pp 393--404

\bibitem[{Smets and Vreeken(2012)}]{slim}
Smets K, Vreeken J (2012) Slim: Directly mining descriptive patterns. In:
  Proceedings of the SIAM International Conference on Data Mining (SDM), SIAM,
  pp 236--247

\bibitem[{Tatti and Vreeken(2012)}]{tatti2012comparing}
Tatti N, Vreeken J (2012) Comparing apples and oranges: measuring differences
  between exploratory data mining results. Data Mining and Knowledge Discovery
  25(2):173--207

\bibitem[{Van~Leeuwen and Siebes(2008)}]{streamkrimp}
Van~Leeuwen M, Siebes A (2008) Streamkrimp: Detecting change in data streams.
  In: Joint European Conference on Machine Learning and Knowledge Discovery in
  Databases, pp 672--687

\bibitem[{Vreeken et~al(2011)Vreeken, Van~Leeuwen, and
  Siebes}]{vreeken2011krimp}
Vreeken J, Van~Leeuwen M, Siebes A (2011) Krimp: mining itemsets that compress.
  Data Mining and Knowledge Discovery 23(1):169--214

\bibitem[{Wang and Zhang(2013)}]{wang2013nmfReview}
Wang YX, Zhang YJ (2013) Nonnegative matrix factorization: A comprehensive
  review. IEEE Transactions on Knowledge and Data Engineering 25(6):1336--1353

\bibitem[{Xiang et~al(2011)Xiang, Jin, Fuhry, and
  Dragan}]{xiang2011summarizing}
Xiang Y, Jin R, Fuhry D, Dragan FF (2011) Summarizing transactional databases
  with overlapped hyperrectangles. Data Mining and Knowledge Discovery
  23(2):215--251

\bibitem[{Zhang et~al(2007)Zhang, Ding, Li, and Zhang}]{zhangApplication}
Zhang Z, Ding C, Li T, Zhang X (2007) Binary matrix factorization with
  applications. In: Seventh IEEE International Conference on Data Mining
  (ICDM), pp 391--400

\bibitem[{Zimek and Vreeken(2013)}]{zimek2013blind}
Zimek A, Vreeken J (2013) The blind men and the elephant: On meeting the
  problem of multiple truths in data from clustering and pattern mining
  perspectives. Machine Learning 98(1-2):121--155

\end{thebibliography}
\end{document}